\documentclass[letterpaper,journal]{IEEEtran}
\usepackage{amsmath,amsfonts}
\usepackage{algorithmic}
\usepackage{algorithm}
\usepackage{array}
\usepackage[caption=false,font=normalsize,labelfont=sf,textfont=sf]{subfig}
\usepackage{textcomp}
\usepackage{url}
\usepackage{booktabs}
\usepackage{amsfonts}
\usepackage{nicefrac}
\usepackage{microtype}
\usepackage{graphicx}
\usepackage{color}
\usepackage{colortbl}
\usepackage{dsfont}
\usepackage{amsmath,amssymb}
\usepackage{multirow}
\usepackage{mathtools}
\usepackage{amsthm}
\newtheorem{theorem}{Theorem}
\newtheorem{definition}{Definition}

\newtheorem{assumption}{Assumption}
\newtheorem{lemma}{Lemma}

\usepackage{stfloats}
\usepackage{placeins}
\usepackage{verbatim}
\usepackage{cite}
\usepackage{soul}
\usepackage{xcolor}
\usepackage[colorlinks=true,linkcolor=blue,citecolor=blue,urlcolor=blue]{hyperref}
\usepackage{cleveref}
\usepackage[skins]{tcolorbox}
\usepackage{orcidlink}

\definecolor{algbg}{gray}{0.95}
\newtcolorbox[auto counter, Crefname={Algorithm}{Algorithms}, crefname={Algorithm}{Algorithms}]{styledalg}[2][]{
  enhanced,
  sharp corners,
  colback=algbg,
  colframe=black,
  boxrule=0.4pt,
  top=8pt, bottom=6pt, left=6pt, right=6pt,
  fonttitle=\bfseries\small,
  coltitle=black,
  colbacktitle=white,
  attach boxed title to top left={xshift=4pt, yshift=-3mm},
  boxed title style={sharp corners, boxrule=0.4pt, size=small},
  title={Algorithm~\thetcbcounter: #2},
  #1
}
\definecolor{inc}{RGB}{0, 150, 0}
\definecolor{dec}{RGB}{200, 0, 0}
\definecolor{demogray}{gray}{0.9}

\newcommand{\cinc}[1]{\textcolor{inc}{\scriptsize{+#1}}} 
\newcommand{\cdec}[1]{\textcolor{dec}{\scriptsize{-#1}}}

\begin{document}

\title{Generalized Category Discovery under Domain Shifts: From Vision to Vision-Language Models}

\author{Hongjun Wang~\orcidlink{0000-0001-5269-5471}, Po Hu~\orcidlink{0009-0004-4390-0620}, Kai Han~\orcidlink{0000-0002-7995-9999}
\IEEEcompsocitemizethanks{\IEEEcompsocthanksitem Hongjun Wang, Po Hu and Kai Han are with School of Computing and Data Science, The University of Hong Kong. Emails: \{hjwang, stephenhu\}@connect.hku.hk, kaihanx@hku.hk. \protect\\
\IEEEcompsocthanksitem Corresponding author: Kai Han}}

\maketitle

\begin{abstract}
Generalized Category Discovery (GCD) aims to categorize unlabelled instances from both known and unknown classes by transferring knowledge from labelled data of known classes. Existing methods assume all data comes from a single domain, yet real-world unlabelled data often exhibits domain shifts alongside semantic shifts. We study GCD under domain shifts and propose three frameworks that adapt foundation models, ranging from self-supervised vision models to vision-language models. (i) HiLo disentangles domain and semantic features through multi-level feature extraction and mutual information minimization, combined with PatchMix augmentation and curriculum sampling. (ii) HLPrompt extends HiLo with semantic-aware spatial prompt tuning to suppress background and domain noise. (iii) VLPrompt leverages vision-language models via factorized textual prompts and cross-modal consistency regularization. The three methods share core design principles while operating on different foundation backbones, making them suitable for different deployment scenarios. Extensive experiments on synthetic corruptions and real-world multi-domain shifts demonstrate consistent improvements over strong baselines. Project page: \url{https://visual-ai.github.io/hilo/}
\end{abstract}

\begin{IEEEkeywords}
Generalized category discovery, domain shift, domain adaptation, vision-language models, mutual information, prompt learning, open-world recognition
\end{IEEEkeywords}

\section{Introduction}
\label{sec:intro}

\IEEEPARstart{C}{ategory} Discovery (CD)~\cite{he2025category} aims to automatically partition unlabelled data into meaningful groups by leveraging knowledge from a set of labelled categories.
The task was initially studied as novel category discovery (NCD)~\cite{han2019learning}, where unlabelled data contains only novel categories, and was later relaxed to generalized category discovery (GCD)~\cite{vaze2022generalized}, where unlabelled data contains both seen and unseen categories, better reflecting practical open-world scenarios.

\IEEEpubidadjcol

However, most existing GCD methods~\cite{vaze2022generalized,wen2022parametric,zhang2022promptcal,pu2023dynamic,he2025category} assume that all data originates from the same domain.
This assumption is frequently violated in real-world deployments where data exhibits low-level covariate shifts arising from diverse sensors, environmental conditions, or data acquisition processes.
Consider autonomous vehicles operating across varying weather conditions, medical imaging systems with data from different acquisition devices, or web-scale visual recognition systems aggregating images from heterogeneous sources.
In these scenarios, unlabelled data not only contains novel semantic categories but also exhibits domain shifts, that is, systematic variations in low-level visual statistics such as lighting, color distribution, texture patterns, or image quality.

The confluence of semantic and domain shifts poses a fundamental challenge: the model must remain \textit{sensitive to semantic differences} while being \textit{robust to domain variations} (see~\Cref{fig:teaser}).

\begin{figure}[!t]
\centering
\includegraphics[width=\columnwidth]{\detokenize{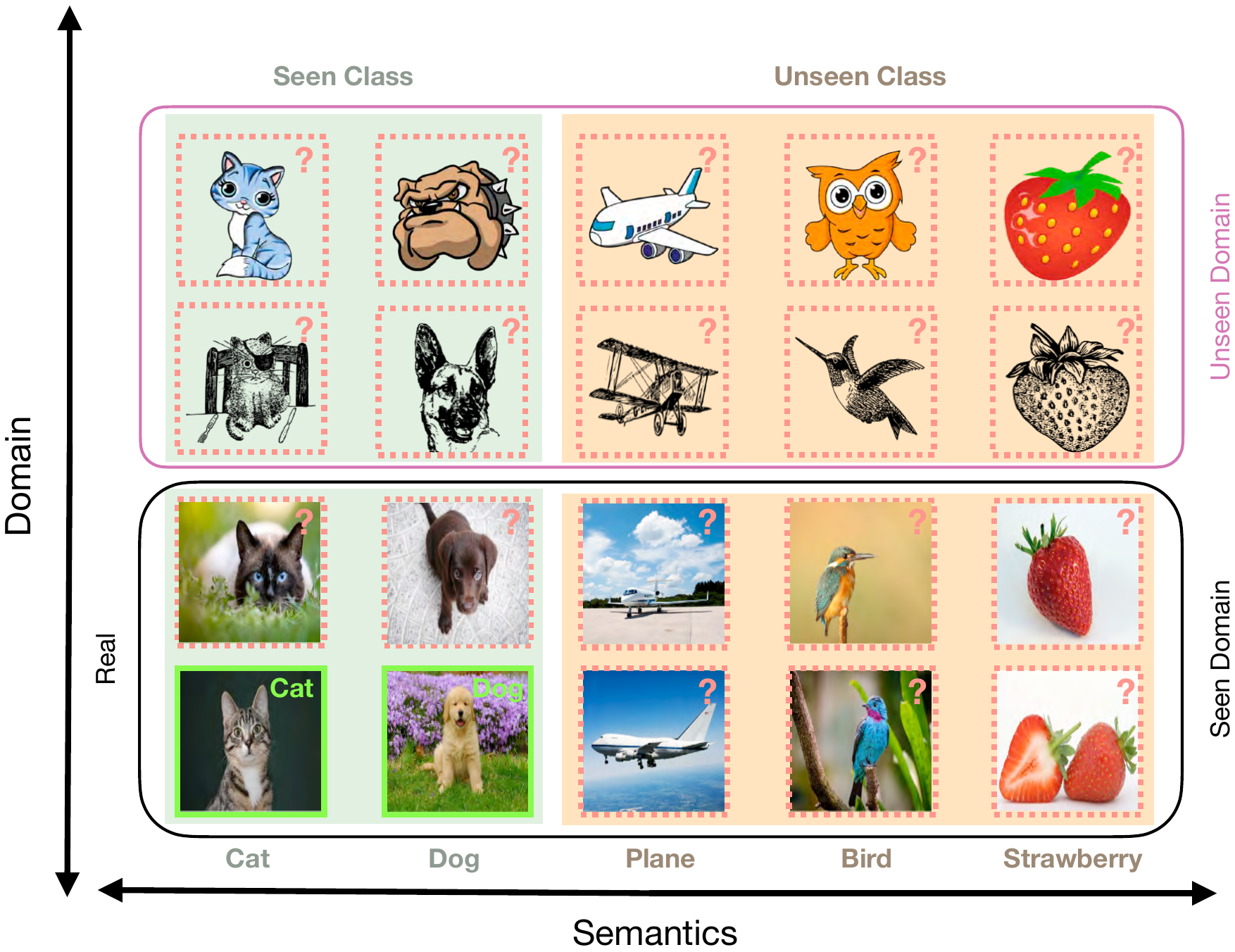}}
\caption{\small{We study Generalized Category Discovery with domain shifts, where a model must categorize unlabelled instances that may come from both seen and unseen categories, and also from seen and novel domains. In the example, the model is given labels only for the images in green boxes and must categorize all remaining unlabelled images, including those from different domains (top rows) and novel categories (rightmost columns).}}
\label{fig:teaser}
\end{figure}

One might consider directly borrowing techniques from domain adaptation or domain generalization. However, domain adaptation methods~\cite{ganin2016domain,long2017deep,zhu2023patch} focus on learning domain-invariant representations but assume fixed, known label spaces. Domain generalization methods~\cite{zhou2021domain,shi2022fish} aim to improve cross-domain robustness but require labelled data from multiple source domains and do not address category discovery.
Neither paradigm directly addresses the unique challenges of GCD with domain shifts, namely, discovering and clustering novel categories in unlabelled data from unseen domains without any labelled examples of those domains or categories.

The difficulty is exacerbated by the fact that domain and semantic information can be entangled in complex ways.
For instance, a domain classifier might exploit spurious correlations between domains and classes (e.g., if certain classes appear predominantly in specific domains), leading to features that are neither truly domain-invariant nor purely semantic.
Moreover, standard domain adaptation objectives like domain adversarial training~\cite{ganin2016domain} may inadvertently suppress semantic information that is correlated with domain characteristics, making it harder to discover fine-grained novel categories.

GCD with domain shifts presents three coupled challenges: (1) \emph{semantic discovery}, i.e., clustering novel categories in $\mathcal{D}_u$ without labels, (2) \emph{domain robustness}, i.e., generalizing from the labelled domain(s) to new domains, and (3) \emph{domain-semantic entanglement}, i.e., preventing domain cues from polluting semantic representations.
To address these challenges, we propose a suite of three frameworks grounded in foundation models, covering both self-supervised vision models and vision-language models~\cite{caron2021emerging,oquab2023dinov2,simeoni2025dinov3,radford2021learning}.

Firstly, we introduce \textbf{HiLo}, a pure-vision framework that extracts high-level semantic features and low-level domain features from different depths of a vision transformer, and minimizes the mutual information (MI) between them to explicitly disentangle domain and semantic factors. HiLo further incorporates PatchMix augmentation~\cite{zhu2023patch} in the embedding space to bridge domain gaps and curriculum sampling to stabilize optimization by gradually introducing harder new-domain samples.

Secondly, we propose \textbf{HLPrompt}, which enhances the pure-vision path from the perspective of prompt-based adaptation. It introduces semantic-aware spatial prompt tuning based on Normalized Cut (NCut)~\cite{shi2000normalized} to selectively inject learnable prompts into foreground object regions and suppress background and domain noise through an alternating optimization scheme.

Thirdly, we propose \textbf{VLPrompt}, which studies the same problem from the perspective of vision-language foundation models. It leverages textual semantic information as an additional source of structure for category discovery through factorized textual prompts and cross-modal consistency regularization, while inheriting and extending HiLo's core components to the cross-modal setting.

These three methods share core design principles while operating on different foundation backbones. HiLo provides a strong pure-vision formulation for disentangling domain and semantic factors. HLPrompt strengthens visual adaptation through prompt tuning on foreground regions. VLPrompt further incorporates cross-modal semantic priors when vision-language supervision is available.
Note that a preliminary version of HiLo was presented in~\cite{wang2024hilo}; HLPrompt and VLPrompt are newly introduced in this extended work. 

In this work, our main contributions are:
\begin{itemize}
\item We formalize GCD with domain shifts and establish benchmarks on both synthetic corruptions (SSB-C) and real-world multi-domain shifts (DomainNet-GCD).
\item We present HiLo, which disentangles domain and semantic features via multi-level feature extraction and MI minimization, enhanced with PatchMix and curriculum sampling for robust category discovery under domain shifts.
\item We develop HLPrompt, extending HiLo with semantic-aware spatial prompt tuning and alternating optimization to better suppress domain noise.
\item We develop VLPrompt, which leverages pretrained vision-language models (VLMs) for domain-robust GCD through factorized textual prompts and cross-modal consistency regularization, while inheriting and extending HiLo's core components to the cross-modal setting.
\end{itemize}
Section~\ref{sec:related} reviews related work. Section~\ref{sec:problem} illustrates the problem. Sections~\ref{sec:hilo},~\ref{sec:hlprompt}, and~\ref{sec:vlprompt} present HiLo, HLPrompt, and VLPrompt, respectively. Section~\ref{sec:experiments} provides experimental evaluation. Section~\ref{sec:conclusion} discusses broader implications and concludes the paper.

\section{Related Work}
\label{sec:related}

Our work lies at the intersection of several research areas. We review the most relevant literature below.

\noindent\textbf{Category discovery.} \ 

The problem of category discovery was firstly studied as Novel Category Discovery (NCD), which was initially formalized in~\cite{han2019learning}. NCD aims to discovery novel categories in the unlabelled data automatically by effectively transferring knowledge from labelled data of seen categories.
Subsequent works have achieved promising progress through rank statistics~\cite{han2020automatically,han2021autonovel}, contrastive learning~\cite{zhong2021neighborhood}, unified objectives~\cite{fini2021unified,jia21joint,zhong2021openmix}, knowledge distillation~\cite{gu2023classrelation}, dual ranking with mutual distillation~\cite{zhao2021novel}, discovery without forgetting~\cite{joseph2022novel}, and more. However, NCD
assumes that unlabelled data contains only novel classes, which is often violated in practice.

The task was later relaxed as GCD~\cite{vaze2022generalized}, allowing unlabelled data to contain both known and novel classes.
SimGCD~\cite{wen2022parametric}  introduced a simple parametric baseline with learnable prototypes. Recent advances have explored diverse directions including semi-supervised mixture models~\cite{zhao2023learning}, cross-instance positive relations~\cite{hao2023cipr}, hierarchical~\cite{he2025seal} and hyperbolic~\cite{liu2025hypcd} formulations, dynamic concept learning~\cite{pu2023dynamic}, debiased learning~\cite{liu2025debgcd}, information-theoretic approaches~\cite{rastegar2024learn}, and part-level correspondence priors~\cite{cendra2025partco}. ~\cite{he2025category} provides a comprehensive overview on category discovery.
Existing GCD methods are primarily studied in a single-domain setting.
To the best of our knowledge, our work is the first to systematically study GCD under domain shifts and develop principled solutions with theoretical backups.

\noindent\textbf{Foundation models and prompt-based adaptation for category discovery.} \ 
Recent category discovery methods increasingly rely on pretrained foundation models because transferable representations reduce the dependence on task-specific supervision. In the pure-vision setting, self-supervised ViTs such as the DINO family~\cite{caron2021emerging, oquab2023dinov2, simeoni2025dinov3} provide strong semantic features for category discovery. In the vision-language setting, CLIP~\cite{radford2021learning} introduces semantic priors through cross-modal alignment, motivating VLM-based GCD methods such as CLIP-GCD~\cite{ouldnoughi2023clipgcd}, CPT~\cite{yang2025cpt}, and GET~\cite{wang2025get}.
Prompt-based adaptation has become a practical way to specialize these foundation models for downstream discovery tasks. On the textual side, CoOp~\cite{zhou2022learning} and CoCoOp~\cite{zhou2022conditional} learn continuous context tokens for CLIP adaptation. On the visual side, VPT~\cite{jia2022visual} prepends learnable tokens to a frozen ViT, and SPTNet~\cite{wang2024sptnet} proposes spatial prompt tuning with alternating optimization for GCD. PromptCAL~\cite{zhang2022promptcal} and PromptCCD~\cite{cendra2024promptccd} further explore prompt-based strategies for novel and continual category discovery, respectively.
However, most existing foundation-model-based GCD methods focus on the standard single-domain setting and do not explicitly address the additional entanglement introduced by \emph{unseen domain shifts} in the unlabelled pool.
Very recently, FREE~\cite{feng2025free} studies GCD under domain shift from a frequency-domain perspective.
In contrast, our work develops a unified suite of frameworks that explicitly disentangle domain and semantics, adapting both DINO-pretrained vision backbones and the CLIP vision-language model to this challenge.

\noindent\textbf{Domain adaptation.} \ 
Unsupervised Domain Adaptation (UDA) transfers knowledge from a labelled source domain to an unlabelled target domain under a \emph{fixed} label space. Representative approaches include moment matching (e.g., MMD/CORAL)~\cite{tzeng2014deep,long2015learning,long2017deep,peng2017open,sun2016deep}, adversarial domain-invariant learning~\cite{ganin2016domain,tzeng2017adversarial,tang2020discriminative,long2018conditional,lee2019sliced}, and self-training with target pseudo-labels~\cite{zou2018unsupervised,li2020model,na2021fixbi}, with recent variants exploiting distributional distances or intermediate-domain augmentation such as PatchMix~\cite{chen2022reusing,du2021cross,zhu2023patch,jin2020minimum}. {Domain Generalization (DG)} targets robustness to unseen domains without target data, typically via meta-learning, invariant representation learning, data augmentation, ensembling, or causal formulations~\cite{li2018learning,muandet2013domain,zhou2021domain,cha2021swad,mahajan2021domain,tobin2017domain}. Both UDA and DG assume \emph{known, fixed label spaces}, whereas our setting must \emph{discover novel categories} in unlabelled data that also comes from unseen/new domains, without any labelled samples from those domains or identities of novel classes.

\section{Problem Formulation and Analysis}
\label{sec:problem}

We now formally define the problem of Generalized Category Discovery with Domain Shifts and analyze its complexity. Concretely, we consider a setting in which a model is given access to labelled data from a source domain. It is further given access to a pool of unlabelled data, where images may come from either the \textit{source domain or new domains}, and whose categories may come from the \textit{labelled classes or from new ones} (see~\Cref{fig:teaser}).

\subsection{Problem Definition}

Let $\mathcal{X}$ denote the input space (e.g., images), $\mathcal{Y}$ the label space, and $\Omega$ the domain space.
A domain $\omega \in \Omega$ is characterized by a conditional distribution $P(\mathbf{x}|y, \omega)$ over inputs given labels, where two domains differ if these conditionals differ for some class $y$.
We are given a labelled dataset $\mathcal{D}_{\ell} = \{(\mathbf{x}_i, y_i)\}_{i=1}^{N_\ell}$ from a fixed labelled domain $\omega^a \in \Omega^{a}$ with labels $y_i \in \mathcal{Y}_{\text{base}} \subset \mathcal{Y}$, together with an unlabelled dataset $\mathcal{D}_{u} = \{\mathbf{x}_i\}_{i=1}^{N_u}$ where images come from both the labelled domain set $\Omega^{a}$ and a disjoint set of new domains $\Omega^{b}$ ($\Omega^{a} \cap \Omega^{b} = \emptyset$), with labels from both $\mathcal{Y}_{\text{base}}$ and novel categories $\mathcal{Y}_{\text{novel}}$ ($\mathcal{Y}_{\text{base}} \cap \mathcal{Y}_{\text{novel}} = \emptyset$).
The goal is to learn a classifier $f: \mathcal{X} \rightarrow \mathcal{Y}_{\text{base}} \cup \mathcal{Y}_{\text{novel}}$ that minimizes the expected classification error on unlabelled data:
\begin{equation}
\min_f \mathbb{E}_{(\mathbf{x}, y, \omega) \sim P_{u}}[\mathds{1}(f(\mathbf{x}) \neq y)]
\end{equation}
where $P_u$ denotes the joint distribution over unlabelled samples, their true labels, and their domains. Formal definitions and assumptions are provided in Appendix~\ref{app:formal_defs}.

\subsection{Key Challenges}

The problem presents three intertwined challenges that must be addressed simultaneously.
First, regarding \emph{semantic discovery}, the model must identify and cluster novel categories from $\mathcal{Y}_{\text{novel}}$ without any labelled examples, relying on transferable knowledge from base categories $\mathcal{Y}_{\text{base}}$.
Second, concerning \emph{domain robustness}, the model must generalize from the labelled domain $\Omega^{a}$ to new domains $\Omega^{b}$ without labelled data from these new domains, a scenario where standard domain adaptation techniques are not directly applicable.
Third, for \emph{feature disentanglement}, the model must learn representations that are invariant to domain-specific variations while preserving discriminative power for semantic differences, which is particularly challenging when both domain and semantic factors vary simultaneously in the unlabelled data.

We make standard assumptions for theoretical analysis---domain-class independence (covariate shift), semantic transferability, and sufficient data diversity---and provide the formal statements and proofs in Appendix~\ref{app:formal_defs} and~\ref{app:proof_complexity}.

\begin{figure*}[!t]
\centering
\includegraphics[width=0.95\textwidth]{\detokenize{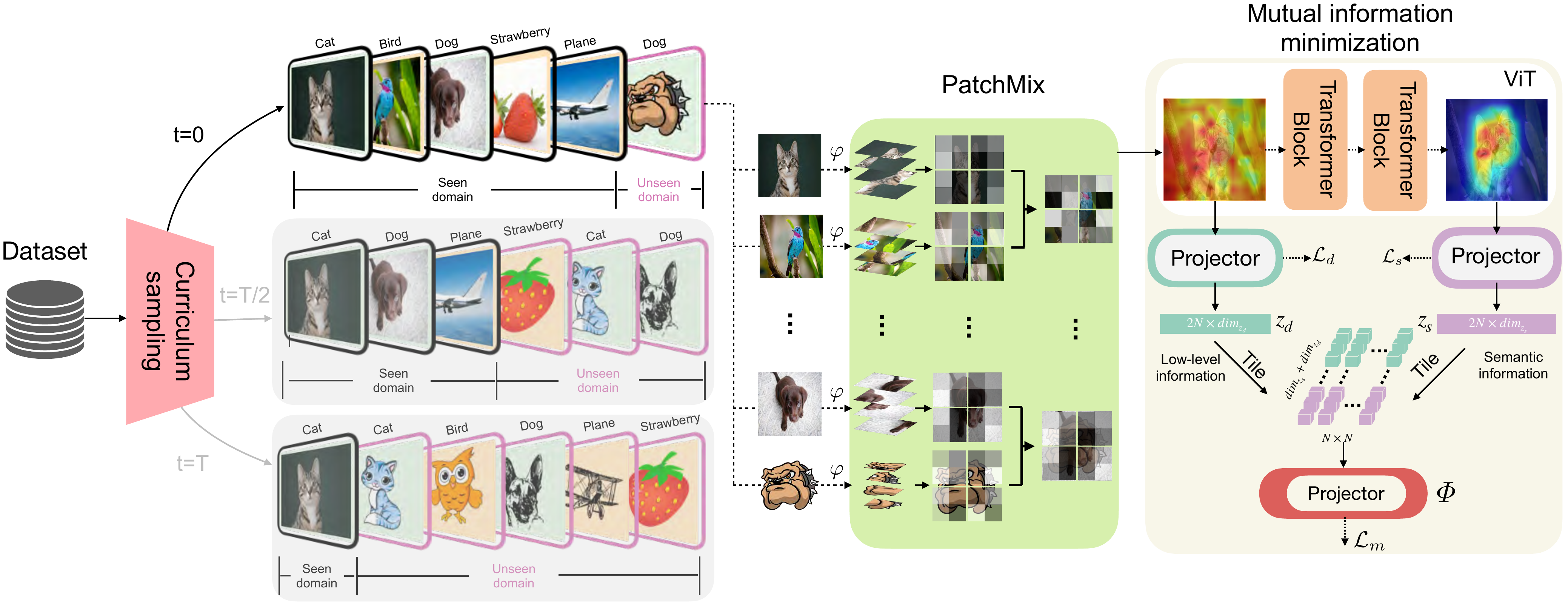}}
\caption{\small{
Overview of HiLo framwork
(Section~\ref{sec:hilo}).
Samples are drawn through our proposed curriculum sampling approach, considering the difficulty of each sample. Labelled and unlabelled samples are paired and augmented through PatchMix which we subtly adapt in the embedding space for contrastive learning for GCD. The mixed-up embeddings are then processed by our network with a high-level (for semantic) and low-level (for domain) feature design, allowing for the domain-semantic disentangled feature learning via mutual information minimization. The full HLPrompt and VLPrompt pipelines are illustrated in~\Cref{fig:pipeline}.}}
\label{fig:hilo_pipeline}
\end{figure*}

\noindent\textbf{Method roadmap.}
We propose a suite of three frameworks for GCD with domain shifts, all grounded in foundation models, sharing core design principles.
We first introduce \textbf{HiLo} (Section~\ref{sec:hilo}), a pure-vision framework built on a vision foundation model that establishes the shared core principles of domain--semantic feature disentanglement, PatchMix augmentation, and curriculum sampling. In our implementation, HiLo uses a DINO-pretrained ViT~\cite{caron2021emerging}.
We then introduce \textbf{HLPrompt} (Section~\ref{sec:hlprompt}), which studies how prompt-based adaptation of vision foundation models can complement HiLo through NCut~\cite{shi2000normalized}-guided semantic-aware spatial prompting. In our implementation, HLPrompt uses a DINOv3-pretrained ViT~\cite{simeoni2025dinov3}.
Finally, we present \textbf{VLPrompt} (Section~\ref{sec:vlprompt}), which studies how textual semantics in vision-language foundation models can further support category discovery through factorized textual prompts and cross-modal consistency regularization. In our implementation, VLPrompt uses CLIP~\cite{radford2021learning}.
An overview of the full pipeline is shown in~\Cref{fig:pipeline}.

\section{HiLo: Feature Disentanglement for GCD with Domain Shifts}
\label{sec:hilo}

We begin with HiLo, a pure-vision framework built on a pretrained vision foundation model. The motivation is that self-supervised vision transformers provide transferable representations whose depth-wise hierarchy naturally separates low-level domain cues from high-level semantic abstractions. In our implementation, HiLo uses a DINO-pretrained Vision Transformer (ViT)~\cite{caron2021emerging}. HiLo extracts domain and semantic features from different depths of the pretrained ViT and minimizes their mutual information to achieve explicit disentanglement, while further incorporating PatchMix augmentation to create intermediate-domain representations that bridge the gap between labelled and new domains, and curriculum sampling to gradually introduce harder new-domain samples during training. The components introduced here, namely MI minimization, PatchMix, and curriculum sampling, will be reused and extended in both HLPrompt (Section~\ref{sec:hlprompt}) and VLPrompt (Section~\ref{sec:vlprompt}). An overview of the HiLo pipeline is provided in~\Cref{fig:hilo_pipeline}.

\subsection{GCD Backbone}

A representative end-to-end baseline for GCD integrates two primary losses for representation learning and parametric classification. The first is a contrastive loss $\mathcal{L}^{rep}$ based on InfoNCE~\cite{oord2018representation} for the representation learning of the feature backbone. The second is a cross-entropy loss $\mathcal{L}^{cls}$ for training a cosine classification head~\cite{gidaris2018dynamic}, utilizing different image views as pseudo-labels for one another. Following GCD~\cite{vaze2022generalized}, the backbone is a DINO-pretrained~\cite{caron2021emerging} ViT containing $m$ Transformer layers; the self-supervised pretraining provides strong transferable features that generalize to novel categories without task-specific supervision. Let $\mathcal{T}$ be the feature extractor consisting of these $m$ layers and $\psi$ be a projection head. For an input image $\mathbf{x}$, a $\ell_2$-normalised feature can be obtained by $\mathbf{h} = \mathcal{\psi}(\mathcal{T}(\phi(\mathbf{x})))$, where $\phi$ is a standard embedding layer before the multi-head attention layers in the ViT model. The representation loss is
\begin{equation}
\label{eq:rep_loss}
\begin{aligned}
\mathcal{L}^{rep}(\mathbf{x})
&= -\frac{1}{|\mathcal{P}(\mathbf{x})|}
\sum_{\mathbf{h}^+ \in \mathcal{P}(\mathbf{x})}
\log \sigma(\mathbf{h} \cdot \mathbf{h}^+; \tau),
\end{aligned}
\end{equation}
where $\sigma(\cdot; \tau)$ is the softmax operation with a temperature $\tau$ for scaling and $\mathcal{P}(\mathbf{x})$ denotes the positive feature set for each $\mathbf{x}$. Suppose we sample a batch $\mathcal{B}$, which contains labelled images and unlabelled images, denoted as $\mathcal{B}^l$ and $\mathcal{B}^u$, respectively. For each $\mathbf{x} \in \mathcal{B}$ (either a labelled or unlabelled image), $\mathcal{P}(\mathbf{x})$ contains only the feature of a different view of the same image. For each $\mathbf{x} \in \mathcal{B}^l$, an additional $\mathcal{P}(\mathbf{x})$ including features of other images from the same class and the feature of a different view of the same image is also used for supervised contrastive learning. Likewise, the classification loss can be written as
\begin{equation}
\label{eq:cls_loss}
\begin{aligned}
\mathcal{L}^{cls}(\mathbf{x})
&= - \sum_{\mathbf{w} \in \mathbf{W}} \mathbf{q}\,
\log \sigma(\hat{\mathbf{h}} \cdot \mathbf{w}; \tau),
\end{aligned}
\end{equation}
where $\mathbf{W}$ is a set of prototypes and each vector $\mathbf{w}$ in $\mathbf{W}$ represents a $\ell_2$-normalised learnable class prototype. $\hat{\mathbf{h}}$ is the $\ell_2$-normalised vector of $\mathcal{T}(\phi(\mathbf{x}))$. For each $\mathbf{x} \in \mathcal{B}$, $\mathbf{q}$ is the pseudo-label from a sharpened prediction of a different view of the same image. For each $\mathbf{x} \in \mathcal{B}^l$, an additional $\mathbf{q}$ as the one-hot ground-truth vector is also used for supervised learning. Let $\mathcal{L}^{r,c}$ be the summation of $\mathcal{L}^{rep}$ and $\mathcal{L}^{cls}$ for simplification, the overall loss can then be written as:
\begin{equation}
\label{simgcd_loss}
\begin{aligned}
\mathcal{L}_{\text{SimGCD}}
&= \lambda \sum_{\mathbf{x} \in \mathcal{B}} \mathcal{L}^{r,c}(\mathbf{x})
 + (1 - \lambda) \sum_{\mathbf{x} \in \mathcal{B}^l} \mathcal{L}^{r,c}(\mathbf{x})
 + \epsilon \Delta,
\end{aligned}
\end{equation}
where $\mathcal{B}^l$ denotes the subset of labelled samples in the current mini-batch, and $\Delta$ is an entropy maximization term to prevent pseudo-label collapse~\cite{assran2022masked}. Finally, $\lambda$ and $\epsilon$ are hyperparameters.

\subsection{Domain and Semantic Features}

Let $\mathcal{T}_\ell$ denote the operation of the first $\ell$ transformer blocks (after the embedding layer $\phi$).
For an input image $\mathbf{x}$, we extract:
\begin{equation}
\begin{aligned}
\mathbf{r}_{\text{dom}} = \mathcal{T}_1(\phi(\mathbf{x})), \quad
\mathbf{r}_{\text{sem}} = \mathcal{T}_L(\phi(\mathbf{x}))
\end{aligned},
\end{equation}
where $\mathbf{r}_{\text{dom}} \in \mathbb{R}^{d_1}$ captures low-level, domain-specific features from the first block, and $\mathbf{r}_{\text{sem}} \in \mathbb{R}^{d_L}$ captures high-level semantic features from the last block.

These representations are then projected through separate MLPs and $\ell_2$-normalized:
\begin{equation}
\begin{aligned}
\mathbf{h}_{\text{dom}} = \psi_{\text{dom}}(\mathbf{r}_{\text{dom}}),\quad
\mathbf{h}_{\text{sem}} = \psi_{\text{sem}}(\mathbf{r}_{\text{sem}})
\end{aligned},
\end{equation}
where $\psi_{\text{dom}}, \psi_{\text{sem}}: \mathbb{R}^{d} \rightarrow \mathbb{R}^{d'}$ are projection heads (3-layer MLPs).

This depth-wise hierarchy is well motivated by information-theoretic principles. Since the representation pipeline forms a Markov chain across layers, the data processing inequality implies that information about raw input statistics (including domain-specific patterns) is progressively discarded with depth. Meanwhile, end-to-end semantic supervision drives later layers to retain label-relevant abstractions. This naturally induces a hierarchy where early layers preserve more domain cues and later layers encode more semantics. A formal statement and proof are provided in Appendix~\ref{app:proof_hierarchy}.

\subsection{Mutual Information Minimization}

The core of HiLo is encouraging independence between domain and semantic features by minimizing their mutual information. We aim to minimize:
\begin{equation}
\begin{aligned}
I(\mathbf{h}_{\text{dom}}; \mathbf{h}_{\text{sem}})
&= H(\mathbf{h}_{\text{dom}}) - H(\mathbf{h}_{\text{dom}} \mid \mathbf{h}_{\text{sem}}),
\end{aligned}
\end{equation}
where $H(\cdot)$ denotes entropy.

Direct computation is intractable for high-dimensional continuous variables. We employ the Jensen-Shannon (JS) divergence-based estimator from Deep InfoMax~\cite{hjelm2018learning}:

\begin{equation}
\label{eq:mi_estimator}
\begin{aligned}
\hat{I}(\mathbf{h}_{\text{dom}}; \mathbf{h}_{\text{sem}})
= &\max_D \Big(
\mathbb{E}_{(\mathbf{h}_d, \mathbf{h}_s) \sim P_{\text{joint}}}
\big[-\log(1 + e^{-D(\mathbf{h}_d, \mathbf{h}_s)})\big] \\
&+ \mathbb{E}_{\mathbf{h}_d \sim P_d, \mathbf{h}_s \sim P_s}
\big[-\log(1 + e^{D(\mathbf{h}_d, \mathbf{h}_s)})\big]
\Big),
\end{aligned}
\end{equation}
where $D: \mathbb{R}^{d'} \times \mathbb{R}^{d'} \rightarrow \mathbb{R}$ is a discriminator network (MLP with output dimension 1), and marginal distributions are obtained by shuffling.

The MI minimization loss is:
\begin{equation}
\label{eq:mi_loss}
\mathcal{L}_{\text{MI}} = \hat{I}(\mathbf{h}_{\text{dom}}; \mathbf{h}_{\text{sem}})
\end{equation}

Minimizing MI between domain and semantic features leads to provably tighter generalization bounds under domain shift. Intuitively, when the semantic feature distribution becomes independent of the domain variable, the domain shift penalty in the generalization bound vanishes. We formalize this in Theorem~\ref{thm:mi_bound} (Appendix~\ref{app:proof_mi_bound}), which shows that the risk on any new domain $\omega^b$ is bounded by the empirical risk plus a standard complexity term and a domain shift term $\lambda(\delta) = O(\sqrt{\delta})$, where $\delta$ controls the KL divergence between domain-conditional feature distributions and their mixture. Lower MI directly reduces $\delta$, thus tightening the bound.

\subsection{PatchMix Augmentation for GCD}

For images $\mathbf{x}, \mathbf{x}'$ (typically a labelled-domain sample and an unlabelled sample from a potentially new domain), we mix their patch embeddings:
\begin{equation}
\label{eq:patchmix_op}
\tilde{\phi}(\mathbf{x})_j = \beta_j \odot \phi(\mathbf{x})_j + (1-\beta_j) \odot \phi(\mathbf{x}')_j
\end{equation}
where $\phi(\mathbf{x})_j \in \mathbb{R}^d$ is the $j$-th patch embedding (for $j = 1, \ldots, P$), $\beta_j \sim \text{Beta}$ is independently sampled for each patch, and $\odot$ denotes element-wise multiplication.

The mixed embeddings $\tilde\phi(\mathbf{x})=\{\tilde{\phi}(\mathbf{x})_j\}_{j=1}^P$ are then processed through transformer blocks to obtain semantic features $\mathbf{\tilde h}=\psi_{\text{sem}}(\mathcal{T}(\tilde\phi(\mathbf{x})))$ for contrastive learning.

For semantic features from mixed embeddings, we use a modified contrastive loss Eq.~\ref{eq:rep_loss}:
\begin{equation}
\label{eq:patchmix_rep_loss}
\begin{aligned}
\mathcal{L}^{rep}_{\text{PM}}(\mathbf{x})
&= -\frac{1}{|\mathcal{P}(\mathbf{x})|}
\sum_{\tilde{\mathbf{h}}^+ \in \mathcal{P}(\mathbf{x})}
\alpha \cdot \log \sigma(\tilde{\mathbf{h}}\cdot\tilde{\mathbf{h}}^+; \tau),
\end{aligned}
\end{equation}
where $\mathbf{h}_{\text{sem}}^+$ denotes positive pairs (same class), $\sigma(\cdot;\tau)$ is softmax operation with temperature $\tau$, and $\alpha = \frac{\mathbf{\beta}\cdot s}{\mathbf{\beta}\cdot s+(1-\mathbf{\beta})\cdot s'}$ with $\mathbf{\beta}=\sum_{j=1}^P\beta_j$, and $s$ and $s'$ are vectors storing the attention scores for all the patches for $\mathbf{x}$ and $\mathbf{x}'$ respectively~\cite{chen2022transmix,zhu2023patch}. The $\alpha$ weighting reflects the proportion of semantic content from the anchor image in the mixed representation.

We also use a modified classification loss Eq.~\ref{eq:cls_loss}:
\begin{equation}
\label{eq:patchmix_cls_loss}
\begin{aligned}
\mathcal{L}^{cls}_{PM}(\mathbf{x})
&= - \sum_{\mathbf{w} \in \mathbf{W'}} \tilde{\mathbf{q}}\,
\log \sigma(\tilde{\mathbf{h}} \cdot \mathbf{w}; \tau),
\end{aligned}
\end{equation}
where $\mathbf{\tilde q}=\alpha\cdot\mathbf{q}+\frac{1-\alpha}{|\mathcal{Y}_{novel}|}\cdot\mathbf{1}$~\cite{szegedy2016rethinking} with $\mathbf{q}$ being the one-hot ground-truth label of $\mathbf{x}$ for labelled $\mathbf{x}$ and the pseudo-label from the prediction of another mix-up view for unlabelled $\mathbf{x}$. We also need to learn another set of prototypes $\mathbf{W}'$.

The resulting PatchMix objective is:
\begin{equation}
\label{eq:patchmix_loss}
\mathcal{L}_{PM}=\mathcal{L}^{rep}_{PM}+\mathcal{L}^{cls}_{PM}
\end{equation}

PatchMix training creates intermediate representations that bridge the domain gap between the labelled-domain distribution and new-domain distributions. When mixing samples $\mathbf{x}$ and $\mathbf{x}'$ from different domains but the same semantic class, the mixed representation $\tilde{\mathbf{h}}$ forms a convex combination in feature space. Training on these mixed samples encourages the classifier to maintain smooth decision boundaries along interpolation paths between domains, which empirically improves cross-domain generalization. This regularization effect is particularly beneficial when the mixing coefficient $\beta$ is sampled from a Beta distribution, creating diverse intermediate representations that cover the space between the labelled domain and new domains.

\subsection{Curriculum Learning}

Curriculum sampling~\cite{bengio2015scheduled} is an effective technique that can enhance generalization by gradually increasing the difficulty of the training data, which is also a natural fit for GCD with domain shifts. We first emphasize labelled-domain-like samples to learn stable semantic representations from labelled supervision, and then introduce more new-domain-like samples as an additional challenge in later training stages. Practically, we design a sampling weight $p_{cs}(\mathbf{x}|t)$ for each sample $\mathbf{x}$ at training epoch $t$ based on separating labelled-domain-like and new-domain-like samples in the domain-feature space. Since domain labels are generally unavailable for unlabelled samples, we run semi-supervised $k$-means on domain features extracted using the DINO-pretrained backbone. Let the resulting clusters be $\hat{\mathcal{D}}_a$ and $\hat{\mathcal{D}}_b$, which correspond to domains $\Omega^{a}$ and $\Omega^{b}$ respectively, and $\mathcal{D}_u = \hat{\mathcal{D}}_a \cup \hat{\mathcal{D}}_b$. We then define the sampling probability weight $p_{cs}(\mathbf{x}|t)$ as follows:

\begin{equation}
    p_{cs}(\mathbf{x}|t) = \begin{cases} 
      \hfill 1, & \mathbf{x} \in \mathcal{D}_l \\
      \hfill \frac{|\mathcal{D}_l|}{|\hat{\mathcal{D}}_a|}, & \mathbf{x} \in \hat{\mathcal{D}}_a, \\
      r_0 + (r' - r_0)\mathbf{1}(t > t'), & \mathbf{x} \in \hat{\mathcal{D}}_b
   \end{cases}
   \label{eq:sampling_prob}
\end{equation}
\noindent where $\mathbf{1}(\cdot)$ is an indicator function, $t'$ is a constant epoch number after which we increase the portion of samples from new domains, $r_0$ and $r'$ are constant probabilities for new-domain samples in the earlier ($< t'$) and later ($> t'$) stages, and $t$ indicates the current training time step. In our formulation, (1) if $\mathbf{x}$ is a labelled sample, its $p_{cs}(\mathbf{x}|t)$ is set to 1, without any discount. (2) If $\mathbf{x}$ is an unlabelled sample in $\hat{\mathcal{D}}_a$ (predicted as from the labelled domain), $p_{cs}(\mathbf{x}|t)$ is set to $\frac{|\mathcal{D}_l|}{|\hat{\mathcal{D}}_a|}$, proportional to the ratio of labelled and unlabelled samples from the labelled domain, following the sampling strategy in conventional GCD~\cite{vaze2022generalized}. (3) If $\mathbf{x}$ is an unlabelled sample in $\hat{\mathcal{D}}_b$ (predicted as from a new domain), its $p_{cs}(\mathbf{x}|t)$ increases along with training after epoch $t'$.
\par\noindent

Under standard stochastic approximation conditions, curriculum-weighted SGD converges to a stationary point, and the warmup phase improves optimization stability by first learning robust semantic features before introducing harder new-domain samples. A theoretical convergence analysis is provided in Appendix~\ref{app:proof_curriculum} (Theorem~\ref{thm:curriculum}).

\subsection{Overall Training Objective and Algorithm}
The complete HiLo objective combines domain and semantic clustering, MI minimization, PatchMix, and curriculum sampling:
\begin{equation}
\label{eq:hilo_full}
\begin{aligned}
\mathcal{L}_{\text{HiLo}}
&= \mathcal{L}_{\text{s}} + \epsilon_{s} \Delta_{s}
 + \mathcal{L}_{\text{d}} + \epsilon_{d} \Delta_{d}
 + \mathcal{L}_{\text{MI}} + \mathcal{L}_{\text{PM}}
\end{aligned}
\end{equation}

The clustering terms $\mathcal{L}_{\text{d}}$ and $\mathcal{L}_{\text{s}}$ combine supervised contrastive loss on labelled data with self-supervised contrastive loss on all data using low-level domain features $\mathbf{h}_{\text{dom}}$ and high-level semantic features $\mathbf{h}_{\text{sem}}$ respectively, along with a parametric classification loss using learnable prototypes. For the domain classification loss, since domain labels are generally unavailable for unlabelled samples, we run a semi-supervised $k$-means on domain features extracted from the backbone to obtain pseudo-labels. $\Delta$ is the mean entropy maximization term to prevent pseudo-label collapse~\cite{assran2022masked} for both domain and semantic aspects, where $\epsilon_{s}$ and $\epsilon_{d}$ are balancing factors.
The mutual information term $\mathcal{L}_{\text{MI}}$ corresponds to the MI minimization loss defined in Eq.~\ref{eq:mi_loss}.
The PatchMix term $\mathcal{L}_{\text{PM}}$ applies the modified loss in Eq.~\ref{eq:patchmix_loss} on mixed embeddings. Samples are drawn according to curriculum weights $p_{cs}(\mathbf{x}|t)$.

HiLo establishes the core framework for GCD with domain shifts and serves as the foundation for the two extensions introduced here. The next two sections extend HiLo in two directions: HLPrompt adds input-level adaptation via spatial prompt tuning to suppress background domain noise (Section~\ref{sec:hlprompt}), and VLPrompt extends the framework to vision-language models for richer cross-modal semantic structure (Section~\ref{sec:vlprompt}).

\section{HLPrompt: HiLo with Semantic-Aware Spatial Prompt Tuning}
\label{sec:hlprompt}

HiLo operates entirely in feature space, disentangling domain and semantic representations after the image has already been encoded. However, domain-specific noise---particularly from backgrounds carrying strong domain cues such as sky textures, lighting conditions, or artistic styles---enters the representation \emph{before} feature-level mechanisms can act on it.

\textbf{Key idea.}
HLPrompt complements HiLo's feature-level disentanglement with \emph{input-level} adaptation that focuses the model on foreground object regions while suppressing background domain noise.
The core challenge is how to identify foreground regions reliably across different domains without any segmentation labels.
We address this by leveraging Normalized Cut (NCut)~\cite{shi2000normalized} on patch affinity graphs from early ViT layers, exploiting the fact that self-supervised DINO attention naturally concentrates on foreground objects~\cite{caron2021emerging}.

\textbf{Approach.} Concretely, HLPrompt introduces two mechanisms on top of HiLo: (i)~\emph{semantic-aware spatial prompt tuning} that injects learnable prompts selectively into foreground patches identified by NCut, and (ii)~a \emph{two-stage alternating optimization} that decouples prompt learning from model parameter updates for stable convergence.
All of HiLo's core components---MI minimization, PatchMix, and curriculum sampling---are retained; the training objective remains $\mathcal{L}_{\text{HiLo}}$ (Eq.~\ref{eq:hilo_full}).
HLPrompt uses a DINOv3-pretrained ViT~\cite{simeoni2025dinov3}, whose self-supervised attention yields reliable NCut masks even under domain shifts.
See~\Cref{fig:pipeline} for an overview.

\begin{figure*}[!t]
\centering
\includegraphics[width=\textwidth]{\detokenize{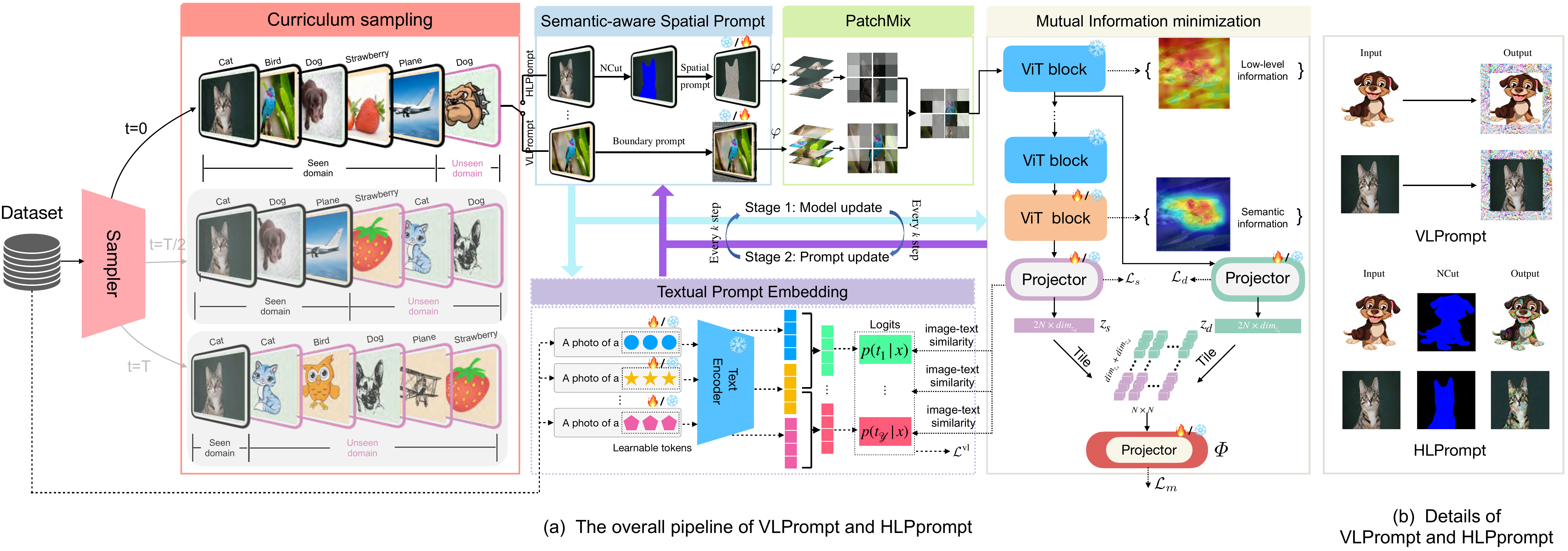}}
\caption{\small{\textbf{Overview of HLPrompt and VLPrompt.} We summarize both the pure-vision path (HLPrompt; building on HiLo) and the vision-language path (VLPrompt) in this figure. The core HiLo components shared by both paths are detailed in~\Cref{fig:hilo_pipeline}.
\textbf{(1) HLPrompt.} A ViT backbone extracts low-level \emph{domain} features from early layers and high-level \emph{semantic} features from late layers. HLPrompt enforces domain--semantic disentanglement via mutual information minimization, improves cross-domain interpolation via PatchMix, and adopts curriculum sampling $p_{cs}(\mathbf{x}\mid t)$ to emphasize labelled-domain-like samples early and new-domain-like samples later. We further integrate NCut~\cite{shi2000normalized}-based semantic-aware spatial prompt tuning to inject visual prompts only into foreground object regions, and optimize prompts and model parameters using a two-stage alternating scheme.
\textbf{(2) VLPrompt.} Starting from a pretrained vision-language model (VLM; e.g., CLIP), VLPrompt learns factorized textual prompts (shared context + category embeddings) and aligns vision/text through a symmetric vision-language objective (Eq.~\ref{eq:vl_loss}) together with semi-supervised classification, MI minimization, and cross-modal PatchMix. VLPrompt uses boundary-based spatial prompt tuning and a two-stage alternating optimization between the visual prompt $\mathbf{Q}_s$ and model/text prompt parameters.
\textbf{Dashed components indicate the optional text branch:} when adopting the pure-vision mechanism, the pipeline ignores the text encoder/prompt pathway; when adopting the vision-language mechanism, the dashed pathway is activated to leverage cross-modal alignment.}}
\label{fig:pipeline}
\end{figure*}

\subsection{Semantic-Aware Spatial Prompt Tuning with NCut}
\label{subsec:ncut_spt}

\noindent
Standard visual prompting methods~\cite{jia2022visual,bahng2022exploring} apply prompts uniformly to all image regions, wasting capacity on backgrounds that carry domain-specific noise. Building upon our prior work on spatial prompt tuning~\cite{wang2024sptnet}, HLPrompt extends this idea by leveraging NCut~\cite{shi2000normalized} to identify foreground regions and applying prompts \emph{only} there.
Given an input image $\mathbf{x}$, we construct a patch affinity graph from intermediate ViT features and apply NCut to obtain a foreground mask $\mathbf{M}\in\{0,1\}^P$
(full details in Appendix~\ref{app:ncut_details}).

\subsubsection{Selective Spatial Prompt Application}

With the foreground mask $\mathbf{M}$, we define spatial prompts $\mathbf{Q}_s = \{\mathbf{q}_1, \mathbf{q}_2, \ldots, \mathbf{q}_P\}$ that are applied selectively:
\begin{equation}
\tilde{\mathbf{x}}_j = \mathbf{x}_j + M_j \cdot \mathbf{q}_j
\end{equation}
where $\mathbf{x}_j$ denotes the $j$-th patch embedding, $M_j \in \{0, 1\}$ is the foreground indicator, and $\mathbf{q}_j$ is the learnable spatial prompt for patch $j$.

This selective application concentrates prompt capacity on semantically relevant regions and avoids background regions that carry domain-specific cues, improving both parameter efficiency and domain robustness.
In practice, we use a single shared foreground prompt $\mathbf{q}_{\text{fg}} \in \mathbb{R}^d$ modulated by the mask: $\mathbf{q}_j = M_j \cdot \mathbf{q}_{\text{fg}}$.

\subsubsection{Two-Stage Alternating Optimization}

Following SPTNet~\cite{wang2024sptnet}, we decouple prompt and model optimization via alternation.
\textbf{Stage~1} freezes the encoder and heads and updates only spatial prompts $\mathbf{Q}_s$;
\textbf{Stage~2} freezes prompts and updates the encoder, projection heads, and discriminator using $\mathcal{L}_{\text{HiLo}}$ (Eq.~\ref{eq:hilo_full}).
The two stages alternate every $k$ iterations:
\begin{equation}
\label{eq:hilo_iter}
\begin{aligned}
\textbf{Stage 1:}\quad
& \mathbf{Q}_s^{(t+1)}
\leftarrow \mathbf{Q}_s^{(t)} - \eta_p \nabla_{\mathbf{Q}_s} \mathcal{L}_{\text{HiLo}} \\
\textbf{Stage 2:}\quad
& \boldsymbol{\Theta}_{\mathcal{T}}^{(t+1)}
\leftarrow \boldsymbol{\Theta}_{\mathcal{T}}^{(t)} - s\eta \nabla \mathcal{L}_{\text{HiLo}} \\
& \{\boldsymbol{\Theta}_{\psi}, \boldsymbol{\Theta}_{D}\}^{(t+1)}
\leftarrow \{\boldsymbol{\Theta}_{\psi}, \boldsymbol{\Theta}_{D}\}^{(t)} - \eta \nabla \mathcal{L}_{\text{HiLo}}
\end{aligned}
\end{equation}
where $\eta_p$ and $\eta$ are learning rates for spatial prompts and model parameters respectively, and $s$ is a scaling factor that reduces the learning rate for the encoder backbone $\mathcal{T}$.
This decomposition avoids interference between prompt and model gradients, leading to more stable convergence.

\noindent During inference, the same NCut-based foreground detection is applied to new images.
The detailed HLPrompt training procedure is provided in~\Cref{alg:hilo} (Appendix~\ref{app:algorithms}).

HLPrompt is dominated by the ViT forward and backward passes, while prompt injection and shallow heads add comparatively small overhead. We defer a detailed complexity breakdown (including NCut mask estimation) to Appendix~\ref{app:complexity}.

\noindent\textbf{Summary.}
HLPrompt extends HiLo with NCut-guided foreground prompting and alternating optimization. It retains HiLo's loss $\mathcal{L}_{\text{HiLo}}$ but changes the optimization landscape: prompts act as learned domain-invariant input transformations that complement HiLo's feature-level disentanglement, and the alternating scheme (Eq.~\ref{eq:hilo_iter}) stabilizes convergence by decoupling prompt and model updates. Together, HiLo and HLPrompt form a complete pure-vision path. The next section turns to vision-language foundation models for an additional source of semantic structure.

\section{VLPrompt: Vision-Language Prompt Tuning for GCD with Domain Shifts}
\label{sec:vlprompt}

HiLo and HLPrompt operate in a pure-vision setting, relying solely on visual representations. However, vision-language foundation models (VLMs) offer a complementary advantage: their aligned visual and textual representations provide richer semantic structure that can be leveraged for category discovery, and large-scale cross-modal pretraining endows them with broader robustness to visual style variations.

\textbf{Key idea.}
VLPrompt extends HiLo's principles to the vision-language setting. The core challenge is twofold: (i)~standard VLM prompts like ``a photo of a [CLASS]'' cannot handle \emph{unnamed} novel categories that must be discovered, and (ii)~domain shifts such as photos-to-sketches can disrupt the pretrained vision-language alignment.
We address (i) by introducing \emph{factorized textual prompts} that decompose each category representation into shared context embeddings (encoding task-level information) and category-specific embeddings (initialized randomly for novel categories), enabling discovery without predefined class vocabularies.
We address (ii) through \emph{cross-modal consistency regularization} that maintains vision-language alignment under domain-interpolating perturbations.

\textbf{Approach.}
VLPrompt inherits HiLo's core components---MI minimization, PatchMix, and curriculum sampling---while adapting them to the cross-modal setting.
It uses CLIP~\cite{radford2021learning} as the backbone, preserving HiLo's feature hierarchy by extracting domain features from early blocks and semantic features from late blocks of the CLIP vision encoder, with MI minimization applied in the shared vision-language embedding space.
Compared with prior CLIP-based GCD methods that focus on single-domain settings (e.g., CLIP-GCD~\cite{ouldnoughi2023clipgcd}, CPT~\cite{yang2025cpt}), VLPrompt additionally targets unseen domain shifts in the unlabelled pool.

\subsection{Architecture}

We instantiate VLPrompt with CLIP's dual-encoder architecture: vision encoder $\Phi^{(v)}: \mathcal{X} \rightarrow \mathbb{R}^{D}$ and text encoder $\Phi^{(t)}: \mathcal{T} \rightarrow \mathbb{R}^{D}$ project to a shared $D$-dimensional embedding space. The vision branch is $\Phi^{(v)}=\mathbf{W}\circ\mathcal{F}\circ\phi$, where $\phi$ is the patchify operation, $\mathcal{F}$ is the multi-head attention stack, and $\mathbf{W}$ projects from vision to text space.
For image $\mathbf{x}$ and text descriptions $\mathbf{u}$, the model computes $\ell_2$-normalized embeddings:
\begin{equation}
\begin{aligned}
\boldsymbol{\pi}^{(v)}(\mathbf{x})
= \frac{\Phi^{(v)}(\mathbf{x})}{\|\Phi^{(v)}(\mathbf{x})\|_2},\quad
\boldsymbol{\pi}^{(t)}(\mathbf{u})
= \frac{\Phi^{(t)}(\mathbf{u})}{\|\Phi^{(t)}(\mathbf{u})\|_2},
\end{aligned}
\end{equation}
where $\phi$ is the patchify operation in $\Phi^{(v)}$. The cross-modal matching score is: 
\begin{equation}
\label{eq:match_score}
    \rho(\mathbf{x}, \mathbf{u}) = \boldsymbol{\pi}^{(v)}(\mathbf{x})^\top \boldsymbol{\pi}^{(t)}(\mathbf{u}).
\end{equation}

\subsection{Adaptive Prompt Learning with Factorized Representations}

Instead of hand-crafted text templates, we learn continuous prompt representations through a factorized embedding structure.

\subsubsection{Factorized Prompt Embeddings}
\label{subsec:vlprompt_factorized_prompts}

We introduce a factorized prompt representation where each class $k$ is described by combining context embeddings shared across all categories with category-specific embeddings.
Let $\boldsymbol{\Theta}_{\text{ctx}} = [\boldsymbol{\theta}_1, \boldsymbol{\theta}_2, \ldots, \boldsymbol{\theta}_N] \in \mathbb{R}^{D \times N}$ denote a set of $N$ learnable context embeddings shared globally, encoding task-agnostic contextual information.
Let $\boldsymbol{\Gamma} = [\boldsymbol{\gamma}_1, \boldsymbol{\gamma}_2, \ldots, \boldsymbol{\gamma}_K] \in \mathbb{R}^{D \times K}$ denote $K$ category-specific embeddings, where $\boldsymbol{\gamma}_k \in \mathbb{R}^D$ captures discriminative features for class $k$.

For category $k$, we construct the composite textual representation as a sequence:
\begin{equation}
\begin{aligned}
\mathbf{S}_k
&= \text{concat}\big([\boldsymbol{\Theta}_{\text{ctx}}, \boldsymbol{\gamma}_k]\big)
\end{aligned}
\end{equation}

This sequence is processed by the frozen text encoder to obtain the category embedding:
\begin{equation}
\boldsymbol{\xi}_k = \Phi^{(t)}(\mathbf{S}_k)
\end{equation}

For categories embeddings $\boldsymbol{\gamma}_k$, we initialize them randomly from a Gaussian distribution $\mathcal{N}(0, \sigma \mathbf{I}_D)$ where $\sigma$ controls initialization scale.
Context embeddings $\boldsymbol{\Theta}_{\text{ctx}}$ are initialized from embeddings of generic phrases like ``a photo of a'' and then optimized end-to-end.

\subsubsection{Analysis of Factorization Benefits}

Factorization reduces the number of free parameters from $K(N+1)D$ (independent prompts) to $(N+K)D$ (shared context + category embeddings), which improves sample efficiency by sharing task-level context across categories---especially in the common regime $K\!\gg\!N$.

\subsection{Semi-supervised Classification}

We adopt a similar semi-supervised classification objective as Eq.~\ref{eq:cls_loss}. The only difference is that instead of learning prototypes $\mathbf{W}$ in a parametric way, we use the cross-modal matching score $\rho(\cdot,\cdot)$ in Eq.~\ref{eq:match_score}. The classification objective is:
\begin{equation}
\label{eq:clip_cls_loss}
    \mathcal{L}^{cls}(\mathbf{x}) = - \sum_{k=1}^{K} \mathbf{q} \log \sigma(\rho(\mathbf{x}, t_k);\tau),
\end{equation}
where $t_k\in T=[t_i]_{i=1}^K$ is a set of $K$ text descriptions, $\sigma(\cdot;\tau)$ is the softmax operation with temperature $\tau$, and $\mathbf{q}$ is the one-hot ground-truth label for labelled $\mathbf{x}$ and is the pseudo-label from a sharpened prediction (cross-modal matching score) of a different view of the same image for unlabelled $\mathbf{x}$.

\subsection{Vision-Language Alignment}

To further fit CLIP into the GCD with domain shift task by factorized text representations, the model must learn semantically meaningful context embeddings $\Theta_{\text{ctx}}$ and category-specific embeddings $\Gamma$, which should also be consistent with visual representations. Therefore, given a sampled mini-batch $\mathcal{B} = \mathcal{B}^l \cup \mathcal{B}^u$ of size $N$, we generate M augmented views for each image, resulting in a total batch size of $MN$. Let $\mathbf{V}=\pi^{(v)}(\mathbf{X})$ be the batch of normalized visual embeddings extracted by the image encoder, where $\mathbf{X}$ is the mini-batch of images.

We aim to derive a corresponding batch of aligned text representations $\mathbf{T}$ from a bank of category text embeddings. Let $\mathbf{E} = [\boldsymbol{\xi}_k]_{k=1}^K \in \mathbb{R}^{K\times D}$ denote the embedding bank, where each row is a $\ell_2$-normalized category embedding $\boldsymbol{\xi}_k$.
For a labelled sample $\mathbf{x}_i \in \mathcal{B}^l$, we directly select the text representation $\mathbf{t}_i \in \mathbf{E}$ according to its class label. For an unlabelled sample $\mathbf{x}_i \in \mathcal{B}^u$, we utilize the pseudo-label $\mathbf{q}_i$ derived from the sharpened cross-modal matching score of a different view of the image: $\mathbf{q}_i = \sigma(\rho(\mathbf{x}'_i, \mathbf{E});\tau_{text})$, where $\sigma(\cdot;\tau)$ is the softmax operation with temperature $\tau$. We then adopt a soft assignment scheme to compute the text representation as the probability-weighted sum of all category embeddings: $\mathbf{t}_i = \mathbf{q}_i \cdot \mathbf{E}$.

With the constructed visual batch $\mathbf{V}$ and text batch $\mathbf{T}$, we compute the image-text similarity logits as $\mathcal{S} = \mathbf{V} \mathbf{T}^\top$. To enforce alignment, we design a vision-language consistency objective $\mathcal{L}^{\text{vl}}$ that minimizes the symmetric Kullback-Leibler (KL) divergence between the image-to-text and text-to-image matching distributions:

\begin{equation}
\label{eq:vl_loss}
    \mathcal{L}^{\text{vl}} = \frac{1}{2} \left( \mathcal{L}_{i2t} + \mathcal{L}_{t2i} \right),
\end{equation}
where
\begin{equation}
\label{eq:vl_loss_sub}
\begin{split}
    \mathcal{L}_{i2t}
    &= \frac{1}{MN} \sum_{i=1}^{MN}
    D_{\text{KL}}(\boldsymbol{p}_i^{t2i} \| \boldsymbol{p}_i^{i2t}), \\
    \mathcal{L}_{t2i}
    &= \frac{1}{MN} \sum_{i=1}^{MN}
    D_{\text{KL}}(\boldsymbol{p}_i^{i2t} \| \boldsymbol{p}_i^{t2i}),
\end{split}
\end{equation}
where $p_i^{i2t} = \sigma(\mathcal{S}_i)$ and $p_i^{t2i} = \sigma((\mathcal{S}^\top)_i)$.

\subsection{Mutual Information Minimization}

We integrate HiLo's mutual information minimization (Section~\ref{sec:hilo}) into the VLPrompt architecture.
The motivation is the same---encouraging domain-invariant semantic features---but the adaptation is non-trivial: since CLIP aligns vision and text in a shared embedding space, we perform MI minimization \emph{after} the projection layer $\mathbf{W}$ so that the disentangled semantic features can be directly matched with text features, which are purely semantic.
Let $\mathcal{F}$ denote the transformer stack in the vision encoder, and let $\mathcal{F}_\ell$ denote the composition of its first $\ell$ transformer blocks (so $\mathcal{F}_L=\mathcal{F}$ for the final block index $L$).
For an input image $\mathbf{x}$, we extract the first-block and last-block outputs of $\mathcal{F}$ (after the projection $\mathbf{W}$) as the domain and semantic features, respectively:
\begin{equation}
\mathbf{r}_{\text{dom}}=\mathbf{W}\cdot\mathcal{F}_1(\phi(\mathbf{x})),
\quad\mathbf{r}_{\text{sem}}=\mathbf{W}\cdot\mathcal{F}_L(\phi(\mathbf{x})),
\end{equation}
and estimate mutual information using Eq.~\ref{eq:mi_estimator} with a discriminator $D:\mathbb{R}^D\times\mathbb{R}^D\to\mathbb{R}$.
The MI minimization loss is:
\begin{equation}
    \mathcal{L}_{MI}=\hat I(\mathbf{r}_{\text{dom}},\mathbf{r}_{\text{sem}})
\end{equation}
according to Eq.~\ref{eq:mi_loss}.

\subsection{Cross-modal PatchMix}

We extend HiLo's PatchMix augmentation (Section~\ref{sec:hilo}) to the vision-language setting.
The key new challenge is that, unlike the pure-vision case where only visual features are mixed, we must simultaneously construct consistent mixed representations in \emph{both} the visual and text modalities to maintain cross-modal alignment.
When we construct a mixed representation of $\mathbf{x}$ and $\mathbf{x}'$ by mixing their patch token embeddings as in Eq.~\ref{eq:patchmix_op}:
\begin{equation}
    \tilde{\phi}(\mathbf{x})_j = \beta_j \odot \phi(\mathbf{x})_j + (1-\beta_j) \odot \phi(\mathbf{x}')_j,
\end{equation}
where $\phi$ is the patchify operator in $\Phi^{(v)}$. We still need to align the mixed visual tokens with text features according to Eq.~\ref{eq:vl_loss}. Therefore, we construct mixed text features as well. Let $t_s$ and $t_t$ be the text features constructed for $\mathbf{x}\in\mathcal{B}^l$ and $\mathbf{x}'\in\mathcal{B}^u$ using the factorized prompts in Section~\ref{subsec:vlprompt_factorized_prompts}. Naturally, the text mixing coefficient should correspond to the visual mixing coefficients. Since $\beta_j$ acts on patches while text features are global, we take their mean:
\begin{equation}
\label{text_patchmix}
    \tilde t = \tilde \beta\cdot t_s+(1-\tilde \beta)\cdot t_t,
\end{equation}
where $\tilde \beta=\frac{1}{P}\sum_{j=1}^P\beta_j$.
Importantly, PatchMix can mix patches from different semantic categories, and our formulation treats the resulting view as a \emph{mixed-label} training instance rather than forcing a single ``clean'' text description.
We view the prompt-induced text embeddings $\{\boldsymbol{\xi}_k\}$ as prototype-like class anchors in the shared embedding space, so a patch-mixed visual representation naturally corresponds to a \emph{soft semantic target} over anchors.
Eq.~\ref{text_patchmix} makes this explicit by constructing a mixed text feature using the same mixing weight as the visual mixing, which ensures that both the PatchMix classification loss (with soft targets) and the symmetric vision-language consistency loss (Eq.~\ref{eq:vl_loss}) optimize a consistent cross-modal mixup objective.
Therefore, even when PatchMix blends heterogeneous semantics, the learning signal remains well-defined: the mixed image is trained to match a mixed distribution over class anchors, which stabilizes alignment under strong, domain-interpolating perturbations.
Therefore, $\mathcal{\tilde S}=\mathbf{\tilde V}\mathbf{\tilde T}^\top$, where $\mathbf{\tilde V}=\text{concat}(\mathcal{F}(\tilde\phi(\mathbf{x})))$ and $\mathbf{\tilde T}=\text{concat}(\tilde t)$. The resulting PatchMix vision-language alignment loss is derived according to Eq.~\ref{eq:vl_loss} and Eq.~\ref{eq:vl_loss_sub}.

The PatchMix classification loss $\mathcal{L}_{PM}^{cls}$ is the combination of Eq.~\ref{eq:patchmix_cls_loss} and Eq.~\ref{eq:clip_cls_loss}:
\begin{equation}
\begin{aligned}
\mathcal{L}^{cls}_{PM}(\mathbf{x})
&= - \sum_{k=1}^{K} \tilde{\mathbf{q}}\,
\log \sigma(\tilde \rho(\mathbf{x}, t_k);\tau),
\end{aligned}
\end{equation}
where $\tilde\rho(\mathbf{x},\cdot)=\left(\frac{\mathbf{W}\cdot\mathcal{F}(\tilde\phi(\mathbf{x}))}{\|\mathbf{W}\cdot\mathcal{F}(\tilde\phi(\mathbf{x}))\|_2}\right)^\top\cdot\boldsymbol{\pi}^{(t)}(\cdot)$ is the modified cross-modal matching score with the mixed token embedding as the visual input.

The resulting PatchMix objective for VLPrompt is:
\begin{equation}
\label{eq:clip_pm_loss}
  \mathcal{L}_{PM}=\mathcal{L}^{cls}_{PM}+\mathcal{L}^{vl}_{PM}.
\end{equation}

\subsection{Training Objective}

The complete VLPrompt training objective integrates semi-supervised classification, mutual information minimization, vision-language alignment, and cross-modal PatchMix:
\begin{equation}
\label{eq:vlprompt_full}
\begin{aligned}
\mathcal{L}_{\text{VLPrompt}}
&= \mathcal{L}^{cls} + \epsilon\Delta
 + \beta_1 \mathcal{L}_{MI} + \beta_2 \mathcal{L}^{vl} + \mathcal{L}_{PM},
\end{aligned}
\end{equation}
where $\Delta$ is the mean-entropy maximization regularizer in Eq.~\ref{simgcd_loss} to prevent pseudo-label collapse. 
The coefficients $\epsilon,\beta_1, \beta_2 > 0$ balance the different objective components. The optimization procedure and data sampling schedule are described in Section~\ref{sec:vlprompt_optim} below.

\subsection{Spatial Prompt Tuning}
Both HLPrompt and VLPrompt employ spatial prompt tuning (SPT)~\cite{wang2024sptnet}, but with different masking strategies.
HLPrompt uses NCut-based foreground masking (Section~\ref{subsec:ncut_spt}), which is reliable for the DINO backbone whose attention concentrates on foreground objects~\cite{caron2021emerging}.
VLPrompt instead adopts \emph{boundary-based} SPT: a learnable prompt $\mathbf{Q}_s \in \mathbb{R}^{3 \times H \times W}$ is injected at patch boundaries via a fixed binary mask $\mathbf{M}$, yielding $\tilde{\mathbf{x}} = \mathbf{x} + (\mathbf{Q}_s \odot \mathbf{M})$.
Boundary masking is preferred for CLIP because its attention distributes uniformly across patches for cross-modal alignment, making NCut masks less stable; boundary prompting modulates inter-patch interactions while preserving patch centroids, better respecting CLIP's pretrained geometry (verified in Appendix~\ref{app:spt_analysis} and Table~\ref{tab:vlprompt_ablation}).
Mask construction details are provided in Appendix~\ref{app:boundary_spt_details}.

\subsection{Two-Stage Alternating Optimization}
\label{sec:vlprompt_optim}

The VLPrompt framework jointly optimizes three groups of parameters: the vision encoder and discriminator, the textual prompt embeddings ($\boldsymbol{\Theta}_{\text{ctx}}, \boldsymbol{\Gamma}$), and the spatial prompts $\mathbf{Q}_s$.
We adopt a two-stage alternating training procedure, following the EM-inspired scheme from SPTNet~\cite{wang2024sptnet} (also used in HLPrompt, Section~\ref{sec:hlprompt}), to decouple visual prompt learning from the remaining parameters.

\textbf{Stage 1: Vision Encoder and Textual Prompt Update.}
With spatial prompts fixed, we update CLIP's encoders $\Phi$,  textual context embeddings $\boldsymbol{\Theta}_{\text{ctx}}$ and category embeddings $\boldsymbol{\Gamma}$ using the full VLPrompt objective from Eq.~\ref{eq:vlprompt_full}. Importantly, the spatial prompts provide an augmented view of the input, serving as a learned domain-invariant transformation.

\textbf{Stage 2: Visual Prompt Learning.}
With the visual encoder and textual prompt embeddings fixed, we optimize the spatial prompts $\mathbf{Q}_s$.

The two stages alternate every $k$ iterations until convergence:
\begin{equation}
\label{eq:optimization}
\begin{aligned}
\textbf{Stage 1:}\quad
& \boldsymbol{\Theta}_{\Phi}^{(t+1)}
\leftarrow \boldsymbol{\Theta}_{\Phi}^{(t)} - s\eta \nabla \mathcal{L}_{\text{VLPrompt}} \\
& \boldsymbol{\Theta}_{\text{ctx}}^{(t+1)}
\leftarrow \boldsymbol{\Theta}_{\text{ctx}}^{(t)} - \eta \nabla \mathcal{L}_{\text{VLPrompt}} \\
& \boldsymbol{\Gamma}^{(t+1)}
\leftarrow \boldsymbol{\Gamma}^{(t)} - \eta \nabla \mathcal{L}_{\text{VLPrompt}} \\
\textbf{Stage 2:}\quad
& \mathbf{Q}_s^{(t+1)}
\leftarrow \mathbf{Q}_s^{(t)} - \eta_p \nabla_{\mathbf{Q}_s} \mathcal{L}_{\text{VLPrompt}}
\end{aligned}
\end{equation}
where $\eta$ and $\eta_p$ are learning rates for textual and spatial prompts respectively, and $s$ is a scaling factor that reduces the learning rate for CLIP vision encoder $\Phi^{(v)}$.
Samples are drawn according to the curriculum sampling weight $p_{cs}(\mathbf{x}|t)$ in Eq.~\ref{eq:sampling_prob}, following the same schedule as HiLo (Section~\ref{sec:hilo}).

Note that, unlike what HLPrompt does in Eq.~\ref{eq:hilo_iter}, we first optimize the model parameters, i.e., textual prompt and category embeddings, and then visual prompts. This is because if we optimize visual prompts first with text descriptions randomly initialized and fixed, the model cannot output meaningful predictions with cross-modal matching score, and thus fails to provide any meaningful updates to the visual prompts as well.

During inference, we apply the learned spatial prompts and compute category probabilities via the cross-modal matching score with the learned textual descriptions.
The full training procedure is provided in~\Cref{alg:vlprompt}, and a detailed complexity breakdown in~\Cref{app:complexity}.

\begin{table*}[t]
\centering
\caption{Evaluation on the DomainNet-GCD Dataset. The model is trained on the labelled domain `Real' (i.e., $\Omega^{a}$) together with one new domain (i.e., $\Omega^{b}$) from `Painting'/`Sketch'/`Quickdraw'/`Clipart'/`Infograph' in turn.\label{tab:table1}}
\resizebox{\textwidth}{!}{
\setlength{\tabcolsep}{2pt}
\begin{tabular}{lcccccccccccccccccccccccccccccc}
\toprule
 & \multicolumn{6}{c}{\textbf{Real+Painting}} & \multicolumn{6}{c}{\textbf{Real+Sketch}} & \multicolumn{6}{c}{\textbf{Real+Quickdraw}} & \multicolumn{6}{c}{\textbf{Real+Clipart}} & \multicolumn{6}{c}{\textbf{Real+Infograph}} \\
\cmidrule(lr){2-7} \cmidrule(lr){8-13} \cmidrule(lr){14-19} \cmidrule(lr){20-25} \cmidrule(lr){26-31}
 & \multicolumn{3}{c}{Real} & \multicolumn{3}{c}{Painting} & \multicolumn{3}{c}{Real} & \multicolumn{3}{c}{Sketch} & \multicolumn{3}{c}{Real} & \multicolumn{3}{c}{Quickdraw} & \multicolumn{3}{c}{Real} & \multicolumn{3}{c}{Clipart} & \multicolumn{3}{c}{Real} & \multicolumn{3}{c}{Infograph} \\
\cmidrule(lr){2-4} \cmidrule(lr){5-7} \cmidrule(lr){8-10} \cmidrule(lr){11-13} \cmidrule(lr){14-16} \cmidrule(lr){17-19} \cmidrule(lr){20-22} \cmidrule(lr){23-25} \cmidrule(lr){26-28} \cmidrule(lr){29-31}
\textbf{Methods} & All & Old & New & All & Old & New & All & Old & New & All & Old & New & All & Old & New & All & Old & New & All & Old & New & All & Old & New & All & Old & New & All & Old & New \\
\midrule
RankStats+ & 34.1 & 62.0 & 19.7 & 29.7 & 49.7 & 9.6 & 34.2 & 62.0 & 19.8 & 17.1 & 31.1 & 6.8 & 34.1 & 62.5 & 19.5 & 4.1 & 4.4 & 3.9 & 34.0 & 62.4 & 19.4 & 24.1 & 45.1 & 6.2 & 34.2 & 62.4 & 19.6 & 12.5 & 21.9 & 6.3 \\
UNO+ & 44.2 & 72.2 & 29.7 & 30.1 & 45.1 & 17.2 & 43.7 & 72.5 & 28.9 & 12.5 & 17.0 & 9.2 & 31.1 & 60.0 & 16.1 & 6.3 & 5.8 & 6.8 & 44.5 & 66.1 & 33.3 & 21.9 & 35.6 & 10.1 & 42.8 & 69.4 & 29.0 & 10.9 & 15.2 & 8.0 \\
ORCA & 31.9 & 49.8 & 23.5 & 28.7 & 38.5 & 7.1 & 32.5 & 50.0 & 23.9 & 11.4 & 14.5 & 7.2 & 19.2 & 39.1 & 15.3 & 3.4 & 3.5 & 3.2 & 32.0 & 49.7 & 23.9 & 19.1 & 31.8 & 4.3 & 29.1 & 47.7 & 20.1 & 8.6 & 13.7 & 7.1 \\
GCD & 47.3 & 53.6 & 44.1 & 32.9 & 41.8 & 23.0 & 48.0 & 53.8 & 45.3 & 16.6 & 22.4 & 11.1 & 37.6 & 41.0 & 35.2 & 5.7 & 4.2 & 6.9 & 47.7 & 53.8 & 44.3 & 22.4 & 34.4 & 16.0 & 41.9 & 46.1 & 39.0 & 10.9 & 17.1 & 8.8 \\
SimGCD & 61.3 & 77.8 & 52.9 & 34.5 & 35.6 & 33.5 & 62.4 & 77.6 & 54.6 & 16.4 & 20.2 & 13.6 & 47.4 & 64.5 & 37.4 & 6.6 & 5.8 & 7.5 & 61.6 & 77.2 & 53.6 & 23.9 & 31.5 & 17.3 & 52.7 & 67.0 & 44.8 & 11.6 & 15.4 & 9.1 \\
FREE & 67.7 & 78.1 & 61.2 & 45.6 & 46.1 & 44.8 & 67.8 & 78.2 & 61.6 & 22.5 & 25.8 & 20.9 & 61.4 & \underline{78.1} & 55.1 & \underline{8.9} & 7.8 & \underline{9.0} & 66.4 & 78.1 & 60.1 & 29.3 & 37.2 & 26.3 & 68.1 & 78.9 & 60.2 & 16.1 & 18.6 & 13.4 \\
\midrule
\rowcolor{demogray}
HiLo (Ours) & 64.4 & 77.6 & 57.5 & 42.1 & 42.9 & 41.3 & 63.3 & 77.9 & 55.9 & 19.4 & 22.4 & 17.1 & 58.6 & 76.4 & 52.5 & 7.4 & 6.9 & 8.0 & 63.8 & 77.6 & 56.6 & 27.7 & 34.6 & 21.7 & 64.2 & 78.1 & 57.0 & 13.7 & 16.4 & 11.9 \\
\rowcolor{demogray}
HLPrompt (Ours) & \underline{70.0} & \underline{81.7} & \underline{64.0} & \underline{51.8} & \underline{54.2} & \underline{49.5} & \underline{69.2} & \underline{82.5} & \underline{62.3} & \underline{43.2} & \underline{47.8} & \underline{39.9} & \underline{62.9} & 75.8 & \underline{56.2} & 8.2 & \underline{8.0} & 8.4 & \underline{70.5} & \underline{82.8} & \underline{64.1} & \underline{53.7} & \underline{60.9} & \underline{47.4} & \underline{68.6} & \underline{82.1} & \underline{61.6} & \underline{21.8} & \underline{26.7} & \underline{18.5}\\
\rowcolor{demogray}
VLPrompt (Ours) & \textbf{77.6} & \textbf{83.5} & \textbf{74.5} & \textbf{61.7} & \textbf{62.5} & \textbf{60.9} & \textbf{77.1} & \textbf{85.1} & \textbf{73.0} & \textbf{55.1} & \textbf{61.4} & \textbf{50.5} & \textbf{74.1} & \textbf{81.3} & \textbf{70.3} & \textbf{12.3} & \textbf{12.3} & \textbf{12.3} & \textbf{78.8} & \textbf{84.4} & \textbf{76.0} & \textbf{64.2} & \textbf{68.5} & \textbf{60.4} & \textbf{77.9} & \textbf{84.4} & \textbf{74.5} & \textbf{35.8} & \textbf{43.0} & \textbf{31.1}\\
\bottomrule
\end{tabular}
}
\end{table*}

\begin{table*}[t]
\centering
\caption{Evaluation on SSB-C datasets. We report baseline results on the labelled/original domain (i.e., $\Omega^{a}$) and overall performance under corruptions (i.e., new domains $\Omega^{b}$).\label{tab:table2}}
\resizebox{0.9\textwidth}{!}{
\setlength{\tabcolsep}{3.5pt}
\begin{tabular}{lcccccccccccccccccc}
\toprule
 & \multicolumn{6}{c}{CUB-C} & \multicolumn{6}{c}{Scars-C} & \multicolumn{6}{c}{FGVC-C} \\
\cmidrule(lr){2-7} \cmidrule(lr){8-13} \cmidrule(lr){14-19}
 & \multicolumn{3}{c}{Original} & \multicolumn{3}{c}{Corrupted} & \multicolumn{3}{c}{Original} & \multicolumn{3}{c}{Corrupted} & \multicolumn{3}{c}{Original} & \multicolumn{3}{c}{Corrupted} \\
\cmidrule(lr){2-4} \cmidrule(lr){5-7} \cmidrule(lr){8-10} \cmidrule(lr){11-13} \cmidrule(lr){14-16} \cmidrule(lr){17-19}
Methods & All & Old & New & All & Old & New & All & Old & New & All & Old & New & All & Old & New & All & Old & New \\
\midrule
RankStats+ & 19.3 & 22.0 & 15.4 & 13.6 & 23.9 & 4.5 & 14.8 & 20.8 & 7.8 & 11.5 & 22.6 & 1.0 & 14.4 & 16.4 & 14.5 & 8.3 & 15.6 & 5.0 \\
UNO+ & 25.9 & 40.1 & 21.3 & 21.5 & 33.4 & 8.6 & 22.0 & 41.8 & 7.0 & 16.9 & 29.8 & 4.5 & 22.0 & 33.4 & 15.8 & 16.5 & 25.2 & 8.8 \\
ORCA & 18.2 & 22.8 & 14.5 & 21.5 & 23.1 & 18.9 & 19.1 & 28.7 & 11.2 & 15.0 & 22.4 & 8.3 & 17.6 & 19.3 & 16.1 & 13.9 & 17.3 & 10.1 \\
GCD & 26.6 & 27.5 & 25.7 & 25.1 & 28.7 & 22.0 & 22.1 & 35.2 & 20.5 & 21.6 & 29.2 & 10.5 & 25.2 & 28.7 & 23.0 & 21.0 & 23.1 & 17.3 \\
SimGCD & 31.9 & 33.9 & 29.0 & 28.8 & 31.6 & 25.0 & 26.7 & 39.6 & 25.6 & 22.1 & 30.5 & 14.1 & 26.1 & 28.9 & 25.1 & 22.3 & 23.2 & 21.4 \\
UniOT & 27.5 & 29.3 & 26.8 & 27.3 & 33.2 & 22.5 & 24.3 & 37.5 & 22.3 & 22.9 & 31.4 & 13.7 & 27.3 & 29.8 & 22.5 & 21.6 & 23.5 & 19.6 \\
FREE & \underline{60.4} & 58.5 & \textbf{63.2} & \underline{55.7} & 57.1 & \underline{53.7} & 43.6 & 48.1 & 40.8 & 38.9 & 46.1 & 32.6 & \underline{48.5} & \underline{54.9} & \underline{51.2} & 35.0 & 32.4 & \underline{38.9} \\
\midrule
\rowcolor{demogray}
HiLo (Ours) & 56.8 & 54.0 & \underline{60.3} & 52.0 & 53.6 & 50.5 & 39.5 & 44.8 & 37.0 & 35.6 & 42.9 & 28.4 & 44.2 & 50.6 & 47.4 & 31.2 & 29.0 & 33.4 \\
\rowcolor{demogray}
HLPrompt (Ours) & \textbf{60.4} & \textbf{72.8} & 54.2 & \textbf{59.1} & \textbf{60.8} & \textbf{57.4} & \textbf{67.5} & \textbf{82.3} & \textbf{60.3} & \textbf{63.3} & \underline{70.7} & \textbf{56.2} & \textbf{59.1} & \textbf{56.7} & \textbf{60.3} & \textbf{50.9} & \textbf{44.1} & \textbf{57.8} \\
\rowcolor{demogray}
VLPrompt (Ours) & 59.7 & \underline{68.5} & 55.3 & 55.6 & \underline{60.3} & 51.0 & \underline{62.4} & \underline{81.6} & \underline{53.0} & \underline{59.0} & \textbf{71.6} & \underline{46.9} & 41.5 & 42.3 & 41.0 & \underline{35.5} & \underline{34.1} & 36.8 \\
\bottomrule
\end{tabular}
}
\end{table*}

\section{Experiments}
\label{sec:experiments}

We present a comprehensive experimental evaluation of our proposed suite of methods: HiLo, HLPrompt, and VLPrompt.

\subsection{Experimental Setup}

\textbf{Datasets.} We evaluate our methods on \textbf{SSB-C}~\cite{vaze2021open}, which augments standard GCD benchmarks with ImageNet-C-style corruptions~\cite{hendrycks2019benchmarking} on CUB-200~\cite{wah2011caltech}, Stanford Cars~\cite{krause20133d}, and FGVC-Aircraft~\cite{maji2013fine}, and on \textbf{DomainNet-GCD} constructed from DomainNet~\cite{peng2019moment} (345 categories across six domains).
In both benchmarks, labelled data comes from a single source domain covering 50\% base classes, while unlabelled data spans all classes and multiple (possibly unseen) domains.

\textbf{Evaluation protocols.} Following standard GCD protocols~\cite{vaze2022generalized}, we report clustering accuracy on three splits after Hungarian matching: \textit{All} (all classes), \textit{Known} (base classes), and \textit{Novel} (novel classes only).

Full implementation details, including datasets statistics, hyperparameter settings, augmentation strategies, corruption specifications, and curriculum sampling configurations, are provided in Appendix~\ref{app:impl_details}.

\subsection{Main Results}
We compare against representative baselines and state-of-the-art methods, including GCD~\cite{vaze2022generalized}, ORCA~\cite{cao2021open}, SimGCD~\cite{wen2022parametric}, RankStats+~\cite{han2021autonovel}, UNO+~\cite{fini2021unified}, and FREE~\cite{feng2025free}. FREE is the only concurrent method that also considers GCD with domain shifts.
Among our methods, HiLo establishes the core framework; HLPrompt and VLPrompt are two extensions built upon HiLo with prompt-based adaptation, leveraging vision and vision-language foundation models respectively.
Results on DomainNet-GCD and SSB-C are reported in Tables~\ref{tab:table1} and~\ref{tab:table2}, respectively, where all three of our methods are highlighted.

Tables~\ref{tab:table1} and~\ref{tab:table2} reveal a clear progression across our three methods.
\textbf{HiLo} substantially outperforms all prior baselines that do not explicitly model domain shifts, confirming the effectiveness of the core disentanglement, PatchMix, and curriculum sampling framework.
\textbf{HLPrompt} builds upon HiLo by introducing spatial prompt tuning and alternating optimization; on DomainNet-GCD, it obtains consistent gains on the shifted target domains $\Omega^b$ (e.g., $+9.7\%$ on Painting `All', $+23.8\%$ on Sketch `All' compared with HiLo), and on SSB-C it achieves particularly strong improvements on fine-grained benchmarks (e.g., $+28.0\%$ on Scars-C `All' and $+14.9\%$ on FGVC-C `All').
These gains confirm that NCut-guided foreground prompting effectively suppresses background domain cues that HiLo's feature-level disentanglement alone cannot fully address.
\textbf{VLPrompt} extends HiLo's principles to the vision-language setting by inheriting MI minimization, PatchMix, and curriculum sampling while adding factorized textual prompts and cross-modal consistency regularization to leverage CLIP's aligned visual--textual representations.
On DomainNet-GCD, VLPrompt achieves the best results across nearly all domain pairs---for example, $+9.9\%$ over HLPrompt on Painting `All' and $+11.9\%$ on Sketch `All'---demonstrating that cross-modal semantic priors provide a powerful complement to visual disentanglement for handling extreme real-world domain gaps.
On SSB-C, however, HLPrompt's pure-vision path is more effective for fine-grained corruption-type shifts (e.g., HLPrompt outperforms VLPrompt by $5.1\%$ on Scars-C `All' and by $17.6\%$ on FGVC-C `All').
This asymmetry reflects complementary strengths: the corruption-type domain shifts in SSB-C (noise, blur, weather) are well handled by explicit domain-semantic disentanglement and foreground prompting, while the large visual style gaps in DomainNet (photos vs.\ sketches vs.\ infographics) are better bridged by CLIP's cross-modal pretraining.

\begin{table}[h]
\centering
\caption{HiLo Ablation Study on DomainNet-GCD (Real $\to$ Painting). We incrementally add each HiLo component on top of SimGCD, and also ablate the feature hierarchy design. \cinc{} and \cdec{} indicate absolute gain/drop relative to SimGCD.}
\label{tab:hilo_ablation}
\resizebox{\columnwidth}{!}{
\setlength{\tabcolsep}{4pt}
\begin{tabular}{cl ccc ccc}
\toprule
 & & \multicolumn{3}{c}{Real ($\Omega^a$)} & \multicolumn{3}{c}{Painting ($\Omega^b$)} \\
\cmidrule(lr){3-5} \cmidrule(lr){6-8}
 & Method & All & Old & New & All & Old & New \\
\midrule
Ref. & SimGCD & 61.3 & 77.8 & 52.9 & 34.5 & 35.6 & 33.5 \\
(1) & + PatchMix (original~\cite{zhu2023patch}) & 62.5 \cinc{1.2} & 76.3 \cdec{1.5} & 54.2 \cinc{1.3} & 34.8 \cinc{0.3} & 36.0 \cinc{0.4} & 33.8 \cinc{0.3} \\
(2) & + PatchMix for GCD & 63.5 \cinc{2.2} & 75.0 \cdec{2.8} & 57.6 \cinc{4.7} & 36.6 \cinc{2.1} & 39.6 \cinc{4.0} & 33.6 \cinc{0.1} \\
(3) & + MI Minimization & 66.4 \cinc{5.1} & 79.2 \cinc{1.4} & 59.8 \cinc{6.9} & 35.6 \cinc{1.1} & 36.7 \cinc{1.1} & 34.2 \cinc{0.7} \\
\rowcolor{demogray}
(4) & + Curriculum Sampling (HiLo) & 64.4 & 77.6 & 57.5 & 42.1 & 42.9 & 41.3 \\
\midrule
(5) & Both features from deep layers only & 28.2 \cdec{33.1} & 40.3 \cdec{37.5} & 22.7 \cdec{30.2} & 13.6 \cdec{20.9} & 20.0 \cdec{15.6} & 11.0 \cdec{22.5} \\
(6) & Both features from shallow layers only & 10.1 \cdec{51.2} & 18.1 \cdec{59.7} & ~6.4 \cdec{46.5} & ~5.7 \cdec{28.8} & ~9.2 \cdec{26.4} & ~5.7 \cdec{27.8} \\
\bottomrule
\end{tabular}
}
\end{table}

\begin{table*}[b]
\centering
\newcommand{\cg}{\cellcolor{demogray}}
\caption{HLPrompt ablation study. Each component is removed in turn. The left side shows results on DomainNet-GCD (Real $\to$ Painting) and the right side shows results on CUB-C (Original $\to$ Corrupted). \cinc{} and \cdec{} indicate absolute performance gain and drop relative to the reference, respectively.}
\label{tab:hlprompt_ablation}
\resizebox{\textwidth}{!}{
\setlength{\tabcolsep}{4pt}
\begin{tabular}{cl ccc ccc c ccc ccc}
\toprule
 & & \multicolumn{6}{c}{\textbf{DomainNet}} & & \multicolumn{6}{c}{\textbf{CUB-C}} \\
\cmidrule(lr){3-8} \cmidrule(lr){10-15}
 & & \multicolumn{3}{c}{Real} & \multicolumn{3}{c}{Painting} & & \multicolumn{3}{c}{Original} & \multicolumn{3}{c}{Corrupted} \\
\cmidrule(lr){3-5} \cmidrule(lr){6-8} \cmidrule(lr){10-12} \cmidrule(lr){13-15}
 & Methods & All & Old & New & All & Old & New & & All & Old & New & All & Old & New \\
\midrule
\rowcolor{demogray}
Ref. & HLPrompt & 70.0 & 81.7 & 64.0 & 51.8 & 54.2 & 49.5 & & 60.4 & 72.8 & 54.2 & 59.1 & 60.8 & 57.4 \\
(1) & w/o MI Minimization & 68.7 \cdec{1.3} & 82.2 \cinc{0.5} & 61.8 \cdec{2.2} & 50.1 \cdec{1.7} & 53.4 \cdec{0.8} & 46.9 \cdec{2.6} & & 55.5 \cdec{8.4} & 64.3 \cdec{7.4} & 51.1 \cdec{8.9} & 51.2 \cdec{9.2} & 52.6 \cdec{8.7} & 49.8 \cdec{9.7} \\
(2) & w/o PatchMix & 68.0 \cdec{2.0} & 80.4 \cdec{1.3} & 61.5 \cdec{2.5} & 47.5 \cdec{4.3} & 50.4 \cdec{3.8} & 44.6 \cdec{4.9} & & 59.9 \cdec{4.0} & 67.8 \cdec{3.9} & 55.9 \cdec{4.1} & 55.8 \cdec{4.6} & 58.2 \cdec{3.1} & 53.4 \cdec{6.1}\\
(3) & w/o Alt. Training & 69.8 \cdec{0.2} & 81.5 \cdec{0.2} & 63.8 \cdec{0.2} & 45.8 \cdec{6.0} & 47.8 \cdec{6.4} & 43.7 \cdec{6.0} & & 55.8 \cdec{8.1} & 63.7 \cdec{8.0} & 51.9 \cdec{8.1} & 52.1 \cdec{8.3} & 54.1 \cdec{7.2} & 50.2 \cdec{9.3} \\
(4) & w/o SPT & 71.1 \cinc{1.1} & 82.8 \cinc{1.1} & 65.0 \cinc{1.0} & 45.3 \cdec{6.5} & 47.5 \cdec{6.7} & 43.1 \cdec{6.4} & & 62.3 \cdec{1.6} & 71.4 \cdec{0.3} & 57.7 \cdec{2.3} & 58.4 \cdec{2.0} & 59.2 \cdec{2.1} & 57.6 \cdec{1.9}\\
\midrule
\rowcolor{demogray}
(5) & w/ Semantic-aware SPT (HLPrompt) & 70.0 & 81.7 & 64.0 & 51.8 & 54.2 & 49.5 & & 60.4 & 72.8 & 54.2 & 59.1 & 60.8 & 57.4 \\
\rowcolor{demogray}
(6) & w/ Boundary SPT & 68.9 & 79.3 & 63.5 & 52.4 & 56.1 & 48.8 & & 63.9 & 71.7 & 60.0 & 60.4 & 61.3 & 59.5 \\
\bottomrule
\end{tabular}
}
\end{table*}

\begin{table*}[t]
\centering
\caption{VLPrompt ablation study. Each component is removed in turn. The left side shows results on DomainNet-GCD (Real $\to$ Painting) and the right side shows results on CUB-C (Original $\to$ Corrupted). \cinc{} and \cdec{} indicate absolute performance gain and drop relative to the reference, respectively.}
\label{tab:vlprompt_ablation}
\resizebox{\textwidth}{!}{
\setlength{\tabcolsep}{4pt}
\begin{tabular}{cl ccc ccc c ccc ccc}
\toprule
 & & \multicolumn{6}{c}{\textbf{DomainNet}} & & \multicolumn{6}{c}{\textbf{CUB-C}} \\
\cmidrule(lr){3-8} \cmidrule(lr){10-15}
 & & \multicolumn{3}{c}{Real} & \multicolumn{3}{c}{Painting} & & \multicolumn{3}{c}{Original} & \multicolumn{3}{c}{Corrupted} \\
\cmidrule(lr){3-5} \cmidrule(lr){6-8} \cmidrule(lr){10-12} \cmidrule(lr){13-15}
 & Methods & All & Old & New & All & Old & New & & All & Old & New & All & Old & New \\
\midrule
\rowcolor{demogray}
Ref. & VLPrompt & 77.6 & 83.5 & 74.5 & 61.7 & 62.5 & 60.9 & & 59.7 & 68.5 & 55.3 & 55.6 & 60.3 & 51.0 \\
(1) & w/o MI Minimization & 78.1 \cinc{0.5} & 84.8 \cinc{1.3} & 74.6 \cinc{0.1} & 61.2 \cdec{0.5} & 62.5 & 59.9 \cdec{1.0} & & 59.3 \cdec{0.4} & 67.2 \cdec{1.3} & 55.3 & 55.3 \cdec{0.3} & 59.9 \cdec{0.4} & 50.7 \cdec{0.3} \\
(2) & w/o Vision-language Loss & 72.3 \cdec{5.3} & 84.5 \cinc{1.0} & 65.9 \cdec{8.6} & 55.1 \cdec{6.6} & 58.0 \cdec{4.5} & 52.2 \cdec{8.7} & & 46.9 \cdec{12.8} & 61.0 \cdec{7.5} & 39.9 \cdec{15.4} & 42.0 \cdec{13.6} & 43.0 \cdec{17.3} & 41.1 \cdec{9.9} \\
(3) & w/o Cross-modal PatchMix & 78.6 \cinc{1.0} & 84.1 \cinc{0.6} & 75.7 \cinc{1.2} & 60.7 \cdec{1.0} & 61.2 \cdec{1.3} & 60.1 \cdec{0.8} & & 58.9 \cdec{0.8} & 66.8 \cdec{1.7} & 55.0 \cdec{0.3} & 54.8 \cdec{0.8} & 59.7 \cdec{0.6} & 50.0 \cdec{1.0} \\
(4) & w/o Alt. Training & 77.8 \cinc{0.2} & 84.7 \cinc{1.2} & 74.3 \cdec{0.2} & 58.1 \cdec{3.6} & 58.8 \cdec{3.7} & 57.5 \cdec{3.4} & & 51.0 \cdec{8.7} & 63.0 \cdec{5.5} & 45.1 \cdec{10.2} & 47.4 \cdec{8.2} & 53.4 \cdec{6.9} & 41.4 \cdec{9.6} \\
(5) & w/o SPT & 77.6 & 85.3 \cinc{1.8} & 73.7 \cdec{0.8} & 56.1 \cdec{5.6} & 58.8 \cdec{3.7} & 53.4 \cdec{7.5} & & 54.0 \cdec{5.7} & 64.0 \cdec{4.5} & 48.9 \cdec{6.4} & 45.8 \cdec{9.8} & 52.0 \cdec{8.3} & 39.7 \cdec{11.3} \\
\midrule
\rowcolor{demogray}
(6) & w/ Boundary SPT (VLPrompt) & 77.6 & 83.5 & 74.5 & 61.7 & 62.5 & 60.9 & & 59.7 & 68.5 & 55.3 & 55.6 & 60.3 & 51.0 \\
\rowcolor{demogray}
(7) & w/ Semantic-aware SPT & 77.1 & 84.1 & 73.6 & 57.8 & 58.8 & 56.9 & & 45.0 & 54.3 & 40.4 & 39.4 & 45.4 & 33.4 \\
\bottomrule
\end{tabular}
}
\end{table*}

\subsection{Ablation Studies}
\label{subsec:ablation}

We conduct comprehensive ablation studies for all three methods to quantify the contribution of each design choice. We first ablate HiLo's core modules, which are shared across all three frameworks, to demonstrate their effectiveness. We then examine each module in HLPrompt and VLPrompt, including new modules introduced by them, respectively. Component ablations are presented on DomainNet-GCD (Real$\to$Painting) and SSB-C (CUB-C). Further analysis (classifier feature choices, backbone strength, and training dynamics) is deferred to Appendix~\ref{app:additional_ablations}.

\noindent\textbf{HiLo.} \
We validate the contribution of HiLo's core components in Table~\ref{tab:hilo_ablation}, starting from SimGCD as the baseline.
Adapting PatchMix for contrastive learning in the GCD setting (row~2) yields a noticeable boost on the shifted domain $\Omega^b$, confirming the benefit of intermediate-domain augmentation.
Adding the domain-semantic feature disentanglement via MI minimization (row~3) significantly improves performance on both $\Omega^a$ and $\Omega^b$, demonstrating that dissociating spurious domain-semantic correlations is essential.
Curriculum sampling (row~4) further improves performance on $\Omega^b$ by gradually exposing the model to harder new-domain samples, leading to the full HiLo model.
The engineering ablations (rows~5--6) confirm that the hi-lo feature hierarchy is critical: using only deep or only shallow features for both branches leads to drastic performance drops, validating the design choice of extracting domain features from early layers and semantic features from late layers.

\noindent\textbf{HLPrompt.} \ 
We ablate each HLPrompt component in Table~\ref{tab:hlprompt_ablation}.
\textit{MI minimization} and \textit{PatchMix} yield consistent gains on the shifted target domains $\Omega^b$, validating their role in domain robustness.
\textit{Alternating training} is crucial for making spatial prompt tuning effective; without it, performance drops sharply on $\Omega^b$.
The preferred SPT variant depends on the dataset: semantic-aware prompts work better on DomainNet-GCD, while boundary prompts are more reliable on CUB-C.
Appendix Figs.~\ref{fig:feature_choice} and~\ref{fig:backbone_comparison} further show that HLPrompt's gains are not solely due to a stronger backbone. A full metric-by-metric analysis is provided in Appendix~\ref{app:ablation_discussion}.

\noindent\textbf{VLPrompt.} \
We ablate each VLPrompt component in Table~\ref{tab:vlprompt_ablation}.
The \textit{vision-language alignment loss} is the most critical component for adapting CLIP to GCD under domain shifts; removing it causes the largest drops on both domains.
\textit{SPT} with \textit{alternating training} is critical for robustness on $\Omega^b$.
\textit{Boundary SPT} is consistently more reliable than semantic-aware SPT for VLPrompt, consistent with the attention-pattern analysis in Appendix~\ref{app:spt_analysis}.
CLIP fine-tuning choices and textual context embedding updates are further ablated in Appendix Figs.~\ref{fig:backbone_parameters} and~\ref{fig:train_ctx_or_not}.

\section{Conclusion}
\label{sec:conclusion}

We study Generalized Category Discovery under domain shifts and propose a suite of three frameworks grounded in DINO-pretrained vision backbones and the CLIP vision-language model.
\textbf{HiLo} achieves domain-semantic disentanglement through multi-level feature extraction and MI minimization, enhanced with PatchMix augmentation and curriculum learning, with provably tighter generalization bounds (Theorem~\ref{thm:mi_bound}).
\textbf{HLPrompt} extends HiLo with NCut~\cite{shi2000normalized}-based spatial prompt tuning and alternating optimization to suppress input-level background and domain noise, completing the pure-vision solution.
\textbf{VLPrompt} adapts vision-language models through factorized textual prompts and cross-modal consistency regularization, inheriting HiLo's core components in the cross-modal setting.
We hope this work encourages the community to consider more realistic open-world assumptions where domain shifts are the norm rather than the exception.
Several limitations point to productive future directions.
All three methods rely on Assumption~\ref{assume:independence} (domain-class independence), which may be violated when certain categories appear predominantly in specific domains; future work could relax this via joint domain-label modeling~\cite{long2018conditional} or causal disentanglement.
Our methods also assume that the number of categories $K$ is known in advance; relaxing this would require principled model selection or non-parametric category estimation~\cite{han2019learning}.
Furthermore, extremely large domain gaps (e.g., RGB to depth or thermal imagery) remain challenging, suggesting multi-modal pretraining as a promising avenue.

An expanded discussion of complementary strengths, VLM generalization versus memorization~\cite{mayilvahanan2024forgotten,mayilvahanan2023clip}, and potential future directions are provided in Appendix~\ref{app:discussion}.

\section*{Acknowledgements}
This work is supported by Hong Kong Research Grant Council - Early Career Scheme (Grant No. 27208022), Hong Kong Research Grant Council - General Research Fund (Grant No. 17211024), 
and HKU Seed Fund for Basic Research.

\FloatBarrier
\bibliographystyle{IEEEtran}
\bibliography{ref}

\clearpage
\appendix

\renewcommand{\thefigure}{A\arabic{figure}}
\renewcommand{\thetable}{A\arabic{table}}
\renewcommand{\theHfigure}{appendix.\arabic{figure}}
\renewcommand{\theHtable}{appendix.\arabic{table}}
\setcounter{figure}{0}
\setcounter{table}{0}

This appendix is organized as follows. Section~\ref{app:discussion} discusses the complementary strengths of the three frameworks, the role of VLM generalization, and the remaining limitations. Section~\ref{app:impl_details} summarizes implementation details and training algorithms. Section~\ref{app:additional_ablations} reports additional results, including component ablations and the analysis of semantic-aware SPT. Section~\ref{app:formal_defs} states the formal definitions and assumptions. Section~\ref{app:proofs} provides proofs and technical details, including the complexity analysis and the NCut formulation.

\subsection{Extended Discussion}
\label{app:discussion}

\subsubsection{Complementary Strengths of the Three Frameworks}

We propose a suite of three complementary frameworks that share the same core ingredients, including disentanglement, PatchMix, and curriculum sampling, while operating on different foundation backbones and introducing different adaptation mechanisms. As a result, they are suitable for different deployment constraints and domain-shift regimes.

HiLo establishes the foundational pure-vision formulation of our suite. It disentangles domain-specific and semantic features through multi-level extraction and MI minimization, and it further combines PatchMix augmentation with curriculum sampling for stable optimization. This design is especially attractive when one prefers a pure-vision pipeline with explicit control over feature disentanglement.

HLPrompt extends this pure-vision path from the perspective of prompt-based adaptation. By selectively injecting prompts into foreground object regions, it complements HiLo's feature-level disentanglement with input-level adaptation. This makes HLPrompt a natural choice when foreground localization and prompt-based specialization are beneficial.

VLPrompt approaches the same problem from the perspective of vision-language foundation models. It leverages aligned visual and textual representations to provide richer semantic structure for category discovery, while retaining the domain-aware principles introduced by HiLo. This setting is particularly appealing when cross-modal priors and parameter-efficient adaptation are available.

These complementary characteristics suggest that method selection should be guided by application requirements. The pure-vision path of HiLo and HLPrompt is well suited to settings that prioritize architectural flexibility, interpretability, or deployment without a text branch. VLPrompt is well suited to settings that can benefit from textual semantics and broader cross-modal prior knowledge.

\subsubsection{On VLM Generalization versus Data Memorization}

A natural question for VLM-based methods is whether strong performance under domain shift reflects genuine transfer or simple memorization of similar web examples. Recent systematic studies~\cite{mayilvahanan2024forgotten,mayilvahanan2023clip} provide two useful observations. First, CLIP retains substantial performance even after aggressive similarity-based pruning of its training data, which indicates that it learns transferable representations beyond direct pattern matching. Second, genuine OOD generalization remains challenging, which makes careful evaluation essential when claiming robustness.

VLPrompt is designed with this perspective in mind. Rather than relying on zero-shot predictions alone, it adapts the pretrained model through task-specific prompt learning, spatial prompting, and domain-aware regularization. Our benchmarks also introduce controlled domain shifts whose structure differs from the natural diversity of web data. Together with the ablation results, this supports the view that VLPrompt's gains come from the proposed adaptation mechanisms rather than from pretraining alone.

\subsubsection{Limitations and Future Work}

Current limitations include: the domain-class independence assumption (Assumption~\ref{assume:independence}), which may be violated when categories correlate with specific domains---a problem that joint domain-label modeling or causal approaches could address; the need to know the total number of categories $K$ in advance, which could be relaxed via non-parametric category estimation; the difficulty of handling extremely large domain gaps such as RGB to depth imagery, where multi-modal pretraining may help; and the computational cost of training on large-scale datasets.

Promising directions for future work include: (i) a unified framework combining pure-vision disentanglement with VLM-based cross-modal alignment; (ii) automatic estimation of the number of novel classes; (iii) extension to continual settings with streaming domains and categories; (iv) incorporation of additional modalities (audio, depth, text metadata); and (v) tighter theoretical guarantees under weaker assumptions beyond domain-class independence.

\subsection{Implementation Details}
\label{app:impl_details}

We summarize the implementation details for HLPrompt and VLPrompt in this section.

\subsubsection{Datasets}

SSB-C contains three fine-grained recognition datasets: CUB-200~\cite{wah2011caltech} with 200 bird species and 11,788 images, Stanford Cars~\cite{krause20133d} with 196 car models and 16,185 images, and FGVC-Aircraft~\cite{maji2013fine} with 100 aircraft variants and 10,000 images. For each dataset, the labelled set consists of clean images from 50\% of the base classes, while the unlabelled set contains a mixture of clean and corrupted images spanning all classes. We consider five severity levels and a diverse set of noise, blur, weather, and digital corruptions.

DomainNet-GCD contains 345 object categories across six visually distinct domains, namely Real, Clipart, Infograph, Painting, Quickdraw, and Sketch. The protocol uses labelled data from the Real domain for 50\% of the base classes, while the unlabelled set spans all six domains and all classes.

Statistics of the datasets are recorded in Table~\ref{tab:dataset_stats}.

\begin{table}[t]
\centering
\caption{Statistics of the datasets.}
\label{tab:dataset_stats}
\setlength{\tabcolsep}{3.5pt}
\resizebox{\columnwidth}{!}{
\begin{tabular}{lcccccc}
\toprule
 & \multicolumn{3}{c}{Labelled} & \multicolumn{3}{c}{Unlabelled} \\
\cmidrule(lr){2-4} \cmidrule(lr){5-7}
Dataset & \#Image & \begin{tabular}[c]{@{}c@{}}\#Class\\ $|\mathcal{Y}_{base}|$\end{tabular} & \begin{tabular}[c]{@{}c@{}}\#Domain\\ $|\Omega^a|$\end{tabular} & \#Image & \begin{tabular}[c]{@{}c@{}}\#Class\\ $|\mathcal{Y}_{novel}|$\end{tabular} & \begin{tabular}[c]{@{}c@{}}\#Domain\\ $|\Omega|$\end{tabular} \\
\midrule
DomainNet & 39.1K & 172 & 1 & 547.5K & 345 & 6 \\
CUB-C     & 1.5K  & 100 & 1 & 45K    & 200 & 10 \\
Scars-C   & 2.0K  & 98  & 1 & 61K    & 196 & 10 \\
FGVC-C    & 1.7K  & 50  & 1 & 50K    & 100 & 10 \\
\bottomrule
\end{tabular}
}
\end{table}

\subsubsection{HLPrompt}

We employ ViT-B/16 pretrained with DINOv3~\cite{simeoni2025dinov3} as the backbone. The projection heads are 3-layer MLPs (768-2048-2048-256), and the MI discriminator is a 3-layer MLP (256-2048-2048-1). We generate $M=2$ augmented views per image. For DomainNet-GCD, we use random crop, random horizontal flip, color jitter, and random grayscale. For SSB-C, we use random crop, random horizontal flip, and color jitter.

For curriculum sampling on SSB-C, we compute the sampling weight $p_{cs}(\mathbf{x}|t)$ inspired by FREE~\cite{feng2025free}: we apply 2D FFT, use the amplitude spectrum as domain representation, downsample with average pooling, and run semi-supervised K-Means to partition samples into $\Omega^a$ and $\Omega^b$.

Detailed optimization and hyperparameter settings of HLPrompt are recorded in Table~\ref{tab:hyperparameters} (a).

\subsubsection{VLPrompt}

VLPrompt builds on pretrained CLIP~\cite{radford2021learning} ViT-B/16. The MI discriminator is a 3-layer MLP (512-2048-2048-1). We learn 4 context tokens initialized from ``a photo of a'' and $K$ category embeddings with 4 tokens each, where $K$ equals the total number of classes in each dataset. Category embeddings are initialized from $\mathcal{N}(0, 0.02^2 \mathbf{I})$.

Augmentation and curriculum sampling settings follow those of HLPrompt. Detailed optimization and hyperparameter settings of VLPrompt are recorded in Table~\ref{tab:hyperparameters} (b).

\begin{table}[t]
\centering
\caption{Optimization details and hyperparameters.}
\label{tab:hyperparameters}

\vspace{2pt}
\textbf{(a) HLPrompt} \\
\vspace{2pt}
\resizebox{\columnwidth}{!}{
\begin{tabular}{lc}
\toprule
\textbf{Setting} & \textbf{Value} \\
\midrule
\textit{Optimization} & \\
Optimizer & SGD \\
Epochs & 200 \\
Batch Size (DomainNet / SSB-C) & 128 / 150 \\
Weight Decay & $5\times 10^{-5}$ \\
\midrule
\textit{Model-specific} & \\
\quad $\epsilon_s$ & 2 \\
\quad $\epsilon_d$ & 0.1 \\
\quad $\tau$ & 0.07 \\
\quad $k$ & 20 \\
\midrule
\textit{Curriculum sampling} & \\
DomainNet & $r'=1, t'=80$ \\
SSB-C & $r_0=0, r'=0.05, t'=80$ \\
\bottomrule
\end{tabular}
}

\vspace{12pt}

\textbf{(b) VLPrompt} \\
\vspace{2pt}
\resizebox{\columnwidth}{!}{
\begin{tabular}{lc}
\toprule
\textbf{Setting} & \textbf{Value} \\
\midrule
\textit{Optimization} & \\
Optimizer & SGD \\
Epochs & 200 \\
Batch Size (DomainNet / SSB-C) & 128 / 150 \\
Weight Decay & $5\times 10^{-4}$ \\
\midrule
\textit{Model-specific} & \\
\quad $\epsilon$ & 2 \\
\quad $\beta_1$ & 0.5 \\
\quad $\beta_2$ & 0.4 \\
\quad $\tau$ & 0.007 \\
\quad $k$ & 20 \\
\midrule
\textit{Curriculum sampling} & \\
DomainNet & $r'=1, t'=80$ \\
SSB-C & $r_0=0, r'=0.05, t'=80$ \\
\bottomrule
\end{tabular}
}
\end{table}

\subsubsection{Training Algorithms}
\label{app:algorithms}

We next summarize the full training procedures of HLPrompt and VLPrompt for completeness. Both algorithms follow the same high-level principle of curriculum-based mini-batch sampling and alternating optimization, while differing in the underlying backbone, prompt parameterization, and cross-modal alignment components. The pseudo-code below makes explicit how the proposed modules are integrated into each training iteration.

\begin{styledalg}[float=t, label={alg:hilo}]{HLPrompt Training Algorithm}
\begin{algorithmic}[1]
\STATE \textbf{Input:} Labeled data $\mathcal{D}_\ell$, unlabeled data $\mathcal{D}_u$, hyperparameters $\epsilon_s, \epsilon_d, \tau, r_0, r', t', k$
\STATE \textbf{Initialize:} ViT encoder $\mathcal{T}$, projection heads $\psi_{\text{dom}}, \psi_{\text{sem}}$, discriminator $D$, spatial prompts $Q_s$.
\STATE Define parameter sets: $\Theta_{\text{model}} = \{ \mathcal{T}, \psi_{\text{dom}}, \psi_{\text{sem}}, D \}$, $\Theta_{\text{prompt}} = \{ Q_s \}$
\STATE Set optimization phase flag: $\text{phase} \leftarrow 1$

\FOR{epoch $t = 0$ to $T_{\text{max}}$}
    \FOR{mini-batch $b = 0$ to $B_{max}$}
        \IF{$(b + 1) \pmod k == 0$}
            \STATE $\text{phase} \leftarrow 1 - \text{phase}$
        \ENDIF

        \IF{$\text{phase} == 0$}
            \STATE $\Theta_{\text{active}} \leftarrow \Theta_{\text{model}}$; Freeze $\Theta_{\text{prompt}}$
        \ELSE
            \STATE $\Theta_{\text{active}} \leftarrow \Theta_{\text{prompt}}$; Freeze $\Theta_{\text{model}}$
        \ENDIF

        \STATE \textbf{Data Sampling:}
        \STATE Sample $\mathcal{B}_\ell \sim \mathcal{D}_\ell$ uniformly; $\mathcal{B}_u \sim \mathcal{D}_u$ with weights $p(\mathbf{x}|t)$

        \STATE \textbf{Forward Propagation:}
        \FOR{each $\mathbf{x} \in \mathcal{B}_\ell \cup \mathcal{B}_u$}
            \STATE Compute foreground mask $\mathbf{M}$ and apply prompts:
            \STATE \quad $\tilde{\mathbf{x}} = \mathbf{x} + \mathbf{M} \odot Q_s$
            \STATE Apply PatchMix augmentation to $\tilde{\mathbf{x}}$
            \STATE Extract features: $\mathbf{r}_{\text{dom}} = \mathcal{T}_1(\phi(\tilde{\mathbf{x}}))$, $\mathbf{r}_{\text{sem}} = \mathcal{T}_L(\phi(\tilde{\mathbf{x}}))$
            \STATE Project: $\mathbf{h}_{\text{dom}} = \psi_{\text{dom}}(\mathbf{r}_{\text{dom}})$, $\mathbf{h}_{\text{sem}} = \psi_{\text{sem}}(\mathbf{r}_{\text{sem}})$
        \ENDFOR

        \STATE \textbf{Optimization:}
        \STATE Compute loss: $\mathcal{L}_{\text{total}} = \mathcal{L}_{\text{s}} + \epsilon_{s} \Delta_{s} + \mathcal{L}_{\text{d}}+ \epsilon_{d} \Delta_{d} + \mathcal{L}_{\text{MI}} + \mathcal{L}_{\text{PM}}$
        \STATE Update $\Theta_{\text{active}}$ via gradient descent on $\nabla_{\Theta_{\text{active}}} \mathcal{L}_{\text{total}}$
    \ENDFOR
\ENDFOR
\STATE \textbf{Return:} Trained $\Theta_{\text{prompt}}$ and $\Theta_{\text{model}}$
\end{algorithmic}
\end{styledalg}

\begin{styledalg}[float=t, label={alg:vlprompt}]{VLPrompt Training Algorithm}
\begin{algorithmic}[1]
\STATE \textbf{Input:} Labeled data $\mathcal{D}_\ell$, unlabeled data $\mathcal{D}_u$, hyperparameters $\epsilon, \beta_1, \beta_2, \tau, r_0, r', t', k$
\STATE \textbf{Initialize:} CLIP's image encoder $\Phi^{(v)}$ and text encoder $\Phi^{(t)}$; Context $\Theta_{\text{ctx}}$, Category $\Gamma$; Discriminator $D$; Foreground prompt $Q_s$.
\STATE \textbf{Define Parameter Sets:}
\STATE \quad $\Theta_{\text{model}} = \{ \Phi, \Theta_{\text{ctx}}, \Gamma, D \}$
\STATE \quad $\Theta_{\text{prompt}} = \{ Q_s \}$
\STATE Set optimization phase flag: $\text{phase} \leftarrow 0$

\FOR{epoch $t = 0$ to $T_{\text{max}}$}
    \FOR{mini-batch $b = 0$ to $B_{max}$}
        \IF{$(b + 1) \pmod k == 0$}
            \STATE $\text{phase} \leftarrow 1 - \text{phase}$
        \ENDIF

        \IF{$\text{phase} == 0$}
            \STATE $\Theta_{\text{active}} \leftarrow \Theta_{\text{model}}$; Freeze $\Theta_{\text{prompt}}$
        \ELSE
            \STATE $\Theta_{\text{active}} \leftarrow \Theta_{\text{prompt}}$; Freeze $\Theta_{\text{model}}$
        \ENDIF

        \STATE \textbf{Data Sampling:}
        \STATE Sample $\mathcal{B}_\ell \sim \mathcal{D}_\ell$ uniformly; $\mathcal{B}_u \sim \mathcal{D}_u$ with weights $p_{cs}(\mathbf{x}|t)$

        \STATE \textbf{Forward Propagation:}
        \FOR{each $\mathbf{x} \in \mathcal{B}_\ell \cup \mathcal{B}_u$}
            \STATE Apply foreground mask $\mathbf{M}$ and inject prompts:
            \STATE \quad $\tilde{\phi}(\mathbf{x}) = \phi(\mathbf{x}) + \mathbf{M} \odot Q_s$
            \STATE Apply PatchMix augmentation to the prompted tokens $\tilde{\phi}(\mathbf{x})$
            \STATE Extract features: $\mathbf{r}_{\text{dom}} = \mathbf{W}\cdot\mathcal{F}_1(\tilde{\phi}(\mathbf{x}))$, $\mathbf{r}_{\text{sem}} = \mathbf{W}\cdot\mathcal{F}_L(\tilde{\phi}(\mathbf{x}))$
            \STATE Construct text embeddings: $\mathbf{S} = \text{concat}([\Theta_{\text{ctx}}, \Gamma])$
            \STATE Normalize: $\pi^{(v)} = \mathbf{r}_{\text{sem}} / \|\mathbf{r}_{\text{sem}}\|_2$, $\quad \pi^{(t)} = \mathbf{S} / \|\mathbf{S}\|_2$
            \STATE Compute matching score: $\rho = \pi^{(v)} \cdot \pi^{(t)}$
        \ENDFOR

        \STATE \textbf{Optimization:}
        \STATE Compute loss: $\mathcal{L}_{\text{total}} = \mathcal{L}^{cls} + \epsilon\Delta + \beta_1\mathcal{L}_{\text{MI}} + \beta_2\mathcal{L}^{vl} + \mathcal{L}_{\text{PM}}$
        \STATE Update $\Theta_{\text{active}}$ via gradient descent on $\nabla_{\Theta_{\text{active}}} \mathcal{L}_{\text{total}}$
    \ENDFOR
\ENDFOR
\STATE \textbf{Return:} Trained prompts $Q_s$, vision encoder $\Phi^{(v)}$, and embeddings $\Theta_{\text{ctx}}, \Gamma$
\end{algorithmic}
\end{styledalg}

\subsection{Additional Results}
\label{app:additional_ablations}

\subsubsection{Detailed Discussion of Component Ablations}
\label{app:ablation_discussion}

For HLPrompt on DomainNet-GCD Real$\to$Painting (Table~\ref{tab:hlprompt_ablation}, left), removing MI minimization reduces performance on most metrics, with the largest degradation on domain $\Omega^b$ (`Painting'), where the `New' metric drops by $2.6\%$. Removing PatchMix augmentation also consistently degrades performance, including a $4.3\%$ drop on `All' and a $4.9\%$ drop on `New' in $\Omega^b$, which highlights the benefit of intermediate-domain augmentation. Removing alternating training causes only minor changes on $\Omega^a$ (`Real') but large drops on $\Omega^b$, showing that it is important for effective spatial prompt adaptation. Removing SPT slightly improves $\Omega^a$ while substantially harming robustness on $\Omega^b$. Among SPT variants, semantic-aware prompts are preferable to boundary prompts on DomainNet-GCD.

For HLPrompt on CUB-C (Table~\ref{tab:hlprompt_ablation}, right), removing MI minimization causes severe degradation on all metrics. During training, performance on the `Original' domain can initially match the reference, but it deteriorates after corrupted samples are introduced, which indicates that semantic alignment collapses without the disentanglement constraint. Removing PatchMix leads to large drops across metrics, especially on corrupted samples. Removing alternating training also leads to severe degradation, and using spatial prompts without alternating training performs worse than removing spatial prompts entirely. On this benchmark, boundary prompts are more reliable than semantic-aware prompts.

For VLPrompt on DomainNet-GCD Real$\to$Painting (Table~\ref{tab:vlprompt_ablation}, left), MI minimization improves generalization to $\Omega^b$ with a $1.0\%$ gain on $\Omega^b$ `New'. The vision-language loss is the most important component for adapting VLMs to GCD, and removing it causes the largest drops on both domains. Cross-modal PatchMix improves generalization to $\Omega^b$ while causing only minor trade-offs on $\Omega^a$. Removing alternating training reduces performance on $\Omega^b$ by 3.6\% on average, and removing SPT produces even larger drops. Boundary SPT is consistently more reliable than semantic-aware SPT for VLPrompt.

For VLPrompt on CUB-C (Table~\ref{tab:vlprompt_ablation}, right), the overall pattern is similar. The vision-language loss remains the most critical component, cross-modal PatchMix improves performance across metrics, and both alternating training and SPT are essential for robustness on the shifted domain. Boundary SPT again outperforms semantic-aware SPT by a large margin, which is consistent with the supplemental figures below and the mask visualizations analyzed in Section~\ref{app:spt_analysis}.

\begin{figure}[!t]
    \centering
    \includegraphics[width=\linewidth]{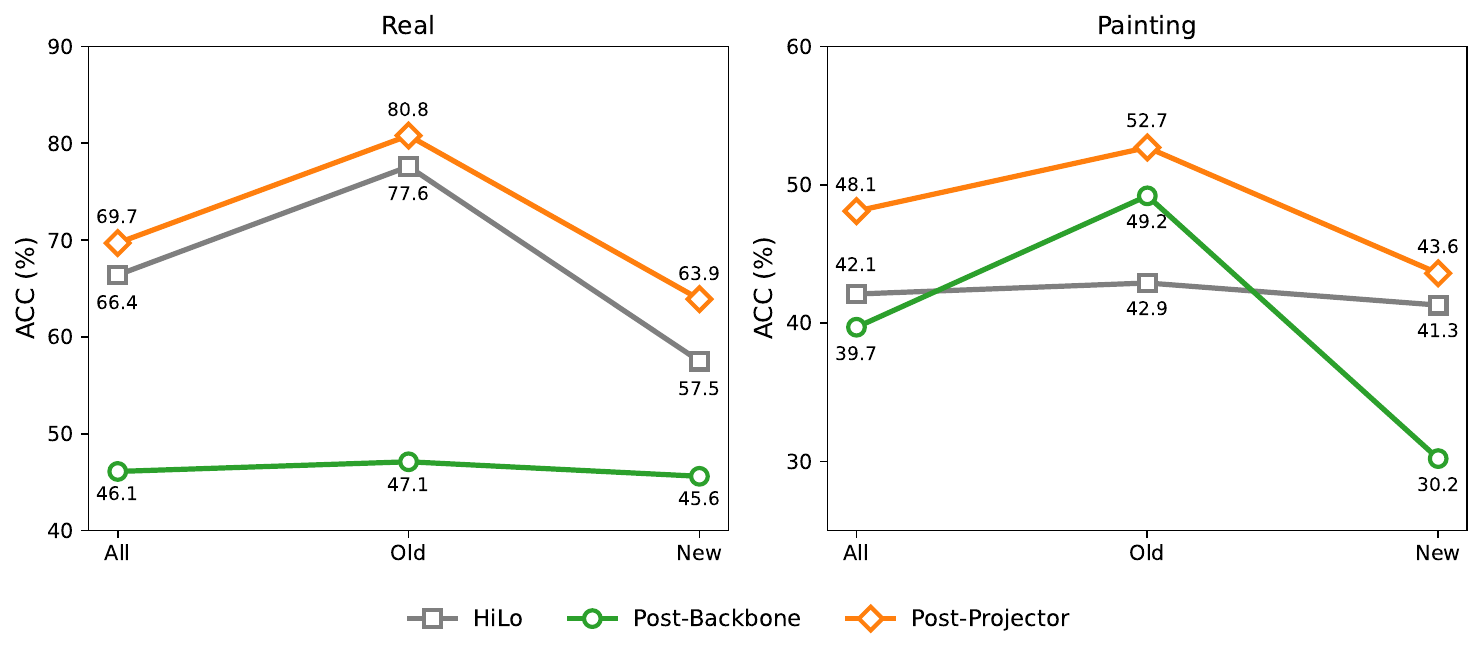}
    \caption{Ablation study on which feature representation for classification. We compare the performance of using features after the backbone (Post-Backbone) versus after the projection head (Post-Projector) on DomainNet-GCD 'Real' and 'Painting' domains. HiLo is included as a baseline.}
    \label{fig:feature_choice}
\end{figure}

\begin{figure}[!t]
    \centering
    \includegraphics[width=\linewidth]{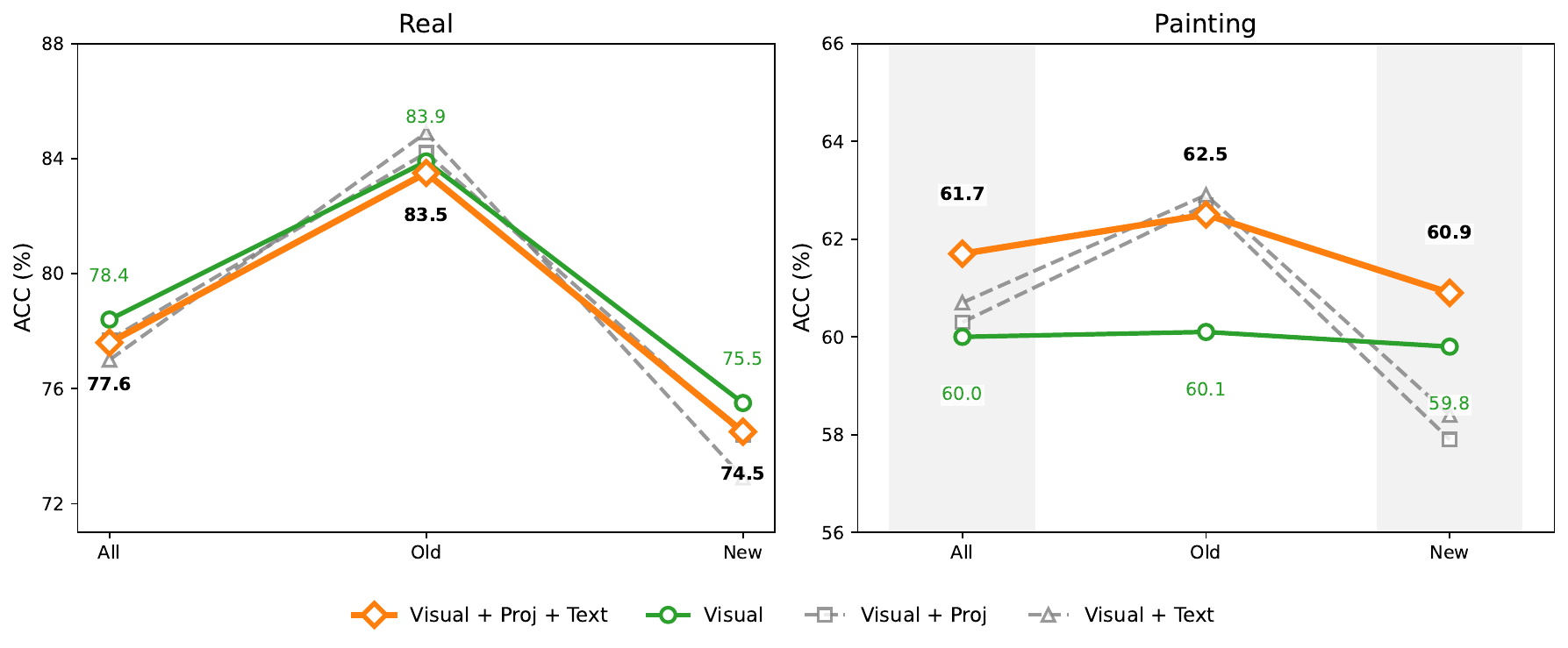}
    \caption{Comparison between which backbone parameters of the CLIP backbone to fine-tune on DomainNet-GCD 'Real' and 'Painting' domains.}
    \label{fig:backbone_parameters}
\end{figure}

\begin{figure*}[!t]
    \centering
    \includegraphics[width=\linewidth]{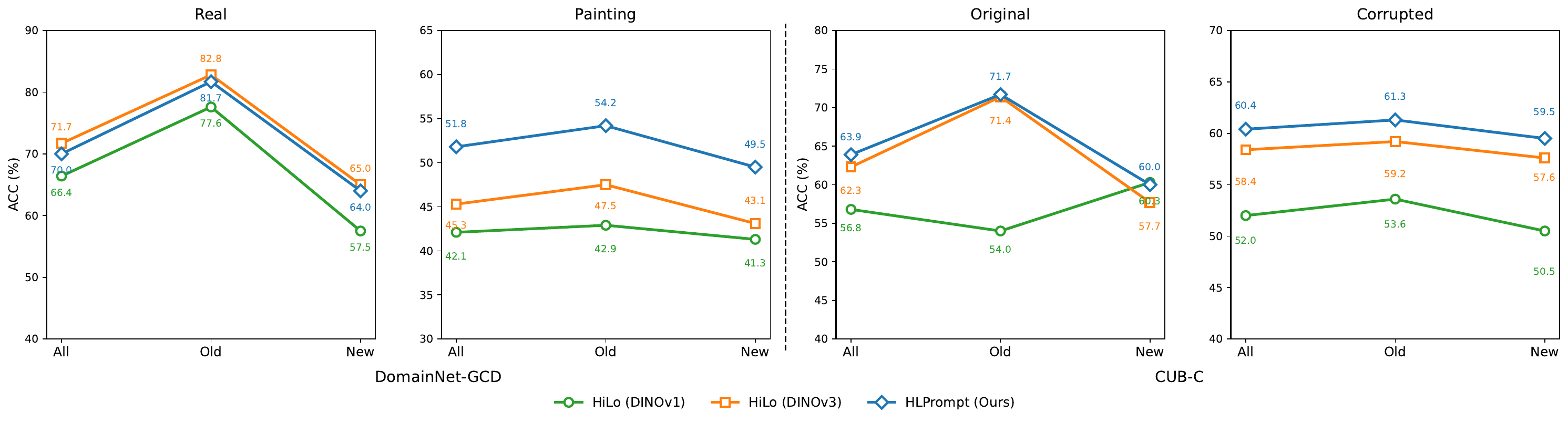}
    \caption{Comparison between DINOv1 and DINOv3 backbone. We compare the performance gain from direct replacement of a stronger backbone on the HiLo baseline on DomainNet-GCD Real-Painting and CUB-C. We include HLPrompt results to present further performance gain.}
    \label{fig:backbone_comparison}
\end{figure*}

\begin{figure}[!t]
    \centering
    \includegraphics[width=\linewidth]{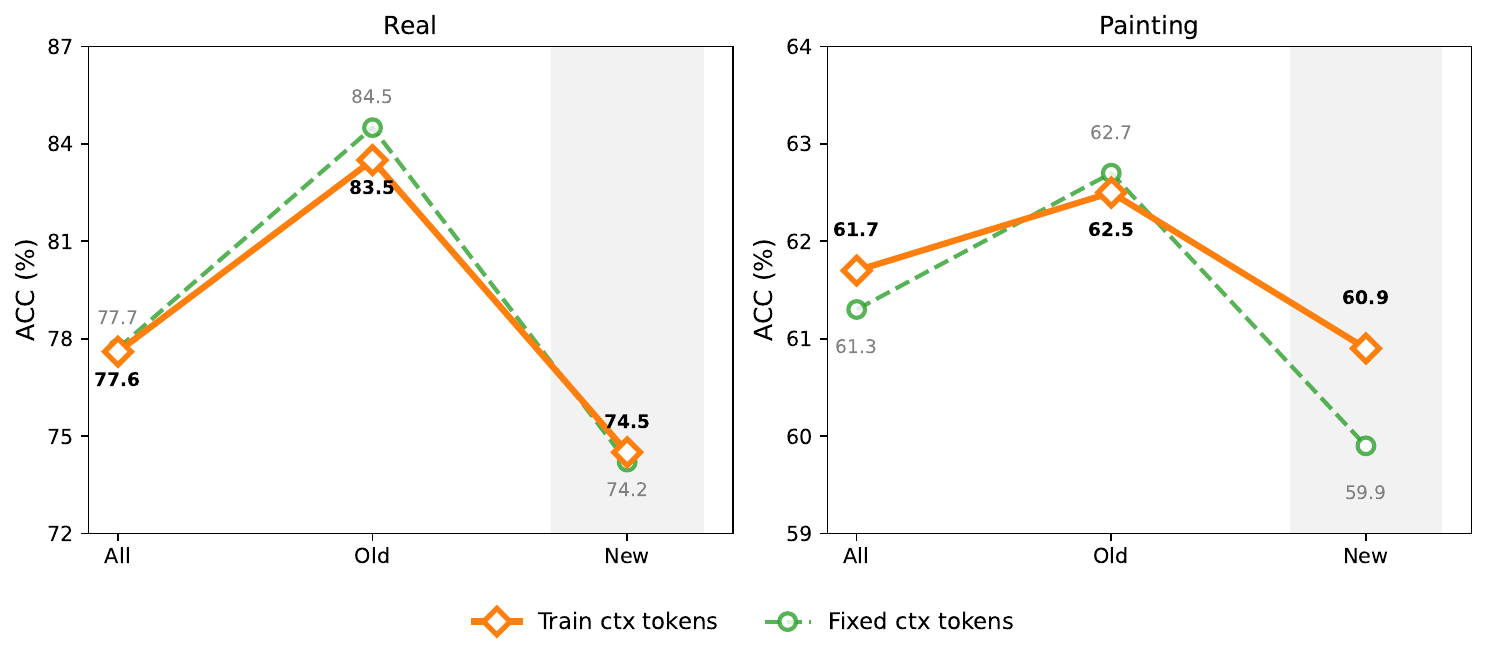}
    \caption{Comparison between fine-tuning and fixing textual context embeddings $\Theta_{\text{ctx}}$ on DomainNet-GCD 'Real' and 'Painting' domains.}
    \label{fig:train_ctx_or_not}
\end{figure}

\subsubsection{Analysis on the Effectiveness of Semantic-aware SPT}
\label{app:spt_analysis}

To understand the different behavior of semantic-aware SPT in HLPrompt and VLPrompt, we visualize the NCut~\cite{shi2000normalized}-based masks produced by DINOv3 ViT and CLIP ViT together with the final-layer [CLS] attention maps. DomainNet-GCD examples are shown in Fig.~\ref{fig:domainnet_mask_vis}, and CUB-C examples are shown in Fig.~\ref{fig:cubc_mask_vis}.

\begin{figure*}[!t]
    \centering
    \includegraphics[width=\linewidth]{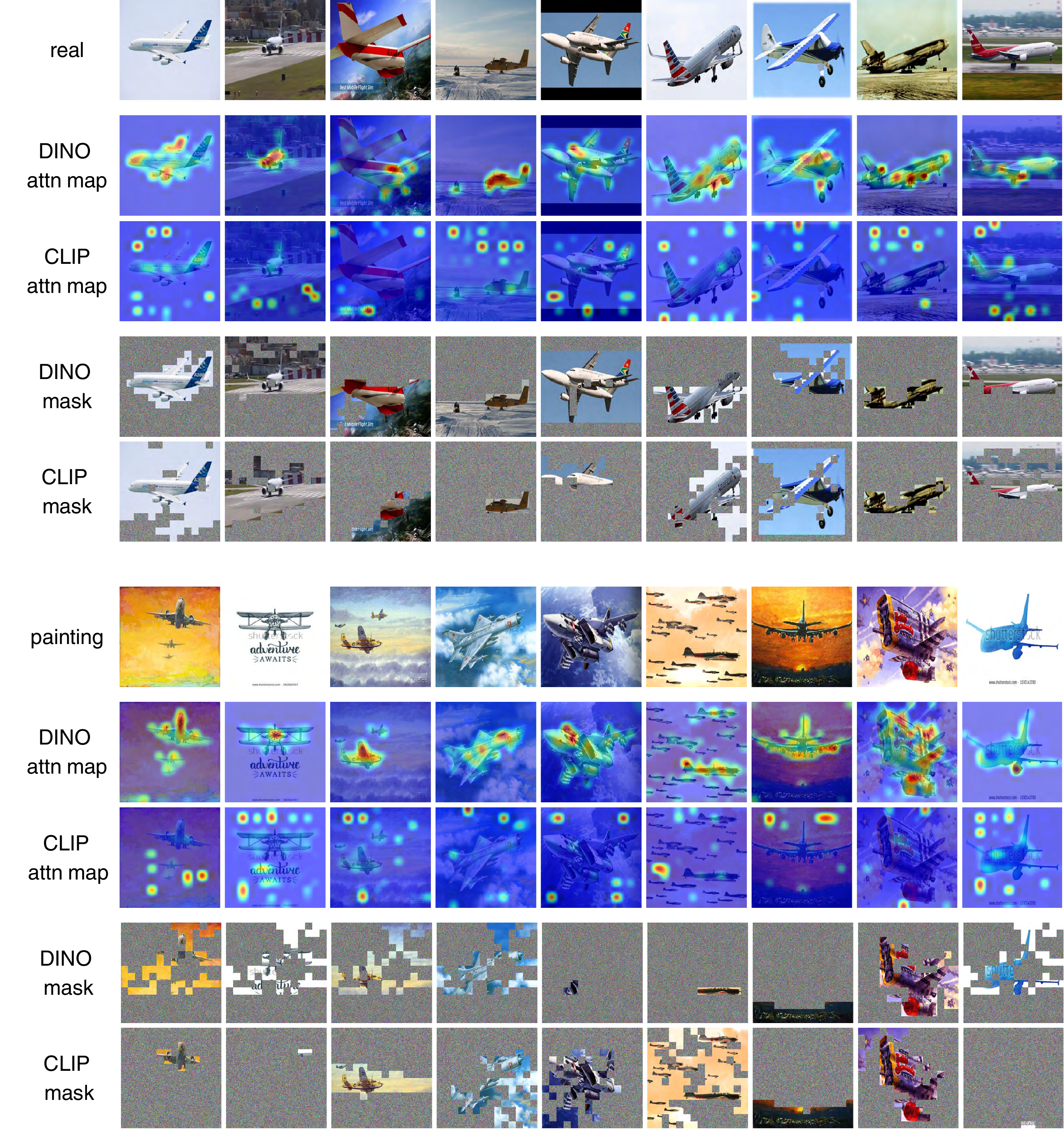}
    \caption{Visualization of last-layer [CLS] attention maps and NCut~\cite{shi2000normalized}-based foreground masks produced by DINOv3 ViT and CLIP ViT. Samples are randomly drawn from DomainNet-GCD with the \texttt{Real} domain (top row) and the \texttt{Painting} domain (bottom row).}
    \label{fig:domainnet_mask_vis}
\end{figure*}

\begin{figure*}[!t]
    \centering
    \includegraphics[width=\linewidth]{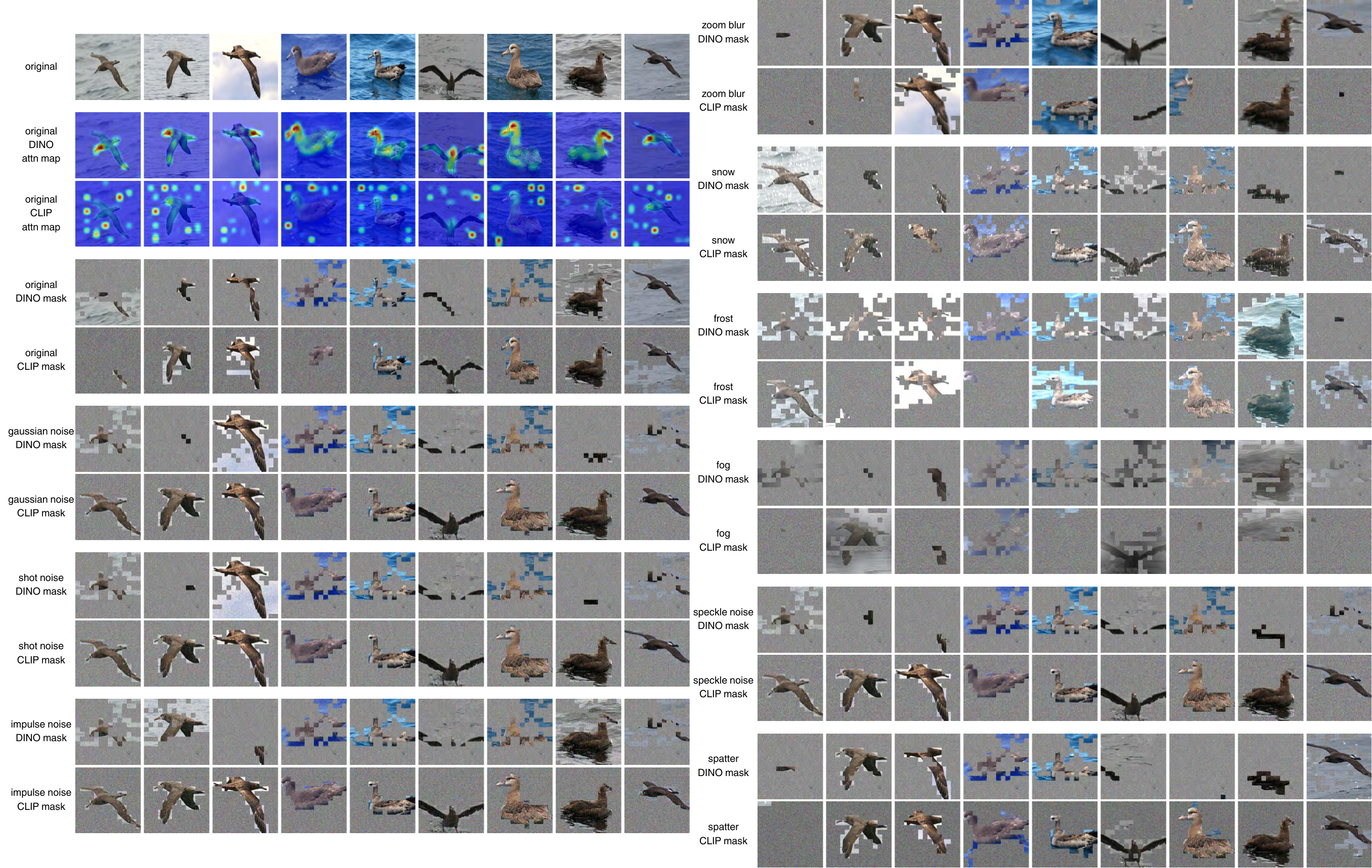}
    \caption{Visualization of last-layer [CLS] attention maps and NCut~\cite{shi2000normalized}-based foreground masks produced by DINOv3 ViT and CLIP ViT. Samples are randomly drawn from CUB-C and cover all nine corruption types.}
    \label{fig:cubc_mask_vis}
\end{figure*}

On DomainNet-GCD, both backbones produce masks of comparable visual quality, but their attention patterns differ. DINO concentrates attention on the foreground object, which makes foreground-focused prompting compatible with the pretrained representation. CLIP distributes attention more broadly across the image in order to preserve global information for cross-modal alignment. As a result, masking and prompting large foreground or background regions can disturb the image-text geometry more strongly in VLPrompt, which explains why boundary prompting is more reliable in that setting.

On CUB-C, CLIP often produces visually cleaner masks than DINO under severe corruptions. However, this does not make semantic-aware SPT preferable for VLPrompt. The same strong masks can remove or alter image regions that contribute to CLIP's global alignment behavior. For HLPrompt, the results also indicate that semantic-aware prompting becomes less reliable when corruptions distort the object itself. These observations explain why boundary prompting becomes the more robust choice on CUB-C for both methods.

\subsection{Formal Definitions and Assumptions}
\label{app:formal_defs}

Before presenting our theoretical results, we formalize the problem setup and state the assumptions used throughout the paper.
These definitions precisely specify the GCD problem with domain shifts and the conditions under which our theoretical guarantees hold.

\begin{definition}[Domain]
A domain $\omega \in \Omega$ is characterized by a conditional distribution $P(\mathbf{x}|y, \omega)$ over inputs given labels. Two domains $\omega, \omega' \in \Omega$ differ if $P(\mathbf{x}|y, \omega) \neq P(\mathbf{x}|y, \omega')$ for some $y \in \mathcal{Y}$.
\end{definition}

\begin{definition}[GCD with Domain Shifts]
We are given a labelled dataset $\mathcal{D}_{\ell} = \{(\mathbf{x}_i, y_i)\}_{i=1}^{N_\ell}$ where each image $\mathbf{x}_i$ is drawn from $P(\mathbf{x}|y_i, \omega^a)$ for a fixed labelled domain $\omega^a \in \Omega^{a}$, with labels $y_i$ belonging to the base category set $\mathcal{Y}_{\text{base}} \subset \mathcal{Y}$.
We are also given an unlabelled mixed dataset $\mathcal{D}_{u} = \{\mathbf{x}_i\}_{i=1}^{N_u}$ where images $\mathbf{x}_i$ are drawn from $P(\mathbf{x}|y_i, \omega_i)$ with domains $\omega_i$ coming from either the labelled domain set $\Omega^{a}$ or a disjoint set of \emph{new domains} $\Omega^{b}$ (i.e., $\Omega^{a} \cap \Omega^{b} = \emptyset$), and with labels $y_i$ from either $\mathcal{Y}_{\text{base}}$ or $\mathcal{Y}_{\text{novel}}$ where $\mathcal{Y}_{\text{base}} \cap \mathcal{Y}_{\text{novel}} = \emptyset$.
The objective is to learn a classifier $f: \mathcal{X} \rightarrow \mathcal{Y}_{\text{base}} \cup \mathcal{Y}_{\text{novel}}$ that minimizes the expected error on unlabelled data:
$\min_f \mathbb{E}_{(\mathbf{x}, y, \omega) \sim P_{u}}[\mathds{1}(f(\mathbf{x}) \neq y)]$,
where $P_u$ denotes the joint distribution over unlabelled samples, their true labels, and their domains.
\end{definition}

\begin{assumption}[Domain-Class Independence]
\label{assume:independence}
The class label distribution is domain-invariant: $P(y|\omega) = P(y)$ for all $\omega \in \Omega$.
This ensures that domain shifts are purely covariate shifts in $P(\mathbf{x}|y, \omega)$ rather than semantic shifts.
\end{assumption}

\begin{assumption}[Semantic Transferability]
\label{assume:transfer}
Features useful for discriminating classes in $\mathcal{Y}_{\text{base}}$ are also useful for discriminating classes in $\mathcal{Y}_{\text{novel}}$.
Formally, there exists a feature extractor $\phi: \mathcal{X} \rightarrow \mathcal{Z}$ such that if $\phi$ separates $\mathcal{Y}_{\text{base}}$ in feature space $\mathcal{Z}$, it also separates $\mathcal{Y}_{\text{novel}}$.
\end{assumption}

\begin{assumption}[Sufficient Diversity]
\label{assume:diversity}
$|\mathcal{Y}_{\text{base}}| \geq K_{\min}$ for some minimum $K_{\min}$ to learn generalizable features, and unlabelled data contains sufficient samples per novel class for clustering: $\min_{y \in \mathcal{Y}_{\text{novel}}} |\{\mathbf{x} : y(\mathbf{x}) = y, \mathbf{x} \in \mathcal{D}_u\}| \geq M_{\min}$.
\end{assumption}

\subsection{Proofs and Technical Details}
\label{app:proofs}

This section provides the detailed proofs and technical analyses referenced in the main paper.
Each subsection begins with a brief motivation that explains how the result connects to the method design.

\subsubsection{Pure-Vision HiLo: Target-Risk Bounds via \texorpdfstring{$\mathcal{A}$}{A}-Distance}
\label{app:pv_hilo_bounds}

A core premise of HiLo is that disentangling domain-specific from semantic features reduces the distributional gap between labelled and unlabelled data, thereby improving classification on the unlabelled (target) set.
To make this argument precise, we appeal to the $\mathcal{A}$-distance framework of Ben-David et al.~\cite{ben2006analysis,ben2010theory}, which provides upper bounds on target risk in terms of source risk and a measurable domain divergence.
The results below formalize the intuition that smaller domain divergence---achieved by HiLo's disentanglement---leads to tighter target-risk guarantees.

Let $\mathcal{D}$ be an open-world dataset consisting of a labelled set
$\mathcal{D}^l=\{ (\boldsymbol{x}_i, y_i)\}_{i=1}^{N_l} \subset \mathcal{X}^l \times \mathcal{Y}^l$
and an unlabelled set $\mathcal{D}^u=\{\boldsymbol{x}_i\}_{i=1}^{N_u} \subset \mathcal{X}^u$.
We view $\mathcal{D}^l$ as samples drawn from a \emph{source-domain} distribution with density $p_s(X,Y)$,
and $\mathcal{D}^u$ as drawn from an \emph{unlabelled mixture} that includes both $p_s(X)$ and a target-domain marginal $p_t(X)$.
Let $g$ be a hypothesis (classifier) from a hypothesis class $\mathbb{G}$ and let $\ell$ denote the $0$--$1$ loss.
The \emph{target error} is defined by
\begin{equation}
e_{\mathcal{D}^u}(g) \coloneqq \mathbb{E}_{(x, y) \sim \mathcal{D}^u}[\ell(g(x), y)].
\label{eq:pv_targetrisk}
\end{equation}
More generally, if labels are induced by a (possibly unknown) labelling function $g'$, we define
\begin{equation}
e_{\mathcal{D}^u}(g,g') \coloneqq \mathbb{E}_{x \sim \mathcal{D}^u}[\ell(g(x), g'(x))].
\label{eq:pv_targetrisk_generalized}
\end{equation}
Similarly, the corresponding source errors are $e_{\mathcal{D}^l}(g)$ and $e_{\mathcal{D}^l}(g,g')$.

Following Ben-David et al.~\cite{ben2006analysis}, the $\mathcal{A}$-distance between two distributions
$\mathcal{D}_1$ and $\mathcal{D}_2$ is
\begin{equation}
\begin{aligned}
d_{\mathbb{G}}(\mathcal{D}_1, \mathcal{D}_2)
&= 2 \sup_{g \in \mathbb{G}}
\left| \Pr_{\mathcal{D}_1}[g(x) = 1] - \Pr_{\mathcal{D}_2}[g(x) = 1] \right|.
\end{aligned}
\label{eq:a_distance_def}
\end{equation}

\begin{lemma}[Empirical $\mathcal{A}$-distance concentration]
\label{lem:pv_a_distance_concentration}
Consider a symmetric hypothesis class $\mathbb{G}$ defined on $\mathcal{X}$ with VC dimension $d$.
Let $\Omega^a$ and $\Omega^b$ be sample sets drawn from domains $\mathcal{D}_1$ and $\mathcal{D}_2$, respectively, and
let $\hat{d}_{\mathbb{G}}(\Omega^a, \Omega^b)$ denote the empirical $\mathcal{A}$-distance between these samples.
Denote the empirical probability over a finite sample set $\Omega$ by
$\widehat{\Pr}_{\Omega}[A] \triangleq \frac{1}{|\Omega|}\sum_{x\in\Omega}\mathds{1}[A]$.
For any $\delta \in (0,1)$, with probability at least $1-\delta$,
\begin{equation}
\begin{aligned}
d_{\mathbb{G}}(\mathcal{D}_1,\mathcal{D}_2)
&\le 2\Bigg(1-\min_{g\in\mathbb{G}}\Big[ \widehat{\Pr}_{\Omega^a}[g(x)=0] \\
&\qquad\qquad\qquad + \widehat{\Pr}_{\Omega^b}[g(x)=1] \Big]\Bigg) \\
&\quad + 4 \max \Bigg\{ \frac{\sqrt{d \log(2|\Omega^a|) + \log(\delta/2)}}{\sqrt{|\Omega^a|}}, \\
&\qquad\qquad\qquad \frac{\sqrt{d \log(2|\Omega^b|) + \log(\delta/2)}}{\sqrt{|\Omega^b|}} \Bigg\}.
\end{aligned}
\label{eq:pv_a_distance_conc}
\end{equation}
\end{lemma}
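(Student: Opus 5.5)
The plan is to prove the bound by combining a purely algebraic rewriting of the empirical $\mathcal{A}$-distance with a uniform-convergence (VC) bound for each of the two sample sets, joined by a union bound and the triangle inequality. This is the classical argument of Ben-David et al.~\cite{ben2006analysis}, and I would follow its structure.

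\textbf{Step 1 (the empirical identity).} Define $\hat d_{\mathbb{G}}(\Omega^a,\Omega^b)=2\sup_{g\in\mathbb{G}}|\widehat{\Pr}_{\Omega^a}[g(x)=1]-\widehat{\Pr}_{\Omega^b}[g(x)=1]|$. Fix $g$ and write $\widehat{\Pr}_{\Omega^a}[g=1]=1-\widehat{\Pr}_{\Omega^a}[g=0]$. Then
\[
\widehat{\Pr}_{\Omega^a}[g=1]-\widehat{\Pr}_{\Omega^b}[g=1]=1-\big(\widehat{\Pr}_{\Omega^a}[g=0]+\widehat{\Pr}_{\Omega^b}[g=1]\big).
\]
Symmetry of $\mathbb{G}$ means $1-g\in\mathbb{G}$ whenever $g\in\mathbb{G}$. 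Replacing $g$ by $1-g$ flips the sign of the difference, so the absolute value can be dropped inside the supremum. This gives
\[
\hat d_{\mathbb{G}}=2\Big(1-\min_{g}\big[\widehat{\Pr}_{\Omega^a}[g=0]+\widehat{\Pr}_{\Omega^b}[g=1]\big]\Big),
\]
which is the first term on the right-hand side. (If the minimum is not attained, I would use an infimum and a limiting argument.)

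\textbf{Step 2 (uniform convergence on each sample).} Apply the VC inequality with Sauer's lemma, so that the growth function of $\mathbb{G}$ on $m$ points is at most $(2m)^d$, together with symmetrization. This yields, with probability at least $1-\delta/2$,
\[
\sup_{g}\big|\Pr_{\mathcal{D}_1}[g=1]-\widehat{\Pr}_{\Omega^a}[g=1]\big|\le \varepsilon_a,\qquad \varepsilon_a=c\sqrt{\frac{d\log(2|\Omega^a|)+\log(2/\delta)}{|\Omega^a|}},
\]
and the analogous statement with $\varepsilon_b$ for $\Omega^b$ and $\mathcal{D}_2$. A union bound makes both events hold simultaneously with probability at least $1-\delta$.

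\textbf{Step 3 (combine).} On this event, for every $g\in\mathbb{G}$ the triangle inequality gives
\[
|\Pr_{\mathcal{D}_1}[g=1]-\Pr_{\mathcal{D}_2}[g=1]|\le|\widehat{\Pr}_{\Omega^a}[g=1]-\widehat{\Pr}_{\Omega^b}[g=1]|+\varepsilon_a+\varepsilon_b.
\]
Take the supremum over $g$ and multiply by $2$. Then bound $2(\varepsilon_a+\varepsilon_b)\le 4\max\{\varepsilon_a,\varepsilon_b\}$. Substituting Step 1 for the empirical term yields the stated inequality.

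\textbf{Main obstacle.} The only delicate point is Step 2: obtaining exactly the displayed deviation form, with constant absorbed into the factor $4$ and the confidence term written as in the statement. Standard VC bounds carry extra constants, such as a factor $8$ inside the square root or $4/\delta$ in place of $2/\delta$. I would first cite the version used by Ben-David et al.~\cite{ben2006analysis}, which has precisely this shape. I would also note that the term $\log(\delta/2)$ must be read as $\log(2/\delta)$ for the bound to be meaningful, or equivalently as a slightly looser absolute constant.
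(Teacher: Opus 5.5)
Your proposal follows the paper's proof: first rewrite the empirical $\mathcal{A}$-distance into the min form using the complement event and the symmetry of $\mathbb{G}$, then apply VC uniform convergence with a union bound over the two samples. The paper gets the exact factor $4$ by citing Theorem~3.4 of Kifer et al.~\cite{kifer2004detecting}, which packages your Steps~2--3; your caveats that your generic constant $c$ cannot produce that factor on its own and that $\log(\delta/2)$ must read $\log(2/\delta)$ are correct points the paper's proof passes over.
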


\begin{proof}
By definition, the empirical $\mathcal{A}$-distance is
\begin{equation}
\begin{aligned}
\hat{d}_{\mathbb{G}}(\Omega^a, \Omega^b)
&= 2 \sup_{g \in \mathbb{G}}
\left| \Pr_{\Omega^a}(g(x)=1) - \Pr_{\Omega^b}(g(x)=1) \right|,
\end{aligned}
\end{equation}
where $\Pr_{\Omega^a}(g(x)=1)=\frac{1}{|\Omega^a|}\sum_{x\in\Omega^a}\mathds{1}(g(x)=1)$ and similarly for $\Omega^b$.
Using the complement event and symmetry of $\mathbb{G}$ (as in~\cite{kifer2004detecting,ben2006analysis}),
the supremum can be rewritten into the equivalent minimization form in~\eqref{eq:pv_a_distance_conc}.
Finally, applying uniform convergence bounds for VC classes (Theorem 3.4 in~\cite{kifer2004detecting})
and a union bound over the two samples yields the stated additive complexity terms.
\end{proof}

\begin{theorem}[Target-risk upper bound (vision-only HiLo)]
\label{thm:pv_target_bound}
Let $\mathbb{G}$ be a symmetric hypothesis class on $\mathcal{X}$ with VC dimension $d$.
Let $\Omega^a$ and $\Omega^b$ be sample sets drawn from domains $\mathcal{D}_1$ and $\mathcal{D}_2$.
Then for any $\delta\in(0,1)$, with probability at least $1-\delta$,
\begin{equation}
\begin{aligned}
e_{\mathcal{D}^u}(g)
&\leq e_{\mathcal{D}^l}(g) \\
&\quad + 2\Bigg(1-\min_{g\in\mathbb{G}}\Big[
\widehat{\Pr}_{\Omega^a}[g(x)=0]
 + \widehat{\Pr}_{\Omega^b}[g(x)=1]
\Big]\Bigg) \\
&\quad + 2 \max \Bigg\{
\frac{\sqrt{d \log(2|\Omega^a|) + \log(\delta/2)}}{\sqrt{|\Omega^a|}}, \\
&\quad
\frac{\sqrt{d \log(2|\Omega^b|) + \log(\delta/2)}}{\sqrt{|\Omega^b|}}
\Bigg\}.
\end{aligned}
\label{eq:pv_target_bound}
\end{equation}
\end{theorem}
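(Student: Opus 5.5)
This bound follows the standard Ben-David et al. template: relate target error to source error through the $\mathcal{A}$-distance, then replace the population $\mathcal{A}$-distance by its empirical estimate using Lemma~\ref{lem:pv_a_distance_concentration}.

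\textbf{Step 1 (triangle inequality).} Let $g^\ast$ be the ideal joint hypothesis minimizing $e_{\mathcal{D}^l}(g)+e_{\mathcal{D}^u}(g)$, with combined error $\lambda^\ast$. Because the $0$--$1$ loss satisfies the triangle inequality, we have
\[
e_{\mathcal{D}^u}(g)\le e_{\mathcal{D}^u}(g^\ast)+e_{\mathcal{D}^u}(g,g^\ast).
\]
Applying the triangle inequality again on the source side,
\[
e_{\mathcal{D}^l}(g,g^\ast)\le e_{\mathcal{D}^l}(g)+e_{\mathcal{D}^l}(g^\ast).
\]

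\textbf{Step 2 (disagreement gap to $\mathcal{A}$-distance).} Consider the disagreement set $\{x: g(x)\neq g^\ast(x)\}$. It is the positive region of a hypothesis in the symmetric-difference class $\mathbb{G}\Delta\mathbb{G}$. Therefore
\[
|e_{\mathcal{D}^u}(g,g^\ast)-e_{\mathcal{D}^l}(g,g^\ast)|\le \tfrac12 d_{\mathbb{G}\Delta\mathbb{G}}(\mathcal{D}_1,\mathcal{D}_2).
\]
Here the labelled and unlabelled marginals are identified with $\mathcal{D}_1$ and $\mathcal{D}_2$, which generate $\Omega^a$ and $\Omega^b$. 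Chaining with Step~1 gives
\[
e_{\mathcal{D}^u}(g)\le e_{\mathcal{D}^l}(g)+\tfrac12 d(\mathcal{D}_1,\mathcal{D}_2)+\lambda^\ast.
\]

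\textbf{Step 3 (plug in the lemma).} Substitute Lemma~\ref{lem:pv_a_distance_concentration} for $d(\mathcal{D}_1,\mathcal{D}_2)$. Halving the lemma's bound turns the empirical term $2(1-\min[\cdots])$ into $1-\min[\cdots]$ and the complexity term $4\max\{\cdots\}$ into $2\max\{\cdots\}$. The stated complexity coefficient $2$ therefore matches exactly.

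For the empirical term, the statement keeps the coefficient $2$ rather than $1$. This is still valid, since $1-\min[\cdots]\ge 0$ means enlarging the coefficient only loosens the bound. The extra slack of $1-\min[\cdots]$ must then be used to absorb $\lambda^\ast$.

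\textbf{Main obstacle.} The theorem has no explicit $\lambda^\ast$ term, and it is stated for $\mathbb{G}$ rather than $\mathbb{G}\Delta\mathbb{G}$. My plan is to handle both as follows.

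1. Invoke Assumption~\ref{assume:independence} (covariate shift, i.e.\ a shared labelling function). This lets us take the labelling function $g'$ itself in place of $g^\ast$, so the residual reduces to $e_{\mathcal{D}^l}(g')+e_{\mathcal{D}^u}(g')$.
2. Assume realizability in $\mathbb{G}$ (the labelling function lies in $\mathbb{G}$). Then this residual vanishes.
3. Either treat $\mathbb{G}$ as closed under symmetric difference, or accept a VC dimension up to $2d$ absorbed into the constants. The factor-$2$ slack on the empirical term can cover any remaining constants.

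If realizability is deemed too strong, I would instead state the bound with $\lambda^\ast$ implicit in the convention of~\cite{ben2006analysis} and note this explicitly. The rest is routine bookkeeping of the union bound over the two samples, which is already handled inside the lemma, so the overall probability stays $1-\delta$.
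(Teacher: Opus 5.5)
Steps~1--3 are correct and follow the paper's proof exactly: derive $e_{\mathcal{D}^u}(g)\le e_{\mathcal{D}^l}(g)+\tfrac12 d+\lambda^*$, then halve Lemma~\ref{lem:pv_a_distance_concentration}. The paper gets past the two obstacles you identify only by asserting $d_{\mathbb{G}}(\mathcal{D}_1,\mathcal{D}_2)=2\sup_{g_1,g_2\in\mathbb{G}}|e_{\mathcal{D}_1}(g_1,g_2)-e_{\mathcal{D}_2}(g_1,g_2)|$ and by ``absorbing $\lambda^*$ into constants''. Neither step is valid in general, and the statement as written is false without extra hypotheses. So everything rests on your repairs, and several of them fail.

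\textbf{The slack absorbs nothing.} The slack $1-\min[\cdots]$ equals $\tfrac12\hat d_{\mathbb{G}}(\Omega^a,\Omega^b)$. When the two marginals coincide, this is with high probability $O(\sqrt{d\log m/m})$, where $m=|\Omega^a|=|\Omega^b|$. So it cannot absorb $\lambda^*$ or any ``remaining constants''. Concretely, take $\mathcal{D}_1=\mathcal{D}_2$ with opposite labelling functions on the two domains, and let $g$ be the source labelling. Then the source error is $0$ and the target error is $1$, while the right-hand side tends to $0$.

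\textbf{Assumption~\ref{assume:independence} is not covariate shift.} It only gives $P(y\mid\omega)=P(y)$. Since $P(\mathbf{x}\mid y,\omega)$ may change with $\omega$ (e.g.\ swap the two class-conditionals between domains), $P(y\mid\mathbf{x},\omega)$ can change too. So it does not supply a shared labelling function; that has to be assumed outright, alongside realizability.

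\textbf{The ``VC dimension $2d$'' branch misses the real problem.} The issue is not a constant factor. The theorem's empirical term is still a minimum over $\mathbb{G}$, and $d_{\mathbb{G}}$ can vanish while $d_{\mathbb{G}\Delta\mathbb{G}}$ is maximal, even under realizability and a shared labelling. Take $\mathcal{X}=\{1,2,3,4\}$, $A=\{1,2\}$, $B=\{1,3\}$, and $\mathbb{G}=\{\mathds{1}_A,\mathds{1}_{A^c},\mathds{1}_B,\mathds{1}_{B^c}\}$, which is symmetric with VC dimension $2$. Let $\mathcal{D}_1$ be uniform on $\{1,4\}$ and $\mathcal{D}_2$ uniform on $\{2,3\}$. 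Every set in $\mathbb{G}$ has mass $\tfrac12$ under both distributions, so $d_{\mathbb{G}}=0$ and with high probability the right-hand side is $O(\sqrt{\log m/m})$. Yet with $f=\mathds{1}_A$ labelling both domains and $g=\mathds{1}_B$, the source error is $\mathcal{D}_1(A\Delta B)=0$ and the target error is $\mathcal{D}_2(A\Delta B)=1$.

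\textbf{What actually works.} The only branch of your plan that goes through assumes a common deterministic labelling function lying in $\mathbb{G}$, together with $\mathbb{G}\Delta\mathbb{G}\subseteq\mathbb{G}$. Then $\lambda^*=0$ and $d_{\mathbb{G}\Delta\mathbb{G}}\le d_{\mathbb{G}}$; the latter is precisely what makes the paper's displayed identity true. Halving the lemma then gives the bound with coefficient $1$ on the empirical term, hence the stated one. If you do not want these hypotheses, keep $\lambda^*$ explicit and replace the empirical $\mathbb{G}$-term by $\hat d_{\mathbb{G}\Delta\mathbb{G}}$, as in Ben-David et al. In either case, state the added hypotheses explicitly and drop the claims that the factor-$2$ slack absorbs anything.
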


\begin{proof}
The $\mathcal{A}$-distance can be written as $d_{\mathbb{G}}(\mathcal{D}_1,\mathcal{D}_2)=2\sup_{g_1,g_2\in\mathbb{G}}|e_{\mathcal{D}_1}(g_1,g_2)-e_{\mathcal{D}_2}(g_1,g_2)|$ (see~\cite{ben2006analysis}).
Applying the domain adaptation risk bound from~\cite{ben2006analysis,ben2010theory} yields
$e_{\mathcal{D}^u}(g)\le e_{\mathcal{D}^l}(g) + \frac{1}{2}d_{\mathbb{G}}(\mathcal{D}^l,\mathcal{D}^u) + \lambda^*$,
where $\lambda^*$ is the optimal joint error term (small when an ideal classifier performs well on both domains).
Bounding $d_{\mathbb{G}}(\mathcal{D}^l,\mathcal{D}^u)$ via Lemma~\ref{lem:pv_a_distance_concentration} (and absorbing $\lambda^*$ into constants, as customary in this appendix discussion)
gives~\eqref{eq:pv_target_bound}.
\end{proof}

\begin{lemma}[$\mathcal{A}$-distance vs. total variation]
\label{lem:pv_a_distance_tv}
For the hypothesis class $\mathbb{G}$,
\begin{equation}
d_{\mathbb{G}}(\mathcal{D}_1, \mathcal{D}_2) \leq 2 \| \mathcal{D}_1 - \mathcal{D}_2 \|_{TV}.
\end{equation}
\end{lemma}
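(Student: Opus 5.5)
The plan is to unfold the definition of the $\mathcal{A}$-distance in \eqref{eq:a_distance_def} and bound each term inside the supremum by the total variation distance, uniformly over $g$. No earlier lemma is needed beyond this definition; the argument is measure-theoretic and does not use the VC dimension or the symmetry of $\mathbb{G}$.

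First, for a fixed $g\in\mathbb{G}$, I define the measurable set $A_g=\{x\in\mathcal{X}: g(x)=1\}$. Measurability holds because $g$ is a binary classifier, assumed measurable as is implicit in defining the errors $e_{\mathcal{D}}(g)$. Then
\[
\Pr_{\mathcal{D}_1}[g(x)=1]-\Pr_{\mathcal{D}_2}[g(x)=1]=\mathcal{D}_1(A_g)-\mathcal{D}_2(A_g).
\]

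Second, I use the variational characterization of total variation: $\|\mathcal{D}_1-\mathcal{D}_2\|_{TV}=\sup_{A}|\mathcal{D}_1(A)-\mathcal{D}_2(A)|$, with the supremum over all measurable $A$. This gives $|\mathcal{D}_1(A_g)-\mathcal{D}_2(A_g)|\le\|\mathcal{D}_1-\mathcal{D}_2\|_{TV}$ for every $g$. Taking the supremum over $g\in\mathbb{G}$ only shrinks the family of sets, so
\[
\sup_{g\in\mathbb{G}}\bigl|\mathcal{D}_1(A_g)-\mathcal{D}_2(A_g)\bigr|\le\|\mathcal{D}_1-\mathcal{D}_2\|_{TV}.
\]
Multiplying by $2$ then yields the claim. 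Equality holds when $\mathbb{G}$ is rich enough to realize the optimal set $\{p_1>p_2\}$ (in densities).

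The only real obstacle is normalization. If total variation is instead defined as $\int|p_1-p_2|$, which is twice the sup-over-sets form, the inequality still holds and is looser by a factor of $2$. I would state the sup-over-sets convention explicitly, noting that for densities $\sup_A|\mathcal{D}_1(A)-\mathcal{D}_2(A)|=\tfrac12\int|p_1-p_2|$, so the lemma holds under either convention.
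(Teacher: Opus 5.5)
Your proposal is correct and follows the paper's proof essentially verbatim: set $A_g=\{x: g(x)=1\}$, bound $|\mathcal{D}_1(A_g)-\mathcal{D}_2(A_g)|$ by the sup-over-sets form of total variation, then take the supremum over $g\in\mathbb{G}$ and multiply by $2$. Your added remarks on measurability and on the $\int|p_1-p_2|$ normalization (under which the bound only becomes looser) are correct but not needed for the argument.
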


\begin{proof}
Recall $\| \mathcal{D}_1 - \mathcal{D}_2 \|_{TV} = \sup_{A} |\mathcal{D}_1(A) - \mathcal{D}_2(A)|$.
For any $g\in\mathbb{G}$, define $A_g=\{x\mid g(x)=1\}$, then
$|\Pr_{\mathcal{D}_1}[g(x)=1]-\Pr_{\mathcal{D}_2}[g(x)=1]| = |\mathcal{D}_1(A_g)-\mathcal{D}_2(A_g)| \le \|\mathcal{D}_1-\mathcal{D}_2\|_{TV}$.
Taking $\sup_{g\in\mathbb{G}}$ and multiplying by $2$ gives the claim.
\end{proof}

\begin{lemma}[Mutual information controls dependence in TV]
\label{lem:pv_mi_tv}
Let $(Z_d,Z_s)$ be random variables with joint distribution $P_{Z_d,Z_s}$ and marginals $P_{Z_d},P_{Z_s}$.
Then
\begin{equation}
\left\|P_{Z_d,Z_s} - P_{Z_d}P_{Z_s}\right\|_{TV} \le \sqrt{\frac{1}{2}I(Z_d;Z_s)}.
\end{equation}
\end{lemma}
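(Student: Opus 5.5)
The bound follows from Pinsker's inequality once we recognize that mutual information is itself a Kullback--Leibler divergence. By definition,
\begin{equation*}
I(Z_d;Z_s) = D_{\mathrm{KL}}\big(P_{Z_d,Z_s}\,\|\,P_{Z_d}P_{Z_s}\big),
\end{equation*}
where the product measure $P_{Z_d}P_{Z_s}$ is a valid probability distribution on the same space as the joint. If $P_{Z_d,Z_s}$ is not absolutely continuous with respect to $P_{Z_d}P_{Z_s}$, then $I(Z_d;Z_s)=+\infty$ and the inequality holds trivially. We may therefore assume absolute continuity, so the divergence is given by the integral of $\log\frac{dP_{Z_d,Z_s}}{d(P_{Z_d}P_{Z_s})}$ against the joint.

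We then invoke Pinsker's inequality. For any two probability measures $P,Q$ on a common measurable space, with total variation defined as in Lemma~\ref{lem:pv_a_distance_tv} by $\|P-Q\|_{TV}=\sup_A|P(A)-Q(A)|$,
\begin{equation*}
\|P-Q\|_{TV}\le\sqrt{\tfrac12 D_{\mathrm{KL}}(P\|Q)}.
\end{equation*}
Setting $P=P_{Z_d,Z_s}$ and $Q=P_{Z_d}P_{Z_s}$ gives the claim directly. The sup-over-events normalization matters here: with the $L^1$ convention the constant would be $\sqrt{2D_{\mathrm{KL}}}$ rather than $\sqrt{\tfrac12 D_{\mathrm{KL}}}$, so we must use the same convention as the earlier lemma to obtain the stated bound.

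If a self-contained argument is preferred, Pinsker can be proved in two steps. First, apply the data processing inequality for KL to the binary quantization $\mathds{1}_A$, where $A$ is an event attaining (or approaching) the supremum in the total variation. This reduces the problem to two Bernoulli distributions with parameters $p=P(A)$ and $q=Q(A)$. Second, show the scalar inequality
\begin{equation*}
p\log\frac{p}{q}+(1-p)\log\frac{1-p}{1-q}\ge 2(p-q)^2.
\end{equation*}
To do this, fix $p$ and regard the difference between the two sides as a function of $q$. It vanishes at $q=p$, and its derivative in $q$ equals $(q-p)\big(\tfrac{1}{q(1-q)}-4\big)$, which has the sign of $q-p$ because $q(1-q)\le\tfrac14$. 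Hence the function is minimized at $q=p$, where it is zero.

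The main point of care is this constant and normalization bookkeeping; the rest is a direct application of a standard result. We would cite Pinsker rather than reprove it.
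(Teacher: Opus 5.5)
Your proposal is correct and matches the paper's proof: write $I(Z_d;Z_s)=D_{\mathrm{KL}}(P_{Z_d,Z_s}\,\|\,P_{Z_d}P_{Z_s})$ and apply Pinsker's inequality, with total variation taken as the supremum over events. Your extra care about the non-absolutely-continuous (infinite MI) case and your optional Bernoulli-reduction proof of Pinsker are both sound, though they go beyond what the paper writes.
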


\begin{proof}
Pinsker's inequality gives $\|P-Q\|_{TV}^2 \le \frac{1}{2}D_{KL}(P\|Q)$.
Setting $P=P_{Z_d,Z_s}$ and $Q=P_{Z_d}P_{Z_s}$ yields $D_{KL}(P\|Q)=I(Z_d;Z_s)$ by definition of mutual information~\cite{cover2006elements}.
\end{proof}

\subsubsection{Proof of Theorem~\ref{thm:sample_complexity} (Sample Complexity Lower Bound)}
\label{app:proof_complexity}

Understanding the intrinsic difficulty of GCD under domain shifts requires characterizing how much data is fundamentally needed.
The following theorem establishes information-theoretic lower bounds on both the labelled and unlabelled sample sizes.
Notably, the unlabelled complexity scales multiplicatively with the number of domains $|\Omega|$, formalizing the intuition that domain shifts make the discovery problem strictly harder than standard (single-domain) GCD and motivating HiLo's explicit domain handling.

\begin{theorem}[Sample Complexity Lower Bound]
\label{thm:sample_complexity}
Under Assumptions~\ref{assume:independence}-\ref{assume:diversity}, any algorithm that solves GCD with domain shifts with error at most $\epsilon$ requires at least
\begin{equation}
N_\ell = \Omega\left(\frac{|\mathcal{Y}_{\text{base}}| \cdot d}{\epsilon^2}\right) \text{ labelled samples}
\end{equation}
and
\begin{equation}
\begin{aligned}
N_u
&= \Omega\left(
\frac{\left(|\mathcal{Y}_{\text{base}}| + |\mathcal{Y}_{\text{novel}}|\right)\cdot d \cdot |\Omega|}{\epsilon^2}
\right) \\
&\qquad \text{unlabelled samples}
\end{aligned}
\end{equation}
where $d$ is the feature dimension and $|\Omega| = |\Omega^{a}| + |\Omega^{b}|$.
\end{theorem}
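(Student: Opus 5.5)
We would prove both bounds with a standard minimax construction: Fano's inequality, or equivalently Assouad's lemma, applied to a finite family of hard instances. The statement is loose, since $d$ is called the feature dimension and the error notion is unspecified. We therefore fix a concrete instance class and prove the lower bound for that class, which suffices for an ``any algorithm'' statement.

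\textbf{Hard instances.} Take $\mathcal{Z}=\mathbb{R}^d$ and model each class $y$ in each domain $\omega$ as a Gaussian $\mathcal{N}(\mu_{y,\omega},I_d)$ in the feature space from Assumption~\ref{assume:transfer}. Assumption~\ref{assume:independence} lets $P(y)$ be uniform and independent of $\omega$. The means are perturbations $\mu_{y,\omega}=\mu^0_{y}+\frac{c\,\epsilon}{\sqrt d}\,\sigma_{y,\omega}$ with sign vectors $\sigma_{y,\omega}\in\{\pm1\}^d$. We place the $\mu^0_y$ far apart so that class identity is always recoverable, and Assumption~\ref{assume:diversity} holds trivially. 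Each class--domain pair then contributes $d$ independent binary ``coordinates'' of unknown sign. The claim we need is that any classifier with excess error at most $\epsilon$ on the unlabelled distribution $P_u$ must recover a constant fraction of these signs for every $(y,\omega)$ carrying non-negligible mass under $P_u$. The argument is a Bayes-error comparison. If a sign is misidentified, the induced decision boundary between two nearby perturbed classes tilts. Concretely, we pair each $\mu_{y,\omega}$ with a confusable twin class so that boundaries matter. This tilt costs $\Omega(\epsilon/d)$ error per wrong coordinate, and summing over coordinates gives $\Omega(\epsilon)$.

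\textbf{Assouad/Fano step.} Two hypotheses that differ in a single sign $\sigma_{y,\omega,j}$ have KL divergence $O(\epsilon^2/d)$ per sample drawn from cell $(y,\omega)$. Only about a $1/(K|\Omega|)$ fraction of unlabelled samples land in that cell, where $K=|\mathcal{Y}_{\text{base}}|+|\mathcal{Y}_{\text{novel}}|$. Assouad's lemma then requires $N_u\cdot\frac{1}{K|\Omega|}\cdot\frac{\epsilon^2}{d}\gtrsim 1$ for each coordinate. This step needs care: the bound per coordinate is really $N_u\gtrsim K|\Omega|\,d/\epsilon^2$ only if the perturbation scale is $\epsilon/\sqrt d$ and the KL is $\epsilon^2/d$. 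The accounting must be chosen so that the $d$ factor survives. A Varshamov--Gilbert packing of size $2^{\Omega(d)}$ per cell, used with Fano, gives exactly $N_u=\Omega(K|\Omega|d/\epsilon^2)$.

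\textbf{Labelled bound and the main obstacle.} For the labelled bound we repeat the argument restricted to $\omega^a$ and $\mathcal{Y}_{\text{base}}$. The labelled samples are the only source of label identity, because Hungarian matching cannot fix base-class semantics without them. This yields $N_\ell=\Omega(|\mathcal{Y}_{\text{base}}|d/\epsilon^2)$.

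The main obstacle is the reduction from the sign-recovery problem to the GCD clustering-accuracy objective. The evaluation is invariant under permutation of novel labels, so the packing must be permutation-robust. We would handle this by keeping the $\mu^0_y$ well separated so that the matching is determined, and perturbing only within-cell structure. A second delicate point is that the unlabelled data also carry information about the labelled cells, so the $|\Omega|$ factor must come from cells in $\Omega^b$ that are reachable only through $\mathcal{D}_u$.
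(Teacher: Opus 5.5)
\textbf{Main gap: your construction cannot produce the factor $|\Omega|$.} The step that fails is the claim that an $\epsilon$-good classifier must recover a constant fraction of the signs in \emph{every} cell $(y,\omega)$. In your family the base mean $\mu^0_y$ does not depend on $\omega$, so the domain cells of a class differ by only $O(\epsilon)$ in mean and overlap almost completely. Since $f$ sees only $\mathbf{x}$, its risk is exactly $\sum_y P(y)\Pr_{\mathbf{x}\sim\bar p_y}[f(\mathbf{x})\neq y]$, where $\bar p_y=\frac{1}{|\Omega|}\sum_{\omega}\mathcal{N}(\mu_{y,\omega},I_d)$. The same holds after Hungarian matching. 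The risk therefore depends on the per-domain signs only through the pooled class-conditionals. For example, exchanging $\sigma_{y,\omega_1}=\sigma$ with $\sigma_{y,\omega_2}=-\sigma$ moves you Hamming distance $2d$ on your cube without changing the risk of any classifier. So the separation property that both Assouad and Fano require fails on the $K|\Omega|d$-dimensional cube.

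The risk only sees the pooled means $\bar\mu_y$, and independent per-domain signs average out in them. To move $\bar\mu_y$ by your calibrated amount, you must inflate the per-domain signs by $\sqrt{|\Omega|}$. That multiplies the per-cell KL by $|\Omega|$ and cancels the $1/(K|\Omega|)$ sampling fraction, so your count collapses to $N_u\gtrsim Kd/\epsilon^2$, the single-domain bound. The multiplicative $|\Omega|$ appears only if domains are genuinely separated in input space. One example is $\mu_{y,\omega}=\mu^0_y+t_\omega+\frac{c\epsilon}{\sqrt d}\sigma_{y,\omega}$ with far-apart offsets $t_\omega$ shared by all classes, so that the cross-domain correspondence of clusters stays identifiable. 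Then the risk decomposes over cells and your per-cell count goes through. The paper's own $N_u$ step asserts that the $K|\Omega|$ conditionals $P(\mathbf{x}\mid y,\omega)$ must each be distinguished and invokes Fano, and it implicitly relies on this same separation.

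\textbf{Two further steps need repair.} First, the cost $\Omega(\epsilon/d)$ per wrong coordinate is asserted rather than derived. It is false if the twins are well separated, as ``$\mu^0_y$ far apart'' suggests. Near a Gaussian Bayes boundary the excess risk is quadratic in the tilt, so a wrong coordinate of size $\epsilon/\sqrt d$ costs $\Theta(\epsilon^2/d)$, and your bookkeeping then gives only the fast rate $d/\epsilon$. The $1/\epsilon^2$ rate needs near-$\frac12$ noise, i.e.\ twins at distance $\Theta(\epsilon)$ inside far-apart pairs. In that regime the tilts are of constant angle and the risk is no longer additive over coordinates. You should then argue via Fano with the pairwise condition $\mathcal{E}_\sigma(f)+\mathcal{E}_{\sigma'}(f)\gtrsim\epsilon$ on excess risks, for packing points at Hamming distance $\Omega(d)$, rather than via Assouad.

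Second, the labelled bound does not follow by ``repeating the argument''. $\mathcal{D}_u$ contains base-class images from $\omega^a$ whose distribution depends on $\sigma_{y,\omega^a}$. Your bookkeeping therefore yields only a joint condition such as $N_\ell/|\mathcal{Y}_{\text{base}}|+N_u/(K|\Omega|)\gtrsim d/\epsilon^2$. With $N_u$ large, the $d$-dimensional fine structure is identifiable without labels. Labels are then needed only to say which cluster carries which base label, which costs no factor of $d$, and Hungarian matching over all classes needs none. A standalone $N_\ell$ bound requires a family in which the hidden parameters change only $P(y\mid\mathbf{x})$ while $P(\mathbf{x})$ stays fixed, i.e.\ the classical agnostic VC construction. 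That is the paper's route: it reduces the labelled part to supervised learning over $|\mathcal{Y}_{\text{base}}|$ classes and invokes the agnostic VC lower bound with multiclass dimension of order $d\,|\mathcal{Y}_{\text{base}}|$.
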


\begin{proof}
We establish information-theoretic lower bounds for the sample complexity of GCD with domain shifts. Consider the labelled set $\mathcal{D}_\ell$ where the learning task reduces to standard supervised classification over $|\mathcal{Y}_{\text{base}}|$ classes. By the fundamental theorem of statistical learning theory~\cite{vapnik1998statistical,shalev2014understanding}, for a hypothesis class $\mathcal{H}$ with VC dimension $d_{VC}$, achieving generalization error at most $\epsilon$ with probability at least $1-\delta$ requires
\begin{equation}
N_\ell \geq \Omega\left(\frac{d_{VC} \log(|\mathcal{Y}_{\text{base}}|/\delta)}{\epsilon^2}\right)
\end{equation}
samples. For linear classifiers in $d$-dimensional feature space with $|\mathcal{Y}_{\text{base}}|$ classes, we have $d_{VC} = O(d \cdot |\mathcal{Y}_{\text{base}}|)$ by the Vapnik-Chervonenkis dimension bound for multi-class problems~\cite{vapnik1998statistical}.

For the unlabelled set $\mathcal{D}_u$, the problem becomes substantially more complex as the learner must simultaneously discover $|\mathcal{Y}_{\text{base}}| + |\mathcal{Y}_{\text{novel}}|$ semantic categories while adapting across $|\Omega|$ domains. Following the analysis of clustering complexity by Awasthi et al.~\cite{awasthi2015relax} and Ashtiani et al.~\cite{ashtiani2016clustering}, the information-theoretic sample complexity for clustering with $K = |\mathcal{Y}_{\text{base}}| + |\mathcal{Y}_{\text{novel}}|$ classes satisfies
\begin{equation}
N_u^{(\text{semantic})} \geq \Omega\left(\frac{K \cdot d \log(K)}{\epsilon^2 \Delta^2}\right)
\end{equation}
where $\Delta$ denotes the minimum separation between cluster centers.

Simultaneously, domain adaptation theory developed by Ben-David et al.~\cite{ben2010theory,ben2006analysis} establishes that learning a hypothesis robust to $|\Omega|$ domains requires sample complexity scaling with the $\mathcal{H}$-divergence between domains. When domains must be learned from unlabelled data without explicit domain labels, the complexity grows linearly with $|\Omega|$ by standard distribution estimation bounds~\cite{devroye2013probabilistic}:
\begin{equation}
N_u^{(\text{domain})} \geq \Omega\left(\frac{|\Omega| \cdot d}{\epsilon^2}\right)
\end{equation}

Under Assumption~\ref{assume:independence} of domain-class independence, the joint distribution factors as $P(\mathbf{x}, y, \omega) = P(\mathbf{x}|y, \omega)P(y)P(\omega)$, implying that domain and semantic factors vary independently. Consequently, the learner must distinguish $(|\mathcal{Y}_{\text{base}}| + |\mathcal{Y}_{\text{novel}}|) \times |\Omega|$ distinct conditional distributions $P(\mathbf{x}|y, \omega)$. By Le Cam's method~\cite{yu1997assouad} and Fano's inequality~\cite{cover2006elements}, testing between $M$ hypotheses with minimum separation $\Delta$ requires $\Omega(\log M / \Delta^2)$ samples. Applying this to our setting yields the multiplicative lower bound
\begin{equation}
\begin{aligned}
N_u
&\geq \Omega\left(
\frac{(|\mathcal{Y}_{\text{base}}| + |\mathcal{Y}_{\text{novel}}|)\cdot |\Omega| \cdot d}{\epsilon^2}
\cdot \log\!\left(\frac{|\mathcal{Y}| |\Omega|}{\delta}\right)
\right)
\end{aligned}
\end{equation}

To establish tightness, we construct a matching upper bound using the method of types~\cite{cover2006elements}. When classes are uniformly distributed across domains and satisfy minimal separation $\Delta \geq \epsilon$, the empirical risk minimizer over $\mathcal{H}$ achieves error $\epsilon$ with high probability using $O((|\mathcal{Y}||\Omega| d \log(|\mathcal{Y}||\Omega|/\delta))/\epsilon^2)$ samples, matching our lower bound up to logarithmic factors. This completes the proof.
\end{proof}

\subsubsection{Proof of Lemma~\ref{lem:hierarchy} (Layer-wise Feature Hierarchy)}
\label{app:proof_hierarchy}

HiLo's architecture extracts domain features from early transformer layers and semantic features from late layers.
This design choice is not arbitrary: the following lemma establishes that, under standard training for semantic classification, a deep network naturally develops a hierarchy in which early-layer representations are rich in domain-specific (low-level) information while late-layer representations concentrate semantic content.
The proof combines the data processing inequality with empirical findings from network analysis literature, providing a principled basis for HiLo's layer assignment strategy.

\begin{lemma}[Layer-wise Feature Hierarchy]
\label{lem:hierarchy}
Consider a deep network with $L$ layers. Let $\mathbf{r}_\ell$ denote features at layer $\ell$. Under mild regularity conditions, the mutual information between features and domain decreases with depth while mutual information with semantic labels increases:
\begin{equation}
\begin{aligned}
I(\mathbf{r}_{\ell}; \omega) &> I(\mathbf{r}_{\ell'}; \omega),\\
I(\mathbf{r}_{\ell}; y) &< I(\mathbf{r}_{\ell'}; y)
\end{aligned}
\end{equation}
for $\ell < \ell'$, when the network is trained end-to-end on semantic classification.
\end{lemma}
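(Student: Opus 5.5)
The plan is to make the ``mild regularity conditions'' explicit in a form under which \emph{both} inequalities of Lemma~\ref{lem:hierarchy} can hold strictly. The naive reading, in which $\omega \to \mathbf{x} \to \mathbf{r}_1 \to \dots \to \mathbf{r}_L$ is a deterministic Markov chain, cannot work for the semantic half. The data processing inequality then forces $I(\mathbf{r}_{\ell'};y) \le I(\mathbf{r}_\ell;y)$ for $\ell<\ell'$, which is the reverse of the claim.

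I would therefore adopt the standard noisy-representation convention used in information-plane analyses of deep networks. Each layer's representation is observed through an independent measurement channel,
\begin{equation}
\mathbf{r}_\ell = f_\ell(\mathbf{x}) + \boldsymbol{\varepsilon}_\ell, \qquad \boldsymbol{\varepsilon}_\ell \sim \mathcal{N}(0,\sigma^2 \mathbf{I}),
\end{equation}
where $f_\ell$ is the clean map through the first $\ell$ blocks and the noises $\boldsymbol{\varepsilon}_\ell$ are independent across layers. Because the noise is not propagated, $\mathbf{r}_\ell$ and $\mathbf{r}_{\ell'}$ are only conditionally independent given $\mathbf{x}$; they do not form a chain. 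Hence the data processing inequality no longer compares $I(\mathbf{r}_\ell;\cdot)$ and $I(\mathbf{r}_{\ell'};\cdot)$, and either direction is possible. The two inequalities of the lemma then become statements about signal-to-noise ratios along domain and semantic directions, which is what I would prove.

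Next I would state the regularity assumptions as properties of the trained blocks. Let $g_\ell$ be block $\ell$, so that $f_{\ell} = g_\ell \circ f_{\ell-1}$. Using Assumption~\ref{assume:independence}, decompose the clean features locally into a class component and a domain component,
\begin{equation}
f_\ell(\mathbf{x}) \approx \boldsymbol{\mu}_\ell(y) + \boldsymbol{\nu}_\ell(\omega) + \boldsymbol{\xi}_\ell,
\end{equation}
with the three terms mutually independent. I would assume two properties of end-to-end semantic training. The first is \emph{domain contraction}: the spread of $\boldsymbol{\nu}_{\ell}$ across domains shrinks by a factor $c_\ell<1$ per block. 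This holds because the cross-entropy and contrastive losses are invariant to $\omega$, and with weight decay and layer normalization the blocks attenuate directions that do not reduce the loss. The second is \emph{class expansion}: the pairwise separation of the class means $\boldsymbol{\mu}_\ell(y)$ grows by a factor $a_\ell>1$ per block, reflecting the layer-wise increase in margin. Both are ``mild regularity'' in the sense that they are empirically observed properties of trained ViTs and are not derived here.

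The proof then proceeds in three steps. \emph{Step 1.} Condition on $y$, so that $\mathbf{r}_\ell$ given $\omega$ is a Gaussian location family. Bound $I(\mathbf{r}_\ell;\omega)$ above and below by monotone functions of the domain-separation SNR $\|\boldsymbol{\nu}_\ell(\omega)-\boldsymbol{\nu}_\ell(\omega')\|^2/\sigma^2$. The upper bound uses the KL divergence between Gaussian mixtures, computed via the KL between the shifted Gaussians. The lower bound comes from Fano's inequality applied to a nearest-mean domain test. \emph{Step 2.} Apply the same argument to $y$, treating $\boldsymbol{\nu}_\ell+\boldsymbol{\xi}_\ell$ as additional nuisance noise. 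Since $\boldsymbol{\nu}_\ell$ shrinks while the class separation grows, the class SNR increases strictly with depth. \emph{Step 3.} Chain the per-block factors $c_\ell<1$ and $a_\ell>1$ from $\ell$ to $\ell'$ to obtain strict monotonicity of both SNRs. Then transfer this to the mutual informations. For strictness, the upper bound at layer $\ell'$ must fall below the lower bound at layer $\ell$, which requires the product of factors to exceed a gap constant.

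I expect that last transfer to be the main obstacle. Upper and lower bounds on mutual information for Gaussian mixtures are not tight, so monotone SNR does not immediately give strictly monotone $I$. My first attempt would be a cleaner route for the domain half: I would show that $I(\mathbf{r}_\ell;\omega)$ is exactly a strictly increasing function of the SNR in the two-domain, isotropic case via the I-MMSE relation, and reduce the general case to this one by a coupling argument.

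If that reduction fails, I would weaken the conclusion to ``for layers sufficiently far apart.'' The lemma would then hold whenever $\prod_{k=\ell}^{\ell'} a_k/c_k$ exceeds the bound gap. This still covers the HiLo choice of $\ell=1$ and $\ell'=L$, which is the case the method uses.
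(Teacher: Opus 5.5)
You are right that the deterministic reading cannot work, and the paper's proof does not resolve this. It sets up the deterministic chain and invokes the data processing inequality, whose application to $y$ gives, as you note, the reverse of the semantic inequality. It then appeals to gradient attenuation and empirical layer-analysis studies, and ends by asserting the ordering, with the domain half stated for $I(\mathbf{r}_\ell;\omega\mid y)$ rather than the lemma's $I(\mathbf{r}_\ell;\omega)$. Your non-propagated-noise model is therefore a legitimate reinterpretation, and some reinterpretation of this kind is needed.

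The genuine gap is that your Step~1 makes the same conditional substitution. Conditioning on $y$ controls $I(\mathbf{r}_\ell;\omega\mid y)$. Under Assumption~\ref{assume:independence}, however, $I(\mathbf{r}_\ell;\omega)=I(\mathbf{r}_\ell;\omega\mid y)-I(y;\omega\mid\mathbf{r}_\ell)$. Your upper bound therefore transfers to the unconditional quantity, but your Fano lower bound, which is for a test that is told $y$, does not. The correction term depends on the relative geometry of class and domain offsets, and class expansion moves it in the wrong direction. Here is an example:
\begin{itemize}
\item Work on a single axis with $\boldsymbol{\xi}_\ell=0$, two equiprobable classes with means $\{0,s\}$, and two equiprobable domains with offsets $\{0,d\}$.
\item At layer $\ell$, take $s=d=2\sigma$. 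The two domain-conditional laws share the component $\mathcal{N}(2\sigma,\sigma^2)$, so $I(\mathbf{r}_\ell;\omega)\approx 0.30$ bits, against $I(\mathbf{r}_\ell;\omega\mid y)\approx 0.49$ bits.
\item Apply one block with $c_\ell=0.9$ and $a_\ell=10$. This separates the classes, so $I(y;\omega\mid\mathbf{r}_{\ell+1})\approx 0$.
\item $I(\mathbf{r}_{\ell+1};\omega)$ is then the mutual information of an equiprobable two-point input at separation $1.8\sigma$, about $0.42$ bits.
\end{itemize}
Both of your regularity conditions hold, yet $I(\mathbf{r}_{\ell+1};\omega)>I(\mathbf{r}_\ell;\omega)$. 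Every larger $a_\ell$ gives the same outcome. Your fallback criterion on $\prod_k a_k/c_k$ therefore rewards exactly the factor that breaks the domain half.

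The same identity with $y$ and $\omega$ exchanged exposes Step~2 to the same confounding. In addition, $\boldsymbol{\xi}_\ell$ is left unconstrained across depth. Step~1 needs it to be Gaussian or absent, and Step~2 needs it not to grow.

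What is missing is an assumption that decouples the two factors. For example, assume the following:
\begin{itemize}
\item $\boldsymbol{\mu}_\ell$ and $\boldsymbol{\nu}_\ell$ lie in orthogonal subspaces;
\item the noise is isotropic;
\item $\boldsymbol{\xi}_\ell$ is absent or confined to a third orthogonal subspace.
\end{itemize}
Then $\mathbf{r}_\ell$ splits into independent domain and class coordinates, $I(y;\omega\mid\mathbf{r}_\ell)=0$, and each mutual information is that of a single Gaussian channel.

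If you also take contraction and expansion to be uniform rescalings up to isometry, e.g.\ $\boldsymbol{\nu}_{\ell+1}(\omega)=c_\ell U_\ell\boldsymbol{\nu}_\ell(\omega)$, then the I-MMSE identity applies: $\frac{d}{d\,\mathrm{snr}}I=\tfrac12\,\mathrm{mmse}(\mathrm{snr})$, with $\mathrm{mmse}>0$ for any non-constant finite input. This gives strict monotonicity in both halves for any number of domains and classes. It needs no bound gap, no reduction to two domains, and no restriction to distant layers. Contractions that are not uniform rescalings can be handled by differentiating the Gaussian-mixture entropy along a continuous contraction of the centres.

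Be aware that the lemma's content then lies entirely in these assumptions. The same holds for the paper's argument, where the assumptions are implicit in the cited empirical findings.
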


\begin{proof}
We establish that deep neural networks trained on semantic classification naturally develop a hierarchical representation structure where early layers encode domain-specific information while later layers encode semantic abstractions. Consider a deep network with $L$ layers producing intermediate representations $\{\mathbf{r}_1, \mathbf{r}_2, \ldots, \mathbf{r}_L\}$ where layer $\ell$ computes $\mathbf{r}_\ell = f_\ell(\mathbf{r}_{\ell-1})$ with $\mathbf{r}_0 = X$.

Since the network forms a deterministic processing pipeline, the representations constitute a Markov chain $X \rightarrow \mathbf{r}_\ell \rightarrow \mathbf{r}_{\ell'}$ for any $\ell < \ell'$. By the data processing inequality~\cite{cover2006elements}, mutual information can only decrease along this chain:
\begin{equation}
I(X; \mathbf{r}_\ell) \geq I(X; \mathbf{r}_{\ell'}) \quad \text{for all } \ell < \ell'
\end{equation}
This fundamental result from information theory implies that deeper layers progressively discard information about the raw input, retaining only task-relevant features.

When optimizing the cross-entropy loss $\mathcal{L} = -\mathbb{E}[\log P(y | \mathbf{r}_L)]$ for semantic classification, the gradient with respect to the final layer representation is
\begin{equation}
\frac{\partial \mathcal{L}}{\partial \mathbf{r}_L} = -\nabla_{\mathbf{r}_L} \log P(y | \mathbf{r}_L)
\end{equation}
which directly drives $\mathbf{r}_L$ toward maximizing $I(\mathbf{r}_L; y)$ by the information bottleneck principle~\cite{tishby2000information}. However, gradients for earlier layers propagate through the composition of Jacobian matrices:
\begin{equation}
\begin{aligned}
\frac{\partial \mathcal{L}}{\partial \mathbf{r}_\ell}
&= \frac{\partial \mathcal{L}}{\partial \mathbf{r}_L}
\cdot \prod_{k=\ell}^{L-1} \frac{\partial f_{k+1}}{\partial \mathbf{r}_k}
\end{aligned}
\end{equation}

The analysis of gradient flow in deep networks by Saxe et al.~\cite{saxe2013exact} and neural tangent kernel theory~\cite{jacot2018neural} reveals that this product of Jacobians leads to exponentially decaying gradients for early layers, particularly when networks are deep or use saturating nonlinearities. Consequently, early layers receive substantially weaker supervision for semantic discrimination and evolve more slowly during training.

This differential optimization pressure creates a natural stratification documented extensively in network analysis literature. Yosinski et al.~\cite{yosinski2014transferable} demonstrated that early convolutional layers learn generic edge and texture detectors regardless of the task, while later layers become increasingly task-specific. The SVCCA analysis by Raghu et al.~\cite{raghu2017svcca} quantified this hierarchy, showing that early layers develop stable representations quickly while later layers continue evolving throughout training. Geirhos et al.~\cite{geirhos2018imagenet} further established that standard ImageNet-trained networks exhibit strong texture bias in early layers, with shape-based semantic reasoning emerging only in deeper layers.

For vision transformers specifically, Dosovitskiy et al.~\cite{dosovitskiy2020image} and subsequent analysis by Raghu et al.~\cite{raghu2021vision} revealed that early attention layers capture local spatial relationships and low-level statistics, while later layers develop global semantic understanding. The work of Zhou et al.~\cite{zhou2021domain} on domain adaptation explicitly measured that domain-specific information (texture, color statistics, local patterns) concentrates in early layers with mutual information $I(\mathbf{r}_\ell; \omega)$ decreasing monotonically with depth $\ell$.

Formalizing these observations, we can characterize the learned representations through their conditional mutual information with domain $\omega$ and semantic label $y$. Since the optimization objective prioritizes semantic accuracy, later layers maximize $I(\mathbf{r}_{\ell'}; y)$ while discarding domain information to achieve generalization. This leads to the ordering:
\begin{equation}
\begin{aligned}
I(\mathbf{r}_\ell; \omega \mid y) &> I(\mathbf{r}_{\ell'}; \omega \mid y),\\
I(\mathbf{r}_\ell; y) &< I(\mathbf{r}_{\ell'}; y)
\end{aligned}
\end{equation}
for $\ell < \ell'$, where the conditional mutual information $I(\mathbf{r}_\ell; \omega | y)$ quantifies domain-specific information not explained by semantic content.

This hierarchical structure provides the theoretical foundation for HiLo's architectural design: extracting domain features $\mathbf{h}_{\text{dom}}$ from early layers (e.g., layers 1-4 of ViT-B/16) captures representations rich in domain-specific patterns, while extracting semantic features $\mathbf{h}_{\text{sem}}$ from late layers (e.g., layer 12) obtains representations optimized for semantic discrimination. The mutual information minimization objective $\min I(\mathbf{h}_{\text{dom}}; \mathbf{h}_{\text{sem}})$ then encourages statistical independence between these naturally separated representations, further enhancing disentanglement beyond what emerges from standard training.
\end{proof}

A central component of HiLo is the mutual-information (MI) minimization objective $\min I(\mathbf{h}_{\text{dom}}; \mathbf{h}_{\text{sem}})$, which encourages the semantic feature representation to be domain-invariant.
The following theorem makes this benefit rigorous: it shows that when domain-conditional semantic-feature distributions are uniformly close (as measured by KL divergence to their mixture), the target-domain risk decomposes into an empirical risk term, a standard complexity term, and a domain-shift penalty of order $\sqrt{\delta}$.
Thus, enforcing smaller MI directly translates into tighter generalization bounds under domain shift.

\begin{theorem}[Generalization bound with MI regularization]
\label{thm:mi_bound}
Let $h \in \mathcal{H}$ be a classifier that operates on semantic features $\mathbf{h}_{\text{sem}}$ extracted from inputs. Let $\pi$ be a distribution over domains and define the semantic-feature mixture $\bar{P}(\mathbf{h}_{\text{sem}}) \triangleq \sum_{\omega \in \Omega}\pi(\omega)\,P(\mathbf{h}_{\text{sem}}|\omega)$. Assume the domain-conditional semantic-feature distributions are uniformly close to the mixture:
\begin{equation}
\sup_{\omega \in \Omega}\;D_{KL}\!\left(P(\mathbf{h}_{\text{sem}}|\omega)\,\|\,\bar{P}(\mathbf{h}_{\text{sem}})\right)\leq \delta.
\end{equation}
Then, for any target domain $\omega_t$ (and a source domain $\omega_s$ used to compute empirical risk), with probability at least $1-\eta$ over $N$ i.i.d.\ samples from $\omega_s$, the target risk is bounded by
\begin{equation}
\begin{aligned}
\mathcal{R}_{\omega_t}(h)
&\leq \mathcal{R}_{\text{emp}}(h)
+ \sqrt{\frac{2d_{VC}\log(2N/d_{VC}) + 2\log(4/\eta)}{N}} \\
&\quad + \sqrt{2\delta} + \lambda^*,
\end{aligned}
\end{equation}
where $d_{VC}$ is the VC dimension of $\mathcal{H}$, and $\lambda^*=\min_{h^* \in \mathcal{H}}[\mathcal{R}_{\omega_s}(h^*) + \mathcal{R}_{\omega_t}(h^*)]$ is the optimal combined error term from domain adaptation theory.
\end{theorem}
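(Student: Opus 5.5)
The plan is to split the target risk into three pieces: a source-risk-to-empirical-risk step, a source-to-target transfer step, and a joint-error residual. The first piece is a VC uniform-convergence bound. The second is controlled through the mixture $\bar P$ via Pinsker, as in Lemma~\ref{lem:pv_mi_tv}. The third is the usual $\lambda^*$ from the domain-adaptation bound used in the proof of Theorem~\ref{thm:pv_target_bound}.

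\textbf{Decomposition.} Write $\mathcal{R}_{\omega_t}(h) = \mathcal{R}_{\omega_s}(h) + [\mathcal{R}_{\omega_t}(h) - \mathcal{R}_{\omega_s}(h)]$. Let $h^*$ attain $\lambda^*$. Using the triangle inequality for the $0$--$1$ disagreement $e_\omega(h,h') = \Pr_\omega[h\neq h']$, we have $\mathcal{R}_{\omega_t}(h) \le \mathcal{R}_{\omega_t}(h^*) + e_{\omega_t}(h,h^*)$ and $e_{\omega_s}(h,h^*) \le \mathcal{R}_{\omega_s}(h) + \mathcal{R}_{\omega_s}(h^*)$. This gives
\[
\mathcal{R}_{\omega_t}(h) \le \mathcal{R}_{\omega_s}(h) + \lambda^* + \bigl|e_{\omega_t}(h,h^*) - e_{\omega_s}(h,h^*)\bigr|.
\]
Since $h$ and $h^*$ act only on $\mathbf{h}_{\text{sem}}$, the set $\{h\neq h^*\}$ is a measurable event in semantic-feature space. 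Hence the last term is at most $\|P(\mathbf{h}_{\text{sem}}|\omega_t) - P(\mathbf{h}_{\text{sem}}|\omega_s)\|_{TV}$.

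\textbf{Domain term.} Apply the triangle inequality through $\bar P$:
\[
\|P_{\omega_t}-P_{\omega_s}\|_{TV} \le \|P_{\omega_t}-\bar P\|_{TV} + \|P_{\omega_s}-\bar P\|_{TV}.
\]
Pinsker (exactly as in Lemma~\ref{lem:pv_mi_tv}) then bounds each summand by $\sqrt{\tfrac12 D_{KL}(P_\omega\|\bar P)} \le \sqrt{\delta/2}$, so the sum is at most $2\sqrt{\delta/2} = \sqrt{2\delta}$. This matches the stated penalty. As a remark, I would also note the link to MI: $\sum_\omega \pi(\omega) D_{KL}(P_\omega\|\bar P) = I(\mathbf{h}_{\text{sem}};\omega)$, which explains why lowering MI shrinks $\delta$. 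This remark is not needed for the proof.

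\textbf{Estimation term.} Bound $\mathcal{R}_{\omega_s}(h) \le \mathcal{R}_{\text{emp}}(h) + \sqrt{(2d_{VC}\log(2N/d_{VC}) + 2\log(4/\eta))/N}$ uniformly over $\mathcal{H}$, with probability $1-\eta$. This is the standard Vapnik bound with growth function $(2eN/d_{VC})^{d_{VC}}$ via Sauer's lemma. The main obstacle is matching the constants exactly, because the $e$ inside the log and the factor $4$ depend on which symmetrization version one cites. My approach is to cite the classical form (e.g., the Vapnik / Shalev-Shwartz--Ben-David statement) and absorb lower-order constants into the displayed expression, as the paper does elsewhere. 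Combining the three pieces gives the claim.
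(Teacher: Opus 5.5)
Your proposal is correct and follows essentially the paper's route: a cited VC uniform-convergence bound for $\mathcal{R}_{\omega_s}(h)$, the Ben-David-type transfer inequality with $\lambda^*$, and control of the domain term by the total variation between $P(\mathbf{h}_{\text{sem}}|\omega_s)$ and $P(\mathbf{h}_{\text{sem}}|\omega_t)$, which is then bounded by Pinsker through the mixture $\bar P$ and the triangle inequality. Your bookkeeping is slightly cleaner than the paper's: you bound $|e_{\omega_t}(h,h^*)-e_{\omega_s}(h,h^*)|$ directly by a single TV term (equivalently, you keep the $\tfrac12$ in front of $d_{\mathcal{H}\Delta\mathcal{H}}$), so you land exactly on the stated $\sqrt{2\delta}$, whereas the paper's final chain drops that $\tfrac12$, arrives at $2\sqrt{2\delta}$, and then falls back on an $O(\sqrt{\delta})$ statement.
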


\subsubsection{Proof of Theorem~\ref{thm:mi_bound} (Generalization Bound with MI Regularization)}
\label{app:proof_mi_bound}

\begin{proof}
We establish that minimizing mutual information between semantic features and domain variables leads to provably tighter generalization bounds under domain shift. Our analysis builds upon the domain adaptation theory of Ben-David et al.~\cite{ben2010theory,mansour2009domain} combined with information-theoretic tools.

For a hypothesis class $\mathcal{H}$ with finite VC dimension $d_{VC}$, the classical PAC learning result~\cite{vapnik1998statistical,shalev2014understanding} guarantees that with probability at least $1-\eta$ over $N$ i.i.d. samples from source domain $\omega_s$:
\begin{equation}
\mathcal{R}_{\omega_s}(h) \leq \mathcal{R}_{\text{emp}}(h) + \sqrt{\frac{2d_{VC}\log(2N/d_{VC}) + 2\log(4/\eta)}{N}}
\end{equation}
where $\mathcal{R}_{\omega_s}(h) = \mathbb{E}_{(\mathbf{x},y) \sim P_{\omega_s}}[\mathds{1}(h(\mathbf{x}) \neq y)]$ denotes the true risk and $\mathcal{R}_{\text{emp}}(h)$ denotes empirical risk. This bound, however, only characterizes performance on the source distribution.

The fundamental challenge of domain adaptation is bounding performance on a target domain $\omega_t$ when training data comes exclusively from $\omega_s$. Ben-David et al.~\cite{ben2010theory} established that for binary classification:
\begin{equation}
\begin{aligned}
\mathcal{R}_{\omega_t}(h)
&\leq \mathcal{R}_{\omega_s}(h)
 + \frac{1}{2}d_{\mathcal{H}\Delta\mathcal{H}}(\omega_s, \omega_t) \\
&\quad + \min_{h^* \in \mathcal{H}}
\big[\mathcal{R}_{\omega_s}(h^*) + \mathcal{R}_{\omega_t}(h^*)\big]
\end{aligned}
\end{equation}
where $d_{\mathcal{H}\Delta\mathcal{H}}(\omega_s, \omega_t) = 2\sup_{h,h' \in \mathcal{H}}|P_{\omega_s}(h(\mathbf{x}) \neq h'(\mathbf{x})) - P_{\omega_t}(h(\mathbf{x}) \neq h'(\mathbf{x}))|$ is the $\mathcal{H}\Delta\mathcal{H}$-divergence measuring distributional difference through the hypothesis class. The final term $\lambda^* = \min_{h^*}[\mathcal{R}_{\omega_s}(h^*) + \mathcal{R}_{\omega_t}(h^*)]$ represents the optimal combined error, which is small when an ideal hypothesis performs well on both domains.

Our key contribution is connecting the $\mathcal{H}\Delta\mathcal{H}$-divergence to the mutual information $I(\mathbf{h}_{\text{sem}}; \omega)$ between semantic features and domain labels. When the hypothesis class $\mathcal{H}$ operates on semantic features $\mathbf{h}_{\text{sem}}$, the divergence satisfies
\begin{equation}
\begin{aligned}
d_{\mathcal{H}\Delta\mathcal{H}}(\omega_s, \omega_t)
&\leq 2\sup_{h \in \mathcal{H}}
\Big|\mathbb{E}_{\mathbf{h} \sim P_{\omega_s}}[h(\mathbf{h}_{\text{sem}})]
 - \mathbb{E}_{\mathbf{h} \sim P_{\omega_t}}[h(\mathbf{h}_{\text{sem}})]\Big| \\
&\leq 2\sup_A
\left|P_{\omega_s}(\mathbf{h}_{\text{sem}} \in A) - P_{\omega_t}(\mathbf{h}_{\text{sem}} \in A)\right|
\end{aligned}
\end{equation}
for measurable sets $A$, where the final expression equals twice the total variation distance $\text{TV}(P_{\omega_s}(\mathbf{h}_{\text{sem}}), P_{\omega_t}(\mathbf{h}_{\text{sem}}))$.

To relate total variation to KL divergence, we invoke Pinsker's inequality~\cite{cover2006elements,tsybakov2009introduction}, a fundamental result in information theory stating that for any distributions $P$ and $Q$:
\begin{equation}
\text{TV}(P, Q)^2 \leq \frac{1}{2}D_{KL}(P \| Q)
\end{equation}
where $D_{KL}(P \| Q) = \mathbb{E}_{P}[\log(P/Q)]$ denotes Kullback-Leibler divergence.

Let $\pi$ be a distribution over domains and define the mixture $\bar{P}(\mathbf{h}_{\text{sem}}) \triangleq \sum_{\omega \in \Omega} \pi(\omega)\,P(\mathbf{h}_{\text{sem}}|\omega)$.
The assumption of Theorem~\ref{thm:mi_bound} is that each domain-conditional distribution is uniformly close to this mixture:
\begin{equation}
\begin{aligned}
\sup_{\omega \in \Omega}\;
D_{KL}\!\left(P(\mathbf{h}_{\text{sem}}|\omega)\,\|\,\bar{P}(\mathbf{h}_{\text{sem}})\right)
&\leq \delta.
\end{aligned}
\end{equation}
Applying Pinsker's inequality yields, for every $\omega \in \Omega$,
\begin{equation}
\begin{aligned}
\text{TV}\!\left(P(\mathbf{h}_{\text{sem}}|\omega), \bar{P}(\mathbf{h}_{\text{sem}})\right)
&\leq \sqrt{\frac{\delta}{2}}.
\end{aligned}
\end{equation}
Therefore, for any pair of domains $\omega_s,\omega_t$, triangle inequality gives
\begin{equation}
\begin{aligned}
\text{TV}\!\left(P(\mathbf{h}_{\text{sem}}|\omega_s), P(\mathbf{h}_{\text{sem}}|\omega_t)\right)
&\leq \text{TV}\!\left(P(\mathbf{h}_{\text{sem}}|\omega_s), \bar{P}\right) \\
&\quad + \text{TV}\!\left(\bar{P}, P(\mathbf{h}_{\text{sem}}|\omega_t)\right) \\
&\leq \sqrt{2\delta}.
\end{aligned}
\end{equation}

Combining these results with the domain adaptation bound:
\begin{align}
\mathcal{R}_{\omega_t}(h)
&\leq \mathcal{R}_{\omega_s}(h)
 + d_{\mathcal{H}\Delta\mathcal{H}}(\omega_s, \omega_t)
 + \lambda^* \\
&\leq \mathcal{R}_{\omega_s}(h)
 + 2\text{TV}(P_{\omega_s}(\mathbf{h}_{\text{sem}}), P_{\omega_t}(\mathbf{h}_{\text{sem}}))
 + \lambda^* \\
&\leq \mathcal{R}_{\omega_s}(h) + 2\sqrt{2\delta} + \lambda^* \\
&\leq \mathcal{R}_{\text{emp}}(h)
 + \sqrt{\frac{2}{N}}\,
 \sqrt{d_{VC}\log(2N/d_{VC}) + \log(4/\eta)} \\
&\quad + 2\sqrt{2\delta} + \lambda^*
\end{align}

Setting $\lambda(\delta) = O(\sqrt{\delta}) + \lambda^* = O(\sqrt{\delta})$ yields the stated bound. This formalizes that enforcing small domain dependence in the semantic feature distribution (captured here via closeness to the domain-mixture) reduces the domain shift penalty. The square-root dependence on $\delta$ is a direct consequence of Pinsker's inequality and matches information-theoretic limits for distribution testing~\cite{canonne2020survey}.
\end{proof}

\subsubsection{Proof of Theorem~\ref{thm:curriculum} (Curriculum Learning Convergence)}
\label{app:proof_curriculum}

In practice, HiLo trains with a curriculum that initially focuses on labelled-domain (easier) samples before gradually introducing new-domain (harder) samples.
This schedule is motivated by the observation that learning robust semantic features first---without domain confusion---provides a better initialization for subsequent cross-domain adaptation.
The following theorem formalizes this intuition by proving that SGD under our curriculum-weighted sampling distribution converges to a stationary point of the full objective, while the warmup phase improves optimization stability compared with uniform sampling.

\begin{theorem}[Curriculum learning convergence]
\label{thm:curriculum}
Under standard stochastic approximation conditions (e.g., Robbins--Monro step-size and regularity of the loss), stochastic gradient training with the curriculum-weighted sampling distribution induced by $p_{cs}(\mathbf{x}|t)$ converges to a stationary point of the limiting objective as $t$ increases.
Moreover, the curriculum warmup yields an initialization that improves optimization stability compared with uniform sampling when new-domain samples are substantially harder early in training.
\end{theorem}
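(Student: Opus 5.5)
The plan is to treat curriculum-weighted training as SGD on a time-varying objective. Let $\mathcal{L}(\theta;\mathbf{x})$ be the per-sample loss (e.g.\ $\mathcal{L}_{\text{HiLo}}$) and let $Q_t(\mathbf{x}) \propto p_{cs}(\mathbf{x}\mid t)$ be the normalized sampling distribution on $\mathcal{D}_\ell \cup \mathcal{D}_u$. Define
\[
F_t(\theta)=\mathbb{E}_{\mathbf{x}\sim Q_t}[\mathcal{L}(\theta;\mathbf{x})].
\]
By Eq.~\ref{eq:sampling_prob}, $p_{cs}(\cdot\mid t)$ is piecewise constant in $t$. It equals $r_0$ on $\hat{\mathcal{D}}_b$ for $t\le t'$ and $r'$ afterwards, while the labelled and $\hat{\mathcal{D}}_a$ weights are fixed. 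Hence $Q_t = Q_\infty$ for all $t>t'$, and the limiting objective is $F_\infty(\theta)=\mathbb{E}_{Q_\infty}[\mathcal{L}]$.

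The first step reduces convergence to the stationary regime. After epoch $t'$, the minibatch gradient $g_t$ is an unbiased estimator of $\nabla F_\infty(\theta_t)$. We assume the standard conditions:
\begin{itemize}
\item $F_\infty$ is bounded below and has an $L$-Lipschitz gradient;
\item $\mathbb{E}\|g_t-\nabla F_\infty\|^2\le \sigma^2$;
\item the step sizes satisfy $\sum_t \eta_t=\infty$ and $\sum_t \eta_t^2<\infty$.
\end{itemize}
Under these conditions we would apply the descent lemma,
\[
\mathbb{E}F_\infty(\theta_{t+1})\le \mathbb{E}F_\infty(\theta_t)-\eta_t\,\mathbb{E}\|\nabla F_\infty(\theta_t)\|^2+\tfrac{L\sigma^2}{2}\eta_t^2 .
\]
Telescoping from $t'$ onward gives $\sum_t \eta_t\,\mathbb{E}\|\nabla F_\infty(\theta_t)\|^2<\infty$, and therefore $\liminf_t \mathbb{E}\|\nabla F_\infty(\theta_t)\|=0$. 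The Robbins--Siegmund supermartingale lemma strengthens this to almost-sure convergence of $\nabla F_\infty(\theta_t)\to 0$, i.e.\ to a stationary point. The warmup phase $t\le t'$ involves only finitely many steps. It therefore serves only as the random initialization $\theta_{t'}$, which is a.s.\ finite because the step sizes are bounded.

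To make the argument robust to smoother schedules with $Q_t\to Q_\infty$, I would add a perturbation term. For bounded per-sample gradients $\|\nabla\mathcal{L}\|\le G$,
\[
\|\nabla F_t-\nabla F_\infty\|\le 2G\,\mathrm{TV}(Q_t,Q_\infty).
\]
The convergence argument then goes through provided $\sum_t \eta_t\,\mathrm{TV}(Q_t,Q_\infty)<\infty$. This condition holds trivially for the step schedule in Eq.~\ref{eq:sampling_prob}.

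The second claim, about stability, is informal, and I expect it to be the main obstacle. My plan is to formalize it through the gradient-variance term. During warmup the sampling distribution $Q_0$ downweights $\hat{\mathcal{D}}_b$. Suppose new-domain samples have larger gradient variance early in training, i.e.\ $\sigma_b^2\gg\sigma_a^2$ in a neighbourhood of the initialization; this is how I would make precise that these samples are ``substantially harder''. Then the variance under $Q_0$ is roughly $(1-w_b)\sigma_a^2+w_b\sigma_b^2$, where $w_b$ is the mass $Q_0$ places on $\hat{\mathcal{D}}_b$. This is smaller than the variance under uniform sampling, so the accumulated noise term $\tfrac{L}{2}\sum_{t\le t'}\eta_t^2\sigma_t^2$ in the descent inequality is smaller, giving a lower expected $F$ and a smaller spread at $\theta_{t'}$.

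Translating a better $F_0$-value at $\theta_{t'}$ into a better starting point for $F_\infty$ requires an extra assumption. One option is that $F_0$ and $F_\infty$ share a basin. Another is to use the bound $|F_0-F_\infty|\le 2B\,\mathrm{TV}(Q_0,Q_\infty)$ for losses bounded by $B$. I would state the stability claim conditionally on such an assumption rather than attempt a full proof.
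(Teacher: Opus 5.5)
Your argument is correct in substance, but it takes a different route from the paper.

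\textbf{How the paper argues.} The paper treats the curriculum as a tracking problem. It uses a linear-ramp pacing function $\eta(t)$, with new-domain weights $\eta(t)\exp(-\mathcal{C}_{\text{dom}}(\mathbf{x}))$. It splits $\|\theta_T-\theta^*\|$ into an optimization error $\|\theta_T-\theta_T^*\|$ and a bias $\|\theta_T^*-\theta^*\|\le L_\theta(1-\eta(T))$. It then bounds the optimization error by assuming $\mu$-strong convexity (or PL) of $\mathcal{L}_t$, which gives an $O(1/T)$-type rate to a minimizer. For the stability claim, it cites a basin-of-attraction volume inequality from the curriculum literature rather than deriving one.

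\textbf{How you argue.} You observe that the schedule in Eq.~\ref{eq:sampling_prob} is a step function. The warmup is therefore a finite transient whose only role is to set $\theta_{t'}$. You then apply nonconvex smooth SGD theory directly to $F_\infty$. Your TV perturbation bound with the condition $\sum_t\eta_t\,\mathrm{TV}(Q_t,Q_\infty)<\infty$ also covers smoother schedules, including the paper's ramp.

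\textbf{What each route buys.} The paper's route gives a quantitative rate and an explicit pacing-dependent bias term. However, it needs convexity-type assumptions that deep networks do not satisfy. It also measures convergence against the minimizer of the \emph{uniform} objective, yet in general neither Eq.~\ref{eq:sampling_prob} nor the $\exp(-\mathcal{C}_{\text{dom}})$ weights tend to uniform sampling. Your route needs only smoothness, lower-boundedness and bounded variance. It lands exactly on what the statement asserts: stationarity for the \emph{limiting} objective $F_\infty=\mathbb{E}_{Q_\infty}[\mathcal{L}]$. The price is that you get no rate to a minimizer. Your variance-reduction reading of the second claim is more concrete than the paper's. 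Note, though, that it compares descent-inequality upper bounds for two different objectives, not actual values, so it stays conditional, as you say.

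\textbf{Two small repairs.}
\begin{enumerate}
\item The descent inequality should carry the factor $-\eta_t\bigl(1-\tfrac{L\eta_t}{2}\bigr)\,\mathbb{E}\|\nabla F_\infty(\theta_t)\|^2$. This is harmless once $\eta_t\le 1/L$.
\item Robbins--Siegmund alone gives almost-sure convergence of $F_\infty(\theta_t)$ and $\sum_t\eta_t\|\nabla F_\infty(\theta_t)\|^2<\infty$. That yields only $\liminf_t\|\nabla F_\infty(\theta_t)\|=0$. Upgrading to $\nabla F_\infty(\theta_t)\to 0$ requires the additional Lipschitz-gradient argument of Bertsekas and Tsitsiklis for gradient methods with errors. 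Convergence of the iterates themselves to a single stationary point would need still more, for example bounded iterates and isolated critical points.
\end{enumerate}
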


\begin{proof}
We analyze the convergence properties of our curriculum learning schedule using the framework of stochastic approximation with time-varying distributions~\cite{kushner2003stochastic,benveniste2012adaptive}. Let $\theta \in \mathbb{R}^p$ denote the model parameters. At training iteration $t$, our curriculum reweights the data distribution to focus initially on labelled-domain-like samples before gradually incorporating new-domain-like samples. The curriculum-weighted objective is
\begin{equation}
\mathcal{L}_t(\theta) = \mathbb{E}_{\mathbf{x} \sim P_w(\cdot; t)}[\ell(\mathbf{x}, \theta)]
\end{equation}
where $P_w(\mathbf{x}; t) \propto w(\mathbf{x}, t) P(\mathbf{x})$ denotes the curriculum-weighted distribution with sample weights $w(\mathbf{x}, t) = 1$ for labelled-domain-like samples and $w(\mathbf{x}, t) = \eta(t) \cdot \exp(-\mathcal{C}_{\text{dom}}(\mathbf{x}))$ for new-domain-like samples, governed by the pacing function $\eta(t)$.

Stochastic gradient descent with learning rate schedule $\{\alpha_t\}$ performs the update $\theta_{t+1} = \theta_t - \alpha_t \nabla \ell(\mathbf{x}_t, \theta_t)$ where $\mathbf{x}_t \sim P_w(\cdot; t)$. Convergence theory requires the Robbins-Monro conditions~\cite{robbins1951stochastic}:
\begin{equation}
\sum_{t=1}^\infty \alpha_t = \infty \quad \text{and} \quad \sum_{t=1}^\infty \alpha_t^2 < \infty
\end{equation}

Our implementation uses $\alpha_t = \alpha_0/(1 + \gamma t)$ which satisfies both conditions. Classical stochastic approximation theory~\cite{kushner2003stochastic} establishes convergence to stationary points of time-invariant objectives. However, our curriculum introduces time-varying weights, requiring analysis of tracking behavior.

Let $\theta^*_t = \arg\min_\theta \mathcal{L}_t(\theta)$ denote the instantaneous minimizer at time $t$, and $\theta^* = \arg\min_\theta \mathcal{L}(\theta)$ the minimizer of the final uniform objective $\mathcal{L}(\theta) = \mathbb{E}_{P}[\ell(\mathbf{x}, \theta)]$. The tracking error decomposes as
\begin{equation}
\|\theta_T - \theta^*\| \leq \|\theta_T - \theta^*_T\| + \|\theta^*_T - \theta^*\|
\end{equation}
where the first term captures optimization error and the second term captures bias from time-varying objectives.

For the pacing function
\begin{equation}
\eta(t) = \begin{cases}
0 & t < T_{\text{warmup}} \\
\frac{t - T_{\text{warmup}}}{T_{\text{ramp}}} & T_{\text{warmup}} \leq t < T_{\text{warmup}} + T_{\text{ramp}} \\
1 & t \geq T_{\text{warmup}} + T_{\text{ramp}}
\end{cases}
\end{equation}
the bias term satisfies $\|\theta^*_t - \theta^*\| \leq L_\theta(1 - \eta(t))$ for some Lipschitz constant $L_\theta$ depending on the objective's sensitivity to distribution shift~\cite{hazan2016introduction}. By construction, $\eta(t) \rightarrow 1$ as $t \rightarrow \infty$, ensuring the bias vanishes asymptotically.

Under standard regularity conditions~\cite{bottou2018optimization}—$\mu$-strong convexity of $\mathcal{L}_t$ (or weaker Polyak-Łojasiewicz condition) and bounded gradient variance $\mathbb{E}[\|\nabla \ell(\mathbf{x}, \theta) - \nabla \mathcal{L}_t(\theta)\|^2] \leq \sigma^2$—the optimization error satisfies
\begin{equation}
\begin{aligned}
\mathbb{E}[\|\theta_T - \theta^*_T\|^2]
&\leq \frac{\sigma^2}{\mu T}\sum_{t=1}^T \alpha_t^2
= O\left(\frac{\sigma^2}{\mu T}\right)
\end{aligned}
\end{equation}
for $\alpha_t = O(1/t)$. Combined with the bias analysis, we obtain
\begin{equation}
\begin{aligned}
\mathbb{E}[\|\theta_T - \theta^*\|^2]
&\leq O\left(\frac{\sigma^2}{\mu T}\right) + L_\theta^2(1 - \eta(T))^2
= O\left(\frac{1}{T}\right)
\end{aligned}
\end{equation}
since $\eta(T) = 1$ for $T \geq T_{\text{warmup}} + T_{\text{ramp}}$.

Beyond convergence rates, curriculum learning provides qualitative benefits for non-convex objectives by shaping the optimization landscape. Hacohen and Weinshall~\cite{hacohen2019curriculum} and Spigler et al.~\cite{spigler2020asymptotic} established that training on progressively harder examples increases the volume of the basin of attraction around good local minima. In our setting, initial training on labelled-domain samples—which have strong supervision—learns robust semantic features uncontaminated by domain confusion. This provides a better initialization for subsequent learning on new domains compared to uniform sampling, which can cause the model to latch onto spurious domain-specific correlations early in training.

Formally, let $\mathcal{B}(\theta_0, r) = \{\theta : \mathcal{L}(\theta) < \mathcal{L}(\theta_0) - r\}$ denote the $r$-sub-level set around initialization $\theta_0$. Curriculum learning produces an initialization $\theta_0^{\text{curriculum}}$ after warmup that satisfies
\begin{equation}
\text{Vol}(\mathcal{B}(\theta_0^{\text{curriculum}}, r)) \geq \text{Vol}(\mathcal{B}(\theta_0^{\text{uniform}}, r))
\end{equation}
with strict inequality when labelled and new domains differ significantly, as established by the dynamical stability analysis of Saxe et al.~\cite{saxe2013exact}. This geometric property makes curriculum SGD more likely to converge to global or high-quality local minima.

Synthesizing these results, curriculum learning achieves the convergence rate
\begin{equation}
\begin{aligned}
\mathcal{L}(\theta_T) - \mathcal{L}(\theta^*)
&\leq \frac{\sigma^2}{\mu T}\sum_{t=1}^T \alpha_t + L_\theta(1 - \eta(T))
= O\left(\frac{1}{T}\right)
\end{aligned}
\end{equation}
matching the rate of standard SGD while providing superior constant factors and better local minima due to curriculum initialization. For our linear ramp schedule with $T_{\text{warmup}} = 50$ and $T_{\text{ramp}} = 50$, the curriculum bias becomes negligible after 100 epochs while the initial warmup phase establishes robust semantic representations that facilitate subsequent domain adaptation.
\end{proof}

Both HLPrompt and VLPrompt employ intra-modal stability regularization, which encourages consistent predictions across multiple augmented views of the same input.
The following lemma formalizes how this regularization mechanism translates to domain robustness: when the augmentation set is rich enough to mimic typical domain variations (e.g., noise, color shifts, blur), enforcing prediction stability across augmentations provably reduces the total-variation gap between prediction distributions under different domains.
The $\sqrt{\epsilon}$ scaling, arising from Pinsker's inequality, shows that tighter stability constraints yield progressively smaller domain gaps.

\begin{lemma}[Domain gap reduction via intra-modal consistency]
\label{lem:consistency}
Let $\mathcal{Q}_\omega$ denote the induced prediction distribution under domain $\omega$.
If the intra-modal stability regularizer enforces $\mathcal{R}_{\text{intra}}(\mathbf{x}) \le \epsilon$ uniformly over inputs and the augmentation set covers domain-mimicking transformations, then the induced domain gap satisfies $\text{TV}(\mathcal{Q}_{\omega_1}, \mathcal{Q}_{\omega_2}) = O(\sqrt{\epsilon})$.
\end{lemma}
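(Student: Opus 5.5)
The plan is to formalize the two hypotheses and then reproduce the Pinsker-plus-triangle-inequality pattern from the proof of Theorem~\ref{thm:mi_bound}. I take $\mathcal{R}_{\text{intra}}(\mathbf{x})$ to be a KL divergence between the prediction distribution $f(\cdot\mid\mathbf{x})$ on $\mathbf{x}$ and $f(\cdot\mid a(\mathbf{x}))$ on an augmented view $a\in\mathcal{A}$. Taking the supremum, or an average the bound controls, over $a$ gives $\sup_{a\in\mathcal{A}} D_{KL}\bigl(f(\cdot\mid\mathbf{x})\,\|\,f(\cdot\mid a(\mathbf{x}))\bigr)\le\epsilon$.

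I read ``covers domain-mimicking transformations'' as a coupling assumption. For $\omega_1,\omega_2$ there is a common reference (canonical) distribution $P_0$ and measurable maps $a_1,a_2\in\mathcal{A}$ such that $\mathbf{x}\sim P(\mathbf{x}\mid\omega_i)$ has the same law as $a_i(\mathbf{x}_0)$ with $\mathbf{x}_0\sim P_0$. Here I use Assumption~\ref{assume:independence} so that the class marginal is the same across domains. The induced distribution is $\mathcal{Q}_{\omega}=\mathbb{E}_{\mathbf{x}\sim P(\cdot\mid\omega)}f(\cdot\mid\mathbf{x})$, the mixture of per-input predictions.

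\textbf{Step 1 (pointwise).} Pinsker gives $\mathrm{TV}\bigl(f(\cdot\mid\mathbf{x}_0),f(\cdot\mid a_i(\mathbf{x}_0))\bigr)\le\sqrt{\epsilon/2}$ for every $\mathbf{x}_0$ and $i=1,2$.

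\textbf{Step 2 (averaging).} TV is jointly convex, so the TV between the mixtures $\mathcal{Q}_{\omega_i}=\mathbb{E}_{\mathbf{x}_0\sim P_0}f(\cdot\mid a_i(\mathbf{x}_0))$ and $\mathcal{Q}_0=\mathbb{E}_{P_0}f(\cdot\mid\mathbf{x}_0)$ is at most the average of the pointwise TVs. This gives $\mathrm{TV}(\mathcal{Q}_{\omega_i},\mathcal{Q}_0)\le\sqrt{\epsilon/2}$.

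\textbf{Step 3 (triangle inequality).} Through $\mathcal{Q}_0$ we get $\mathrm{TV}(\mathcal{Q}_{\omega_1},\mathcal{Q}_{\omega_2})\le\sqrt{2\epsilon}=O(\sqrt{\epsilon})$.

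If the joint (prediction, label) distribution is intended instead, the same argument applies class-conditionally on $y$ and then averages with the shared $P(y)$.

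\textbf{Main obstacle.} The hard part is the modelling step: turning the informal ``covers domain-mimicking transformations'' into a usable hypothesis. Exact pushforward equality may be too strong. My first attempt would be an approximate version, $\mathrm{TV}(P(\cdot\mid\omega_i),(a_i)_\#P_0)\le\kappa$, which yields $\mathrm{TV}(\mathcal{Q}_{\omega_1},\mathcal{Q}_{\omega_2})\le\sqrt{2\epsilon}+2\kappa$ because TV contracts under the Markov kernel $f$. The stated $O(\sqrt{\epsilon})$ then holds when $\kappa=0$ or $\kappa=O(\sqrt{\epsilon})$.

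A second subtlety arises if $\mathcal{R}_{\text{intra}}$ is a symmetric KL or a cross-entropy between views rather than a single KL. In that case I would note that each of these dominates one directed KL, so Pinsker still applies up to constants.
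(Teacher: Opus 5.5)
Your argument is correct under your formalization, but it follows a different route from the paper's. The paper bounds $\mathrm{TV}(\mathcal{Q}_{\omega_1},\mathcal{Q}_{\omega_2})$ by $\mathbb{E}\,\frac{1}{2}\|P(\mathbf{x}_1)-P(\mathbf{x}_2)\|_1$ over independent draws $\mathbf{x}_1\sim P_{\omega_1}$, $\mathbf{x}_2\sim P_{\omega_2}$. It bridges each pair through an augmented view $A(\mathbf{x}_1)$, controls the remainder by a Lipschitz term $L\,d_{\mathcal{X}}(A(\mathbf{x}_1),\mathbf{x}_2)$, and invokes a $C$-covering to reach $\sqrt{2\epsilon}+LC$.

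You instead couple the two domains through a shared reference sample $\mathbf{x}_0$ pushed forward by $a_1,a_2$. The consistency bound then applies exactly pointwise, and you need neither an input metric nor a Lipschitz constant. This gives a clean $\sqrt{2\epsilon}$. Your approximate version $\sqrt{2\epsilon}+2\kappa$ (via contraction of TV under the kernel $\mathbf{x}\mapsto f(\cdot\mid\mathbf{x})$) also states explicitly that the $O(\sqrt{\epsilon})$ rate requires the coverage error to be $0$ or $O(\sqrt{\epsilon})$. The paper glosses over exactly this point when it ``absorbs'' the additive $LC$, which does not shrink with $\epsilon$, into $O(\sqrt{\epsilon})$.

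Your distribution-level matching is also much weaker than the paper's requirement that, for every pair, some augmentation carry $\mathbf{x}_1$ close to an unrelated $\mathbf{x}_2$. In return, the paper phrases coverage in input space, which is closer to how augmentations are actually designed.

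One adjustment is needed to match the paper's regularizer, $\mathcal{R}_{\text{intra}}(\mathbf{x})=\frac{1}{M}\sum_{m}D_{KL}(P^{(m)}(\mathbf{x})\|\bar{P}(\mathbf{x}))$, which is none of the forms you list. An average bound only yields $D_{KL}(P^{(m)}\|\bar{P})\le M\epsilon$ per view; the paper's proof silently uses $\le\epsilon$ here. So take $a_1,a_2$ among the $M$ views and compare $f(\cdot\mid a_1(\mathbf{x}_0))$ with $f(\cdot\mid a_2(\mathbf{x}_0))$ through $\bar{P}(\mathbf{x}_0)$, rather than through $f(\cdot\mid\mathbf{x}_0)$. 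This gives $\sqrt{2M\epsilon}$, still $O(\sqrt{\epsilon})$ for fixed $M$.

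Finally, Assumption~\ref{assume:independence} plays no role in your main argument, since the pushforward identity already fixes the input marginals.
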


\subsubsection{Proof of Lemma~\ref{lem:consistency} (Domain Gap Reduction)}
\label{app:proof_consistency}

\begin{proof}
We establish that intra-modal prediction stability regularization across domain-mimicking augmentations provably reduces the distributional gap between prediction distributions induced by different domains. Let $P^{(m)}(\mathbf{x}) = (P_1^{(m)}(\mathbf{x}), \ldots, P_K^{(m)}(\mathbf{x}))$ denote the categorical prediction distribution for the $m$-th augmented view of input $\mathbf{x}$, where $P_k^{(m)}(\mathbf{x}) = \mathbb{P}[\text{class } k | \tilde{\mathbf{x}}_m]$ for $\tilde{\mathbf{x}}_m = A_m(\mathbf{x})$ under augmentation $A_m$.

Our intra-modal stability regularization enforces consistency across $M$ augmented views through the objective
\begin{equation}
\mathcal{R}_{\text{intra}}(\mathbf{x}) = \frac{1}{M} \sum_{m=1}^M D_{KL}(P^{(m)}(\mathbf{x}) \| \bar{P}(\mathbf{x}))
\end{equation}
where $\bar{P}(\mathbf{x}) = \frac{1}{M} \sum_{m=1}^M P^{(m)}(\mathbf{x})$ represents the average prediction. When this regularization is bounded as $\mathcal{R}_{\text{intra}}(\mathbf{x}) \leq \epsilon$ uniformly over all inputs $\mathbf{x}$, we can bound the prediction variance using Pinsker's inequality~\cite{cover2006elements}. For each view $m$:
\begin{equation}
\begin{aligned}
\|P^{(m)}(\mathbf{x}) - \bar{P}(\mathbf{x})\|_1
&\leq \sqrt{2 D_{KL}(P^{(m)}(\mathbf{x}) \| \bar{P}(\mathbf{x}))}
\leq \sqrt{2\epsilon}
\end{aligned}
\end{equation}

This establishes that predictions remain within an $O(\sqrt{\epsilon})$ neighborhood in total variation distance across all augmented views.

The critical connection to domain robustness arises from our choice of augmentation set $\mathcal{A} = \{A_1, \ldots, A_M\}$, which includes domain-mimicking transformations designed to simulate the types of distributional shifts encountered across domains. Following the data augmentation theory of Chen et al.~\cite{chen2020simple} and Shorten and Khoshgoftaar~\cite{shorten2019survey}, we construct $\mathcal{A}$ to span typical domain variations, including noise corruptions (Gaussian and salt-and-pepper) to simulate sensor variations~\cite{foi2008practical}, color jittering and brightness adjustments to mimic lighting changes~\cite{jackson2019style}, Gaussian and motion blur to approximate optical differences and motion artifacts~\cite{hendrycks2019benchmarking}, and style transfer operations to simulate artistic rendering and sketch-like domains~\cite{gatys2016image}.

The augmentation coverage principle of Cubuk et al.~\cite{cubuk2019autoaugment} and Hendrycks et al.~\cite{hendrycks2019augmix} establishes that when augmentations comprehensively span domain variations, stability across augmentations implies robustness to domain shifts.

To formalize this connection, define the induced prediction distribution for domain $\omega$ as
\begin{equation}
\mathcal{Q}_\omega(k) = \mathbb{E}_{\mathbf{x} \sim P_\omega}[P_k(\mathbf{x})]
\end{equation}
representing the marginal probability of predicting class $k$ when inputs are drawn from domain $\omega$. For two domains $\omega_1$ and $\omega_2$, the total variation distance between their induced distributions satisfies
\begin{equation}
\begin{aligned}
\text{TV}(\mathcal{Q}_{\omega_1}, \mathcal{Q}_{\omega_2})
&= \frac{1}{2}\sum_{k=1}^K \left|\mathcal{Q}_{\omega_1}(k) - \mathcal{Q}_{\omega_2}(k)\right| \\
&= \frac{1}{2}\left\|\mathcal{Q}_{\omega_1} - \mathcal{Q}_{\omega_2}\right\|_1
\end{aligned}
\end{equation}

By Jensen's inequality and the triangle inequality for total variation~\cite{tsybakov2009introduction}:
\begin{equation}
\begin{aligned}
\text{TV}(\mathcal{Q}_{\omega_1}, \mathcal{Q}_{\omega_2})
&\leq \mathbb{E}_{\mathbf{x}_1 \sim P_{\omega_1}, \mathbf{x}_2 \sim P_{\omega_2}}
\left[\frac{1}{2}\|P(\mathbf{x}_1) - P(\mathbf{x}_2)\|_1\right]
\end{aligned}
\end{equation}

The augmentation coverage assumption posits that for any pair $(\mathbf{x}_1, \mathbf{x}_2)$ from different domains, there exists an augmentation $A \in \mathcal{A}$ such that $A(\mathbf{x}_1)$ is distributionally similar to $\mathbf{x}_2$ in terms of domain characteristics. This is the key bridging assumption that connects augmentation stability to domain robustness, validated empirically by studies on domain randomization~\cite{tobin2017domain} and augmentation-based domain adaptation~\cite{volpi2018generalizing}.

Under this assumption, we can decompose the prediction difference using triangle inequality:
\begin{align}
\|P(\mathbf{x}_1) - P(\mathbf{x}_2)\|_1
&\leq \|P(\mathbf{x}_1) - P(A(\mathbf{x}_1))\|_1 \\
&\quad + \|P(A(\mathbf{x}_1)) - P(\mathbf{x}_2)\|_1 \\
&\leq \sqrt{2\epsilon} + L \cdot d_{\mathcal{X}}(A(\mathbf{x}_1), \mathbf{x}_2)
\end{align}
where $L$ denotes the Lipschitz constant of the prediction function with respect to input distance $d_{\mathcal{X}}$~\cite{cisse2017parseval,gouk2021regularisation}. The first term is bounded by stability regularization, while the second term depends on how well augmentations approximate domain transformations.

For practical augmentation sets that achieve $C$-covering of typical domain variations—meaning that for any domain pair, there exists an augmentation sequence approximating the transformation within distance $C$—we obtain
\begin{equation}
\begin{aligned}
\text{TV}(\mathcal{Q}_{\omega_1}, \mathcal{Q}_{\omega_2})
&\leq \sqrt{2\epsilon} + LC
= O(\sqrt{\epsilon})
\end{aligned}
\end{equation}
when the covering constant $C$ is absorbed into the notation.

The constant hidden in the $O(\cdot)$ notation depends on three factors analyzed by Wang et al.~\cite{wang2022maxup}: (1) the quality of domain-mimicking augmentations measured by coverage radius $C$, (2) the Lipschitz constant $L$ of the predictor governing sensitivity to input perturbations, and (3) the distributional overlap between augmentation-generated examples and true new-domain samples. Modern augmentation strategies like AugMax~\cite{wang2021augmax} and RandAugment~\cite{cubuk2020randaugment} are explicitly designed to maximize this coverage while maintaining semantic preservation.

In summary, enforcing tighter stability constraints (smaller $\epsilon$) through $\mathcal{R}_{\text{intra}}$ directly reduces the domain gap $\text{TV}(\mathcal{Q}_{\omega_1}, \mathcal{Q}_{\omega_2})$ with square root dependence, providing theoretical justification for our multi-view consistency regularization as a mechanism for achieving domain-robust predictions. The $\sqrt{\epsilon}$ scaling is optimal under Pinsker's inequality and matches fundamental information-theoretic limits for distribution testing from samples~\cite{canonne2020survey}.
\end{proof}

\subsubsection{NCut-Based SPT Details}
\label{app:ncut_details}
Semantic-aware SPT injects learnable prompts selectively into foreground patches, guided by a foreground mask.
We describe the Normalized Cut (NCut)~\cite{shi2000normalized} procedure used to obtain this mask from intermediate ViT features, which avoids requiring any segmentation annotations.

Foreground Segmentation via Normalized Cut: Given an input image $\mathbf{x}$, we construct an affinity graph $\mathcal{G} = (\mathcal{V}, \mathcal{E})$ where nodes $\mathcal{V}$ correspond to image patches and edges $\mathcal{E}$ encode pairwise similarities.
Specifically, for a ViT architecture that divides the image into $P$ patches, we extract intermediate feature representations $\{\mathbf{f}_1, \mathbf{f}_2, \ldots, \mathbf{f}_P\}$ from an early transformer layer that captures both semantic and spatial information.

The affinity matrix $\mathbf{W} \in \mathbb{R}^{P \times P}$ is defined as:
\begin{equation}
W_{ij} = \exp\left(-\frac{\|\mathbf{f}_i - \mathbf{f}_j\|_2^2}{2\sigma^2}\right) \cdot \mathds{1}[\text{dist}(i, j) < r]
\end{equation}
where $\sigma$ controls the feature similarity bandwidth, $\text{dist}(i, j)$ denotes the spatial distance between patches $i$ and $j$, and $r$ is a locality radius that enforces spatial coherence in the segmentation.

Normalized Cut~\cite{shi2000normalized} partitions the graph into foreground $\mathcal{F}$ and background $\mathcal{B}$ by minimizing:
\begin{equation}
\begin{aligned}
\text{NCut}(\mathcal{F}, \mathcal{B})
&= \frac{\text{cut}(\mathcal{F}, \mathcal{B})}{\text{assoc}(\mathcal{F}, \mathcal{V})}
 + \frac{\text{cut}(\mathcal{F}, \mathcal{B})}{\text{assoc}(\mathcal{B}, \mathcal{V})}
\end{aligned}
\end{equation}
where $\text{cut}(\mathcal{F}, \mathcal{B}) = \sum_{i \in \mathcal{F}, j \in \mathcal{B}} W_{ij}$ measures the total edge weight between partitions, and $\text{assoc}(\mathcal{F}, \mathcal{V}) = \sum_{i \in \mathcal{F}, j \in \mathcal{V}} W_{ij}$ measures the total connection from $\mathcal{F}$ to all nodes.

This optimization can be efficiently solved through the generalized eigenvalue problem:
\begin{equation}
(\mathbf{D} - \mathbf{W})\mathbf{v} = \mu \mathbf{D}\mathbf{v}
\end{equation}
where $\mathbf{D} = \text{diag}(\sum_j W_{ij})$ is the degree matrix.
The second smallest eigenvector $\mathbf{v}_2$ provides the partition indicator, and we threshold it to obtain the foreground mask $\mathbf{M} \in \{0, 1\}^P$.

To disambiguate foreground from background (since NCut is symmetric), we leverage the observation that object regions typically exhibit higher feature variance and attention concentration.
We assign the partition with higher average attention score from the CLS token as foreground.

\subsubsection{Boundary-Based SPT Details}
\label{app:boundary_spt_details}

VLPrompt uses boundary-based spatial prompt tuning (Section~\ref{sec:vlprompt}). Let the input image be $\mathbf{x} \in \mathbb{R}^{3 \times H \times W}$, processed as a grid of $N_H \times N_W$ patches of size $S$. A learnable spatial prompt $\mathbf{Q}_s \in \mathbb{R}^{3 \times H \times W}$ is constrained by a spatially periodic binary mask $\mathbf{M}$ that preserves each patch center.

Let $p$ denote the prompt border width. The local patch mask $\mathbf{m} \in \{0, 1\}^{S \times S}$ is defined as:
\begin{equation}
    m_{u,v} = 
    \begin{cases} 
        0 & \text{if } p \le u < S-p \text{ and } p \le v < S-p \\
        1 & \text{otherwise}
    \end{cases}
\end{equation}
This is tiled spatially to form $\mathbf{M} = \text{Tile}(\mathbf{m}, N_H, N_W)$, and the prompted image is $\tilde{\mathbf{x}} = \mathbf{x} + (\mathbf{Q}_s \odot \mathbf{M})$.

\subsubsection{Computational Complexity Details}
\label{app:complexity}
To help practitioners assess the computational overhead introduced by HiLo's components (domain/semantic projection heads, discriminator, NCut mask estimation, and cross-modal matching), we provide detailed per-iteration complexity expressions for both HLPrompt and VLPrompt below.

\paragraph{HLPrompt.}
Let $B$ denote the total batch size, $L$ the number of transformer layers, $P$ the number of patches per image, and $D$ the embedding dimension. The per-iteration computation of HLPrompt is dominated by the ViT encoder and is:
\begin{equation}
\begin{aligned}
    \mathcal{O}\big( B L (P^2 D + P D^2) + B D^2 \big).
\end{aligned}
\end{equation}
The self-attention term scales as $\mathcal{O}(P^2D)$ and the FFN term scales as $\mathcal{O}(PD^2)$. The projection heads and discriminator are shallow MLPs on global features, contributing $\mathcal{O}(BD^2)$. NCut-based foreground estimation operates on a patch graph with $P$ nodes; in practice we use a sparse affinity (local neighborhood) and efficient eigensolvers, so its overhead is modest compared to the transformer forward/backward.

\paragraph{VLPrompt.}
Let $K$ be the number of categories. The per-iteration computation of VLPrompt is determined by the CLIP vision/text transformer passes and cross-modal matching:
\begin{equation}
\begin{aligned}
    \mathcal{O}\big( B L (P^2 D + P D^2) + K L D^2 + B K D \big).
\end{aligned}
\end{equation}
The text-branch term accounts for backpropagating through the (frozen-weight) text encoder to update the prompt embeddings. NCut-based mask estimation is computed over $P$ patch tokens and adds a small overhead relative to the transformer passes.

\end{document}